\documentclass{article}

\usepackage{arxiv}

\usepackage{graphicx}
\usepackage{booktabs}
\usepackage{microtype}
\usepackage{placeins}

\usepackage{amsmath}
\usepackage{amssymb}
\usepackage{mathtools}

\usepackage{amsthm}
\usepackage{xcolor}

\usepackage{hyperref}
\usepackage{tikz}
\usepackage{subcaption}
\usepackage[capitalize,noabbrev]{cleveref}
\usepackage{enumitem}

\theoremstyle{plain}
\newtheorem{theorem}{Theorem}[section]
\newtheorem{proposition}[theorem]{Proposition}
\newtheorem{lemma}[theorem]{Lemma}
\newtheorem{corollary}[theorem]{Corollary}

\theoremstyle{definition}
\newtheorem{definition}[theorem]{Definition}
\newtheorem{assumption}[theorem]{Assumption}

\theoremstyle{remark}

\newcommand{\bx}{\mathbf{x}}
\newcommand{\bz}{\mathbf{z}}
\newcommand{\by}{\mathbf{y}}
\newcommand{\bn}{\mathbf{n}}
\newcommand{\bw}{\mathbf{w}}
\newcommand{\bv}{\mathbf{v}}
\newcommand{\bs}{\mathbf{s}}
\newcommand{\bzero}{\mathbf{0}}
\newcommand{\bI}{\mathbf{I}}

\newcommand{\bA}{\mathbf{A}}
\newcommand{\bV}{\mathbf{V}}
\newcommand{\bU}{\mathbf{U}}
\DeclareMathOperator{\divr}{div}
\DeclareMathOperator{\reach}{reach}

\renewcommand{\and}{\hspace{0.8em}}
\hypersetup{
    pdftitle={Understanding Latent Diffusability via Fisher Geometry},
    pdfauthor={Jing Gu, Morteza Mardani, Wonjun Lee, Dongmian Zou, Gilad Lerman},
    colorlinks=true,
    linkcolor=blue,
    citecolor=blue,
    urlcolor=blue
}

\title{Understanding Latent Diffusability via Fisher Geometry}

\author{
Jing Gu\thanks{School of Mathematics, University of Minnesota}
\and
Morteza Mardani\thanks{NVIDIA}
\and
Wonjun Lee\thanks{Department of Mathematics, The Ohio State University}
\and
Dongmian Zou\thanks{Zu Chongzhi Center and DIRC, Duke Kunshan University}
\and
Gilad Lerman\footnotemark[1]
}

\date{}

\begin{document}

\maketitle

\begin{abstract}
Diffusion models often degrade in latent spaces, yet the formal causes remain poorly understood. We quantify latent-space diffusability via the rate of change of the Minimum Mean Squared Error (MMSE) along the diffusion trajectory. Our framework decomposes this MMSE rate into contributions from Fisher Information (FI) and Fisher Information Rate (FIR). We demonstrate that while global isometry ensures FI alignment, FIR is governed by the interplay between encoder and data geometries. Our analysis decouples diffusion degradation into four penalties: dimensional compression, tangential distortion, high-frequency encoder curvature, and intrinsic data curvature. We derive theoretical conditions for FIR preservation to ensure stable diffusability. Experiments across diverse autoencoding architectures demonstrate the implications of our theoretical bounds. We establish FI and FIR as a comprehensive analytical framework for understanding latent diffusability. 
\end{abstract}


\section{Introduction}\label{sec:introduction}

Diffusion models have revolutionized generative modeling by transforming noise into structured data through a learned reverse process \cite{ho2020denoising}. While these models excel in high-dimensional pixel spaces, computational constraints often necessitate their application within the compressed latent spaces of pre-trained autoencoders \cite{rombach2022high}. However, a persistent and poorly understood phenomenon remains: diffusion models often degrade or fail entirely when applied directly to the latent spaces of standard autoencoders, including Variational Autoencoders (VAEs) \cite{kingma2014auto, shi2025latent}. Skorokhodov et al.~\cite{skorokhodov2025diffusability} refer to this mismatch between diffusion dynamics in pixel space and latent space as the \emph{diffusability} problem. 

Current explanations for this failure typically point to the ``geometric distortion'' introduced by standard encoders, which warp the data manifold and create regions that diverge from the Gaussian prior assumed by the diffusion process \cite{arvanitidis2018latent, lobashev2025hessian}. In response, various empirical methods have emerged to mitigate this issue, such as geometry-preserving adaptations~\cite{lee2025gpe}, explicit target latent geometries \cite{cho2023hyperbolic,saez2023neural,yue2026image}, and modified training objectives \cite{kouzelis2025eq,yang2026latent,heek2026unified,baade2026latent} (see Appendix~\ref{sec:related_works} for a comprehensive discussion of these methods). 

Despite these empirical advances, the underlying causes of latent diffusion failure remain theoretically obscured. Crucially, the literature lacks a formal, quantitative framework for measuring how ``diffusible'' a latent space is \emph{before} committing to the costly training of a full diffusion model. 

We demystify latent-space diffusability through the rigorous lens of Fisher geometry \cite{amari2016information}, complementing recent information-geometric perspectives on diffusion trajectories \cite{karczewski2025spacetimediffusionmodelsinformation}. We analyze diffusability via \emph{denoising complexity}, quantified by the rate of change of the Minimum Mean Squared Error (MMSE) along the continuous diffusion trajectory. We show that this denoising complexity fundamentally decomposes into two terms: the Fisher Information (FI) and its rate of dissipation, which we define as the Fisher Information Rate (FIR). We establish theoretical bounds that link the FI and FIR metrics directly to the encoder's local geometric properties and the data's intrinsic topology. This enables the rigorous evaluation of latent spaces prior to diffusion training.

Our main contributions are as follows:
\begin{enumerate}[leftmargin=*, nosep]
\item \textbf{Theoretical Framework:} We formalize the relationship between latent diffusability, denoising complexity (MMSE), and Fisher Information Geometry.
\item \textbf{Geometric Bounds:} We derive the explicit local curvature and near-isometry conditions necessary to stabilize FI (\S\ref{sec:FI}) and FIR (\S\ref{sec:FIR}) on curved data manifolds. Our bounds decouple latent diffusion degradation into four measurable geometric forces: dimensional compression, tangential distortion, high-frequency encoder curvature, and the intrinsic curvature of the data itself.
\item \textbf{Empirical Validation:} Extensive experiments on toy and FFHQ datasets validate our theoretical bounds, demonstrating that empirical FI and FIR strictly reflect latent diffusion performance.
\end{enumerate}

\section{Background}\label{sec:background}

\textbf{Score-based Diffusion Models.} 
Continuous-time score-based models typically transform a data distribution into a noise prior through a forward diffusion process. Let $\mu$ be a Borel probability measure on $\mathbb{R}^D$ concentrated on a low-dimensional manifold $\mathcal{M} \subset \mathbb{R}^D$. We consider a diffusion process parameterized by the noise variance $\tau := \sigma^2(t)$, where a clean sample $\bx_0 \sim \mu$ is corrupted by standard Gaussian noise $\bn \sim \mathcal{N}(\bzero, \bI_D)$, yielding the observation $\bx = \bx_0 + \sqrt{\tau}\bn$ \cite{song2020score}. 
The marginal density of these noisy observations, known as the heat-smoothed density $p_\tau(\bx)$, and the corresponding probability flow ODE evolve as:
\begin{equation}
\label{eq:heat_and_ode}
\frac{\partial p_\tau}{\partial \tau} = \frac{1}{2} \Delta_{\bx} p_\tau \quad \text{and} \quad d\bx = -\frac{1}{2} \nabla_{\bx} \log p_\tau(\bx) d\tau.
\end{equation}
Generating samples requires a neural approximation of the \emph{score function} 
\begin{equation}
\label{eq:score_def}
\bs_\tau(\bx) = \nabla_{\bx} \log p_\tau(\bx).    
\end{equation}
 Manifold geometry complicates learning $\bs_\tau$ via high-frequency variations in normal directions.

\textbf{Latent Diffusion and the Geometry Gap.} 
To reduce computational costs, diffusion often operates within the compressed latent space $\mathcal{Z}$ of a pre-trained autoencoder \cite{rombach2022high}. However, standard VAEs regularize $\mathcal{Z}$ toward a Gaussian prior using Kullback-Leibler divergence, often inducing ``manifold crinkling'' \cite{arvanitidis2018latent}. This phenomenon forces distant points on the data manifold $\mathcal{X}$ into adjacent latent coordinates to satisfy the prior, causing Euclidean distances in $\mathcal{Z}$ to misrepresent the intrinsic data geometry. This disparity creates a fundamental ``geometry gap.'' Recent architectures mitigate these distortions by regularizing latent structure: Geometry Preserving Encoder/Decoder (GPE) aligns latent and data-space Euclidean geometries \cite{lee2025gpe}, while spherical and manifold autoencoders employ non-Euclidean latent spaces to capture complex topologies \cite{saez2023neural, yue2026image}.

\textbf{The Diffusability Issue.} 
A distribution's \emph{diffusability}---its ease of modeling via diffusion---depends on the score function's smoothness and learnability. Structural irregularities or low-density regions cause poorly calibrated gradients, degrading sample fidelity.

\textbf{Spectral Artifacts in Latent Spaces.} Compressed latent spaces often exacerbate these irregularities. Spectral analyses show autoencoders introduce high-frequency components or singularities, impeding coarse-to-fine synthesis \cite{skorokhodov2025diffusability}. Even with smoother encodings \cite{lee2025enhancing, lee2025latent}, diffusion remains unstable if the encoded manifold in $\mathcal{Z}$ lacks structural regularity.

\textbf{Limitations of Metric Regularization.} Recent techniques constrain the encoder Jacobian to enforce local isometry \cite{chen2020learning, palma2025enforcing}. While empirically addressing the geometry gap, they lack a formal information-geometric analysis of diffusability. To address this gap, we leverage Fisher geometry to establish a rigorous link between encoder geometry and denoising complexity, explaining why certain latent spaces remain difficult to diffuse despite geometric-preservation properties.

\textbf{Information-Geometric Formalism.} 
We unify the analysis of both data and latent spaces by considering a generic Borel probability measure $\mu$ on $\mathbb{R}^k$. This encompasses both the pixel-space setting (where $k=D$ and $\mu$ is supported on $\mathcal{M}$) and the latent-space setting (where $k=d$ and $\mu$ is the pushforward measure under the encoder $E$) described in \S\ref{sec:background}. In both settings, for a noise variance $\tau > 0$, the heat-smoothed measure $\mu_\tau := \mu * \mathcal{N}(\bzero, \tau \bI_k)$ admits a probability density $p_\tau(\bx)$ with score $\bs_\tau(\bx)$ defined in \eqref{eq:score_def}. Whenever $\mu$, $\tau$, and $k$ are clear from context, we may denote our metrics without the superscript $(k)$ and the measure $\mu_{\tau}$. 

The \emph{Fisher Information} (FI) of $\mu_\tau$ is its square-averaged score and the \emph{Fisher Information Rate} (FIR) is the dissipation rate of FI. That is, 
\begin{equation}
\label{eq:FI_FIR_def}
\mathcal{I}^{(k)} \equiv \mathcal{I}^{(k)}(\mu_\tau) := \mathbb{E}_{p_\tau} \left[ \|\bs_\tau(\bx)\|^2 \right] \quad \text{and} \quad
\mathcal{R} \equiv \mathcal{R}^{(k)}(\mu_\tau) := -\frac{d}{d\tau} \mathcal{I}^{(k)}(\mu_\tau).
\end{equation}

The next proposition, proved in Appendix~\ref{sec:proof_FIR}, establishes a representation of FIR, which emerges in  information theory \cite{costa1985similarity}, diffusion semigroups \cite{bakry1985diffusions}, and optimal transport \cite{villani2009optimal}. Let $\mathbf{H}_\tau = \nabla^2_{\bx} \log p_\tau(\bx)$.

\begin{proposition}[Hessian representation of FIR]\label{prop:FIR_identity}
For every $\tau>0$, 
\begin{equation}\label{eq:FIR_identity_measure}
\mathcal{R}^{(k)}(\mu_\tau) = \mathbb{E}_{p_\tau} \left[ \|\nabla_{\bx} \bs_\tau(\bx)\|_F^2 \right] = \mathbb{E}_{p_\tau} \left[ \mathrm{Tr}(\mathbf{H}_\tau^2) \right].
\end{equation}
\end{proposition}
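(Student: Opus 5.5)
The plan is the classical de Bruijn/Bakry--\'Emery computation: differentiate $\mathcal{I}(\mu_\tau)=\int \|\nabla p_\tau\|^2/p_\tau\,d\bx$ in $\tau$, use the heat equation in \eqref{eq:heat_and_ode}, and integrate by parts until only $\mathrm{Tr}(\mathbf{H}_\tau^2)$ remains. The second equality in \eqref{eq:FIR_identity_measure} is immediate, since $\nabla_{\bx}\bs_\tau=\mathbf{H}_\tau$ is symmetric and $\|\mathbf{H}_\tau\|_F^2=\mathrm{Tr}(\mathbf{H}_\tau^2)$. So the work is in the first equality.

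\textbf{Step 1 (evolution of the log-density and the score).} Write $\ell=\log p_\tau$. From $\partial_\tau p_\tau=\tfrac12\Delta p_\tau$ we get
\[
\partial_\tau \ell=\tfrac12\left(\Delta\ell+\|\nabla\ell\|^2\right),
\qquad
\partial_\tau \bs_\tau=\tfrac12\nabla\Delta\ell+\mathbf{H}_\tau\bs_\tau .
\]

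\textbf{Step 2 (differentiate FI).} We have
\[
\frac{d}{d\tau}\int p_\tau\|\bs_\tau\|^2
=\int \tfrac12\Delta p_\tau\,\|\bs_\tau\|^2+\int p_\tau\left(\bs_\tau\cdot\nabla\Delta\ell+2\,\bs_\tau^\top\mathbf{H}_\tau\bs_\tau\right).
\]
Integrating the first term by parts twice gives $\tfrac12\int p_\tau\,\Delta\|\bs_\tau\|^2$. Then use Bochner's identity
\[
\tfrac12\Delta\|\nabla\ell\|^2=\|\mathbf{H}_\tau\|_F^2+\nabla\ell\cdot\nabla\Delta\ell .
\]
For the remaining terms, integrate by parts using $\nabla p_\tau=p_\tau\bs_\tau$. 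For example,
\[
\int p_\tau\,\bs_\tau^\top\mathbf{H}_\tau\bs_\tau=\tfrac12\int \nabla p_\tau\cdot\nabla\|\bs_\tau\|^2=-\tfrac12\int p_\tau\,\|\bs_\tau\|^2(\Delta\ell+\|\bs_\tau\|^2)\text{-type terms}.
\]
Similarly, $\int p_\tau\,\bs_\tau\cdot\nabla\Delta\ell=-\int p_\tau\left(\Delta\ell\right)^2-\int p_\tau\|\bs_\tau\|^2\Delta\ell$. Collecting terms, everything except $-\int p_\tau\|\mathbf{H}_\tau\|_F^2$ should cancel. A cleaner route to the same bookkeeping is to use $\mathcal{I}=-\int p_\tau\Delta\ell$ and the $\Gamma_2$ formalism, where $\frac{d}{d\tau}\int p\,\Gamma(\ell)=-\int p\,\Gamma_2(\ell)$ for the heat semigroup and $\Gamma_2(\ell)=\|\nabla^2\ell\|_F^2$ in flat space. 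Either way the output is $\mathcal{R}=\int p_\tau\|\mathbf{H}_\tau\|_F^2$.

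\textbf{Step 3 (justification; the main obstacle).} The algebra is routine. The real issue is rigor for an arbitrary Borel $\mu$, which may be singular (supported on $\mathcal{M}$) and need not have any moments. We need three things: smoothness of $p_\tau$, vanishing boundary terms in every integration by parts, and the exchange of $d/d\tau$ with $\int$. For $\tau>0$, $p_\tau=\mu*\varphi_\tau$ is smooth and strictly positive. Its derivatives are posterior moments: $\bs_\tau=(\mathbb{E}[\bx_0\mid\bx]-\bx)/\tau$ and $\mathbf{H}_\tau=\bigl(\mathrm{Cov}[\bx_0\mid\bx]/\tau-\bI\bigr)/\tau$, with analogous expressions for third derivatives. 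The first thing I would try is to bound these by posterior moments of $(\bx_0-\bx)/\sqrt\tau$. Since $\bx_0-\bx=-\sqrt\tau\,\bn$ has a Gaussian law unconditionally, every moment $\mathbb{E}\|\bs_\tau\|^m$, $\mathbb{E}\|\mathbf{H}_\tau\|^m$, etc.\ is finite by Jensen, uniformly for $\tau$ in compact subsets of $(0,\infty)$. This gives integrability of all integrands and dominated convergence for the $\tau$-derivative.

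For the boundary terms, I would integrate over balls $B_R$ and let $R\to\infty$ along a subsequence where the surface integrals vanish. This works because the surface integrands $p_\tau\times(\text{polynomial in derivatives of }\ell)$ are integrable over $\mathbb{R}^k$. If needed, an alternative is to first prove the identity for compactly supported $\mu$ and then pass to the limit.
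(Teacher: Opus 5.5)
Your proposal is correct and follows essentially the same route as the paper: differentiate $\mathcal{I}$ in $\tau$ via the heat equation, integrate by parts, and close with the integrated Bochner identity $\mathbb{E}_{p_\tau}\|\mathbf{H}_\tau\|_F^2=\mathbb{E}_{p_\tau}[A^2+\tfrac32AB+\tfrac12B^2]$ (with $A=\Delta\log p_\tau$, $B=\|\nabla\log p_\tau\|^2$), which is exactly the identity your ``collecting terms'' step relies on. Your Step 3 is more careful than the paper's justification: the paper simply asserts Gaussian decay of all derivatives of $p_\tau$, which holds for compactly supported $\mu$ but not for an arbitrary Borel measure, whereas your posterior-moment bounds and subsequence argument for the sphere integrals cover the general case.
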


\textbf{Computational Efficiency.}
While the FIR involves the Frobenius norm of the score Jacobian, it remains computationally tractable for high-dimensional neural networks. Using the Hutchinson trace estimator \cite{hutchinson1989stochastic}, we approximate the norm as $\mathcal{R}(\tau) = \mathbb{E}_{\bx, \mathbf{v}} [\|\nabla_{\bx} \bs_\tau(\bx) \mathbf{v}\|^2]$, where $\mathbf{v}$ is a random noise vector. The term $\nabla_{\bx} \bs_\tau(\bx) \mathbf{v}$ is evaluated as a Jacobian-vector product (JVP) in a single forward-mode pass, bypassing the need to explicitly compute or store the full Jacobian matrix (see Appendix~\ref{app: FI_estimation}).

\section{Analysis of Diffusability via Fisher Information}\label{sec:FI}

To formalize ``diffusability,'' we analyze the sensitivity of the global Minimum Mean Squared Error (MMSE) to $\tau$. 
Following the setting of \S\ref{sec:background}, where $k \in \{D, d\}$, 
the error in estimating the clean signal $\bx_0$ from its noisy observation $\bx$ at variance $\tau$ is defined as $\text{MMSE}^{(k)}(\mu_\tau) = \mathbb{E}[\|\bx_0 - \mathbb{E}[\bx_0|\bx]\|^2]$.

The I-MMSE identity \cite{Guo05_fisherL2}, proved in Appendix~\ref{sec:appendix_prove_4_5}, implies that the optimal denoising performance is fundamentally constrained by this information content:
\begin{equation}
\label{eq:MMSE_and_I}
\text{MMSE}^{(k)}(\mu_\tau) = \tau k - \tau^2\mathcal{I}^{(k)}(\mu_\tau).
\end{equation}

Differentiating \eqref{eq:MMSE_and_I} and applying the definition of FIR yields the decomposition:
\begin{equation}
\underbrace{\frac{d}{d\tau}\text{MMSE}^{(k)}(\mu_\tau)}_{\text{Denoising Resistance}} =
\underbrace{k - 2\tau \mathcal{I}^{(k)}(\mu_\tau)}_{\text{Intrinsic Noise Gain}} + \underbrace{\tau^2 \mathcal{R}^{(k)}(\mu_\tau).}_{\text{Geometric Complexity Penalty}}
\label{eq:denoising_resistance}
\end{equation}

This decomposition reveals that denoising difficulty is governed by two distinct factors: the \emph{Intrinsic Noise Gain}, which drives a baseline linear degradation, and the \emph{Geometric Complexity Penalty}, which causes errors to compound quadratically at higher noise scales. To ensure a latent space $\mathcal{Z}$ is as ``diffusable'' as the original data space $\mathcal{X}$, an encoder must preserve both factors. In \S\ref{sec:isometry_fisher_preservation}, we establish the first-order conditions required to preserve the Intrinsic Noise Gain. In \S\ref{sec:FIR}, we establish the second-order conditions required to bound the Geometric Complexity Penalty.

\subsection{Global Isometry and Fisher Preservation}
\label{sec:isometry_fisher_preservation}
To ensure the latent space $\mathcal{Z}$ inherits the denoising characteristics of the data space $\mathcal{X}$, the encoder $E: \mathcal{X} \to \mathcal{Z}$ must tightly bound the latent Fisher Information. This ensures the \emph{Intrinsic Noise Gain} in \eqref{eq:denoising_resistance} reflects genuine data complexity rather than artifacts of latent stretching or compression.

\begin{proposition}[Fisher Information Bounds for Bi-Lipschitz Encoders]
\label{prop:fisher_bi_lip}
Let $\tilde{\varepsilon} \geq 0$, $\mathcal{M} \subset \mathbb{R}^{D}$ be an $m$-dimensional smooth manifold and $E:\mathcal{M}\to\mathbb{R}^{d}$ be a smooth map onto its image $\mathcal{N} = E(\mathcal{M})$. Assume the Jacobian of $E$ is uniformly bounded in directions tangent to $\mathcal{M}$, that is, there exist constants $c,C>0$ such that 
$c\,\|\bv\| \le \|J_E(\bx)\,\bv\| \le C\,\|\bv\|$, $\forall \bx \in \mathcal{M}$  and tangent vectors  $\bv$.
Assume $m \le d \le D$. For small $\tau$, let $\pi: \mathbb{R}^D \to \mathcal{M}$ denote the orthogonal projection onto $\mathcal{M}$. Let $\mu_\tau^{\mathcal{M}} := \pi_{\#} \mu_\tau$ be the projected intrinsic noisy measure on $\mathcal{M}$, admitting a density $p_\tau^{\mathcal{M}}$ with respect to the manifold's volume measure. Let the intrinsic Manifold Fisher Information be $\mathcal{I}_{\mathcal{M}} := \mathbb{E}_{p_\tau^{\mathcal{M}}}[\|\nabla_{\mathcal{M}} \log p_\tau^{\mathcal{M}}(\bx)\|^2]$. 

If $\tilde{\varepsilon} := \sup_{\bx\in \mathcal{M}} \|\nabla_{\mathcal{M}} \log \sqrt{\det(J_E(\bx)^\top J_E(\bx))} \|$, then $\mathcal{I}_{\mathcal{M}}$ compares with the intrinsic FI of the latent encoded measure $\tilde{\mu}_\tau^{\mathcal{N}} := E_{\#} \mu_\tau^{\mathcal{M}}$ on $\mathcal{N}$, denoted $\mathcal{I}_{\mathcal{N}}$, such that for any $\delta >0$:
\begin{equation*}
\frac{1-\delta}{C^2} \mathcal{I}_{\mathcal{M}} -\frac{(\delta^{-1}-1)\tilde{\varepsilon}^2}{C^2}
    \le \mathcal{I}_{\mathcal{N}}
    \le \frac{1+\delta}{c^2} \mathcal{I}_{\mathcal{M}} + \frac{(\delta^{-1}+1)\tilde{\varepsilon}^2}{c^2}. 
\end{equation*}
Equivalence ($\mathcal{I}_{\mathcal{N}} = \mathcal{I}_{\mathcal{M}}$) occurs when $E$ is an isometric embedding ($c=C=1$, $\tilde{\varepsilon}=0$).
\end{proposition}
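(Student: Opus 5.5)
The plan is to use the change-of-variables formula for densities under the diffeomorphism $E:\mathcal{M}\to\mathcal{N}$. Since $c>0$, $J_E$ is injective on tangent spaces, so $E$ is an immersion; we treat it as a diffeomorphism onto its image. Write $J(\bx):=\sqrt{\det(J_E(\bx)^\top J_E(\bx))}$ for the volume distortion. Then the pushforward density on $\mathcal{N}$, with respect to its volume measure, is
\begin{equation*}
\tilde p^{\mathcal{N}}_\tau(E(\bx)) = \frac{p^{\mathcal{M}}_\tau(\bx)}{J(\bx)},
\qquad\text{so}\qquad
\log \tilde p^{\mathcal{N}}_\tau\circ E = \log p^{\mathcal{M}}_\tau - \log J .
\end{equation*}
Differentiating along $\mathcal{M}$ with the chain rule for Riemannian gradients gives
\begin{equation*}
dE_{\bx}^{\top}\,\nabla_{\mathcal{N}}\log\tilde p^{\mathcal{N}}_\tau(E(\bx)) = \nabla_{\mathcal{M}}\log p^{\mathcal{M}}_\tau(\bx) - \nabla_{\mathcal{M}}\log J(\bx) =: \bw(\bx).
\end{equation*}
Here $dE_{\bx}^\top$ is the adjoint $T_{E(\bx)}\mathcal{N}\to T_{\bx}\mathcal{M}$. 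Writing $\bu := \nabla_{\mathcal{N}}\log\tilde p^{\mathcal{N}}_\tau(E(\bx))$, we have $\bu\in T_{E(\bx)}\mathcal{N}=\mathrm{range}(dE_{\bx})$.

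The key step is to convert this relation into two-sided norm bounds on $\bu$. On the range of $dE_{\bx}$, the singular values of $dE_{\bx}^\top$ coincide with those of $dE_{\bx}$, which lie in $[c,C]$ by hypothesis. Hence
\begin{equation*}
c\|\bu\|\le\|\bw\|\le C\|\bu\|, \qquad\text{i.e.}\qquad \|\bw\|^2/C^2\le\|\bu\|^2\le\|\bw\|^2/c^2 .
\end{equation*}
Taking expectations under $\tilde\mu^{\mathcal{N}}_\tau$ is the same as taking expectations of $\bx\sim\mu^{\mathcal{M}}_\tau$, since $\tilde\mu^{\mathcal{N}}_\tau$ is the pushforward. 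This gives $\mathbb{E}\|\bw\|^2/C^2\le\mathcal{I}_{\mathcal{N}}\le\mathbb{E}\|\bw\|^2/c^2$.

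It remains to bound $\mathbb{E}\|\bw\|^2$ above and below, where $\bw=\mathbf{a}-\mathbf{b}$ with $\mathbf{a}=\nabla_{\mathcal{M}}\log p^{\mathcal{M}}_\tau$ and $\|\mathbf{b}\|\le\tilde\varepsilon$. Young's inequality $2|\langle \mathbf{a},\mathbf{b}\rangle|\le\delta\|\mathbf{a}\|^2+\delta^{-1}\|\mathbf{b}\|^2$ yields the pointwise bounds
\begin{equation*}
(1-\delta)\|\mathbf{a}\|^2+(1-\delta^{-1})\|\mathbf{b}\|^2\le\|\bw\|^2\le(1+\delta)\|\mathbf{a}\|^2+(1+\delta^{-1})\|\mathbf{b}\|^2 .
\end{equation*}
In the lower bound the coefficient $(1-\delta^{-1})$ may be negative, so we insert $\|\mathbf{b}\|^2\le\tilde\varepsilon^2$ to get $-(\delta^{-1}-1)\tilde\varepsilon^2$. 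We do the same in the upper bound. Taking expectations then gives exactly the stated inequalities, since $\mathbb{E}\|\mathbf{a}\|^2=\mathcal{I}_{\mathcal{M}}$. For the isometric case, $c=C=1$ and $\tilde\varepsilon=0$. Then $\bw=\mathbf{a}$ and $dE$ is an isometry of tangent spaces, so $\|\bu\|=\|\mathbf{a}\|$ pointwise and $\mathcal{I}_{\mathcal{N}}=\mathcal{I}_{\mathcal{M}}$ directly, without letting $\delta\to0$.

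The main obstacle is justifying the setup rather than the inequalities. For small $\tau$, $\pi$ is well defined on a tubular neighborhood of radius less than $\reach(\mathcal{M})$. The statement assumes $p^{\mathcal{M}}_\tau$ exists and is smooth and positive. We must also assume $E$ is injective, so that the pushforward density has the single-preimage form above; otherwise contributions from multiple preimages would have to be summed. I would state injectivity as part of "onto its image," and note that $\tilde\varepsilon<\infty$ guarantees $\mathbb{E}\|\mathbf{b}\|^2$ is finite. Finiteness of $\mathcal{I}_{\mathcal{M}}$ is needed only for the bounds to be meaningful. The linear-algebra fact about $dE^\top$ restricted to $\mathrm{range}(dE)$ is checked through the SVD of $J_E$ restricted to $T_{\bx}\mathcal{M}$.
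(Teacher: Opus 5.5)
Your argument is the paper's proof, written slightly more carefully. Both use the area formula for the pushforward density and the chain rule $\bs_\tau = J_E^\top\tilde{\bs}_\tau+\nabla_{\mathcal{M}}\log\sqrt{\det(J_E^\top J_E)}$. Both then bound the adjoint by singular values on $T_{\bz}\mathcal{N}=\mathrm{range}(dE_{\bx})$, which you justify more explicitly than the paper does, apply a Young-type inequality, and integrate against the pushforward. Your upper bound and your isometric case are correct for every $\delta>0$.

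The step that fails is the lower bound for $\delta>1$, and the paper's proof has the same defect. You replace $(1-\delta^{-1})\|\mathbf{b}\|^2$ by $(1-\delta^{-1})\tilde{\varepsilon}^2$. This is legitimate only when $1-\delta^{-1}\le 0$, i.e.\ $\delta\le 1$. For $\delta>1$ the coefficient is positive, so using $\|\mathbf{b}\|^2\le\tilde{\varepsilon}^2$ pushes the inequality the wrong way. In the paper, the lower bound comes from $(a+b)^2\le(1+\delta')a^2+(\delta'^{-1}+1)b^2$ with $\delta'=\delta/(1-\delta)$, which likewise needs $\delta<1$.

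This cannot be repaired, because the stated lower bound is false for some $\delta>1$. Take $\mathcal{M}=S^1\subset\mathbb{R}^2$, $d=2$, and a non-uniform $p^{\mathcal{M}}_\tau$. Let $E$ map $S^1$ diffeomorphically onto a circle of radius $R$ with speed $2\pi R\,p^{\mathcal{M}}_\tau$. Then $\mathbf{b}=\mathbf{a}$, so $\tilde p^{\mathcal{N}}_\tau$ is uniform and $\mathcal{I}_{\mathcal{N}}=0$. Meanwhile $0<\mathcal{I}_{\mathcal{M}}=\mathbb{E}\|\mathbf{a}\|^2<\tilde{\varepsilon}^2$; the inequality is strict because $|(\log p^{\mathcal{M}}_\tau)'|$ cannot be constant on $S^1$ unless it vanishes. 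At $\delta=\tilde{\varepsilon}/\sqrt{\mathcal{I}_{\mathcal{M}}}>1$, the claimed lower bound equals $(\tilde{\varepsilon}-\sqrt{\mathcal{I}_{\mathcal{M}}})^2/C^2>0$, contradicting $\mathcal{I}_{\mathcal{N}}=0$.

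So you should state the lower bound only for $\delta\in(0,1]$. For $\delta\ge 1$, your pointwise inequality gives only the trivial bound $(1-\delta)\mathcal{I}_{\mathcal{M}}/C^2\le 0$.
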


Proposition~\ref{prop:fisher_bi_lip} establishes that a near-isometric mapping ($c \approx C \approx 1$, $\tilde{\varepsilon} \approx 0$) neutralizes first-order geometric distortion. Consequently, as detailed in Appendix~\ref{sec:cor_fisher_stability}, preserving the intrinsic geometry means the shift in baseline estimation error between $\mathcal{X}$ and $\mathcal{Z}$ strictly reduces to the dimensional compression penalty $D - d$, which corresponds directly to the shedding of normal noise dimensions.

\subsection{Fisher Information Rate and Second-Order Distortion}\label{sec:FIR}

The dominant factor in diffusion difficulty at higher noise levels is the \emph{Geometric Complexity Penalty} governed by the FIR, $\mathcal{R}^{(k)}(\mu_\tau)$. High-frequency manifold crinkling or structural irregularities inflate the score Jacobian $\nabla_{\bx} \bs_\tau$, causing denoising errors to compound quadratically as indicated by the factor $\tau^2$ in \eqref{eq:denoising_resistance}. Therefore, effective diffusion requires keeping this penalty low.

The FIR is mathematically driven by the magnitude of the score Jacobian (the Hessian of the log-density). Even if the latent score is perfectly smooth, a nonlinear encoder injects artificial curvature via its own second-order derivatives. This can introduce score singularities (e.g., at the non-smooth kinks of a ReLU activation) that artificially inflate the latent FIR. We provide the formal chain-rule decomposition of this curvature injection in Appendix~\ref{app:curvature_injection}. 
Because this injected curvature directly distorts the natural geometric complexity of the data, we need a metric to quantify it. To formalize this, we define the relative deviation between data and latent space FIRs.

\begin{definition}[FIR Deviation]\label{def:fir_dis}
Let the intrinsic dimension of $\mu$ on $\mathbb{R}^D$ be $m$, and let $\mu_Z := E_\#\mu$ be its pushforward on $\mathbb{R}^d$ under the encoder $E$. 
Let $\mu_\tau$ and $\mu_{Z,\tau}$ denote these distributions diffused at noise variance $\tau$. The \emph{FIR deviation} $\mathcal{D}_{\mathcal{R}}(\tau)$ and its \emph{dimension-normalized} counterpart $\mathcal{D}_{\mathcal{R}}^{\mathrm{sc}}(\tau)$ are:
\begin{equation*}
\mathcal{D}_{\mathcal{R}}(\tau) := \left| 1 - \frac{\mathcal{R}^{(D)}(\mu_\tau)}{\mathcal{R}^{(d)}(\mu_{Z,\tau})} \right|, \qquad \mathcal{D}_{\mathcal{R}}^{\mathrm{sc}}(\tau) := \left| 1 - \left(\frac{d-m}{D-m}\right) \frac{\mathcal{R}^{(D)}(\mu_\tau)}{\mathcal{R}^{(d)}(\mu_{Z,\tau})} \right|.
\end{equation*}
\end{definition}
These metrics quantify the relative distortion in denoising complexity. Subtracting the FIR ratio from $1$ centers the metric at $0$ for a perfectly distortion-free encoder. The factor $\frac{d-m}{D-m}$ adjusts for dimensional compression. Because orthogonal noise adds a penalty proportional to the codimension (Lemma~\ref{lem:normal_term}), this factor isolates geometric distortion from baseline dimensional loss.

\subsection{Stability Theorems on Manifolds of Positive Reach}

We bound the FIR deviation in two stages. We first establish a baseline in Theorem~\ref{thm:FIR_stability_d_less_D_linear} by considering a perfectly flat manifold and a linear encoder; this isolates the fundamental penalties of ambient dimensional compression and tangential distortion. In Theorem~\ref{thm:FIR_stability_nonlinear}, we generalize these results to the realistic setting of curved manifolds and nonlinear encoders, where the FIR is further perturbed by intrinsic curvature and higher-order Taylor residuals. We quantify the curvature of the manifold $\mathcal{M}$ by its reach, denoted $\reach(\mathcal{M})$, which is the supremum over all radii $r > 0$ such that every point $\bx \in \mathbb{R}^D$ within distance $r$ of $\mathcal{M}$ has a unique nearest point (i.e., metric projection) $\pi_{\mathcal{M}}(\bx) \in \mathcal{M}$.

\begin{assumption}[\textbf{Flat Support and Linear Mapping}]
\label{ass:flat_linear_encoder}
Let $M\subset\mathbb{R}^D$ be an $m$-dimensional linear subspace with an isometry $\bU:\mathbb{R}^m\to M$. Let $\mu := (\bU_\#\rho)\otimes \delta_{\bzero}$ be a probability measure on $\mathbb{R}^D$, where $\delta_{\bzero}$ is the Dirac measure at $\bzero$. Let $m\le d \le D$, and fix an isometry $\bV:\mathbb{R}^m\to \mathbb{R}^d$. Assume the encoder $E:\mathbb{R}^D\to\mathbb{R}^d$ is linear on $M$ such that $E(\bU\bx)=\bV(\bA\bx)$ for an invertible $\bA\in\mathbb{R}^{m\times m}$.
\end{assumption}

\begin{theorem}[Linear Stability]
\label{thm:FIR_stability_d_less_D_linear}
Under Assumption~\ref{ass:flat_linear_encoder}, let $\mu_Z := E_\#\mu$. Assume $\bA$ satisfies the near-isometry condition for some $\delta\in(0,1)$: $\|\bA^\top \bA-\bI_m\|_{2}\le \delta$. Then there exists $C_m > 0$ depending only on $m$ such that for all $\tau>0$:
\begin{itemize}[leftmargin=*, nosep]
    \item If $m=d=D$, then $\mathcal{D}_{\mathcal{R}}(\tau) \le C_m\delta$.
    \item If $m < d \le D$, then $\mathcal{D}_{\mathcal{R}}^{\mathrm{sc}}(\tau) \le C_m \left( \frac{D-d}{D-m} + (d-m)\delta \right)$.
\end{itemize}
\end{theorem}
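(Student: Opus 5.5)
The plan is to exploit the product structure of Assumption~\ref{ass:flat_linear_encoder}. After rotating coordinates so that $M=\mathbb{R}^m\times\{\bzero\}$, the smoothed data measure factorizes as $\mu_\tau=(\rho*\mathcal{N}(\bzero,\tau\bI_m))\otimes\mathcal{N}(\bzero,\tau\bI_{D-m})$. Its log-density is therefore a sum, and its Hessian $\mathbf{H}_\tau$ is block diagonal. By Proposition~\ref{prop:FIR_identity}, FIR is additive over independent factors, and a centered Gaussian of variance $\tau$ in $k$ dimensions contributes $\mathrm{Tr}(\tau^{-2}\bI_k)=k/\tau^{2}$ (this is the normal term of Lemma~\ref{lem:normal_term}). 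Writing $a(\tau):=\mathcal{R}^{(m)}(\rho*\mathcal{N}(\bzero,\tau\bI_m))$ gives $\mathcal{R}^{(D)}(\mu_\tau)=a+(D-m)/\tau^2$.

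The latent side has the same structure. Since $\mu_Z=\bV_\#(\bA_\#\rho)$ and $\bV$ is an isometry, we get $\mathcal{R}^{(d)}(\mu_{Z,\tau})=b+(d-m)/\tau^2$ with $b(\tau):=\mathcal{R}^{(m)}(\bA_\#\rho*\mathcal{N}(\bzero,\tau\bI_m))$. The whole problem thus reduces to comparing $a$ and $b$ and then doing scalar algebra on these two sums.

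For the comparison of $a$ and $b$, I would use the identity $\bA_\#\rho*\mathcal{N}(\bzero,\tau\bI)=\bA_\#\big(\rho*\mathcal{N}(\bzero,\tau\Sigma)\big)$ with $\Sigma=(\bA^\top\bA)^{-1}$. Under a linear pushforward the log-Hessian transforms as $\bA^{-\top}\mathbf{H}\bA^{-1}$. Hence $\mathrm{Tr}\big((\bA^{-\top}\mathbf{H}\bA^{-1})^2\big)$ is sandwiched between $(1\pm\delta)^{-2}\,\mathrm{Tr}(\mathbf{H}^2)$, because the eigenvalues of $\bA^\top\bA$ lie in $[1-\delta,1+\delta]$. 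It then remains to compare the FIR of $\rho$ smoothed with covariance $\tau\Sigma$ against smoothing with $\tau\bI$. Here $\tau\Sigma$ is sandwiched between $\tau(1+\delta)^{-1}\bI$ and $\tau(1-\delta)^{-1}\bI$.

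I would split $\mathcal{N}(\bzero,\tau\Sigma)=\mathcal{N}(\bzero,\tau\lambda_{\min}\bI)*\mathcal{N}(\bzero,\tau(\Sigma-\lambda_{\min}\bI))$, and similarly with $\lambda_{\max}$. I would then invoke the fact that further Gaussian convolution does not increase $\mathbb{E}\,\mathrm{Tr}(\mathbf{H}^2)$. For isotropic smoothing this is the statement $\frac{d}{d\tau}\mathcal{R}\le 0$, i.e.\ a $\Gamma_3$/Bakry--\'Emery-type computation. This yields $a(\tau/(1-\delta))\lesssim b\lesssim a(\tau/(1+\delta))$ up to factors $(1\pm\delta)^{\pm2}$.

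Combined with a dilation relation of the form $\mathcal{R}(\tau')/\mathcal{R}(\tau)\ge(\tau/\tau')^{2}$ or similar, this would give $|b/a-1|\le C_m\delta$. I expect this step to be the main obstacle. Monotonicity under anisotropic convolution is not directly a consequence of isotropic heat-flow monotonicity, and the decay of $a$ in $\tau$ must be controlled from both sides. If a clean monotonicity argument fails, I would fall back on a second-order expansion of $\log\big(\rho*\mathcal{N}(\bzero,\tau\Sigma)\big)$ in $\Sigma-\bI$. I would bound the derivative of the FIR along the path $\Sigma_s=\bI+s(\Sigma-\bI)$ by $C_m\|\Sigma-\bI\|_2$ times the FIR itself, which is where the $m$-dependence of $C_m$ would enter.

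Given $b=a(1+\eta)$ with $|\eta|\le C_m\delta$, the final step is scalar. If $m=d=D$, then $\mathcal{D}_{\mathcal{R}}=|1-a/b|\le C_m\delta$ directly. If $m<d$, set $g:=1/\tau^2$ and write
\[
1-\tfrac{d-m}{D-m}\cdot\tfrac{a+(D-m)g}{b+(d-m)g}=\frac{(b-a)+\tfrac{D-d}{D-m}\,a}{b+(d-m)g}.
\]
The second piece is at most $\tfrac{D-d}{D-m}\cdot\tfrac{a}{b}\le C\,\tfrac{D-d}{D-m}$. The first piece is at most $C_m\delta\,a/(b+(d-m)g)$, which is bounded by $C_m\delta$. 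To obtain the stated $(d-m)\delta$ form, I would keep the cruder bound $|b-a|\le C_m\delta\,(b+(d-m)g)\cdot(d-m)$, using $d-m\ge1$. Absorbing constants then gives the claim.
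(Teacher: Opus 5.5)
\textbf{Same route as the paper, same failing step.} Your reduction matches the paper's. You apply Lemma~\ref{lem:normal_term} on both sides, compare the tangential rates $a=\mathcal{R}^{(m)}(\rho_\tau)$ and $b=\mathcal{R}^{(m)}((\bA_\#\rho)_\tau)$, and finish with scalar algebra. Your identity $\bA_\#\rho*\mathcal{N}(\bzero,\tau\bI_m)=\bA_\#\bigl(\rho*\mathcal{N}(\bzero,\tau(\bA^\top\bA)^{-1})\bigr)$ is an exact version of the paper's surrogate $\mathbf{S}_\#(\rho_\tau)$.

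The gap is the step you flag yourself: a relative bound $|b/a-1|\le C_m\delta$, uniform in $\rho$ and $\tau$. The paper asserts the same thing without argument (``the log-density Hessian is stable to $\mathcal{O}(\delta)$ perturbations in the noise covariance''). It is false.

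\textbf{Counterexample.} Take $m=d=D=1$, $\bA=\alpha$ with $\alpha^2=1-\delta$, and $\rho(x)\propto(1+\epsilon\cos kx)\,e^{-x^2/(2L^2)}$ with $\epsilon\le 1/2$. Completing squares gives exactly $\rho_\tau(x)\propto e^{-x^2/(2(L^2+\tau))}\bigl(1+\epsilon e^{-k^2\tau_L/2}\cos(k_L x)\bigr)$, where $\tau_L=L^2\tau/(L^2+\tau)$ and $k_L=L^2k/(L^2+\tau)$. Hence
\[
\lim_{L\to\infty}\mathcal{R}^{(1)}(\rho_\tau)=\tfrac12\,\epsilon^2k^4e^{-k^2\tau}\bigl(1+O(\epsilon^2)\bigr).
\]
The measure $\alpha_\#\rho$ has the same form with $k\mapsto k/\alpha$ and $L\mapsto\alpha L$, so
\[
\lim_{L\to\infty}\frac{\mathcal{R}^{(1)}(\rho_\tau)}{\mathcal{R}^{(1)}((\alpha_\#\rho)_\tau)}=(1-\delta)^2\exp\Bigl(\frac{k^2\tau\,\delta}{1-\delta}\Bigr)\bigl(1+O(\epsilon^2)\bigr).
\]
This is unbounded as $k\to\infty$. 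So $\mathcal{D}_{\mathcal{R}}(\tau)$ admits no bound of the form $C_m\delta$, and the first bullet cannot be proved as stated.

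The mechanism is that Fourier content at frequency $k$ survives smoothing only with weight $e^{-k^2\tau}$. Consequently $\tau\,\partial_\tau\log\mathcal{R}^{(m)}(\rho_\tau)\approx-k^2\tau$ is not uniformly bounded. This defeats both of your routes:
\begin{itemize}
\item No power-law dilation relation can lower-bound $\mathcal{R}(\tau')/\mathcal{R}(\tau)\approx e^{-k^2(\tau'-\tau)}$.
\item The fallback estimate $|\partial_s\mathcal{R}|\le C_m\|\Sigma-\bI_m\|_2\,\mathcal{R}$ fails for the same reason.
\end{itemize}
Even granting your monotonicity sandwich $a(\tau/(1-\delta))\lesssim b\lesssim a(\tau/(1+\delta))$, which is consistent with the example, it cannot be converted into a relative $O(\delta)$ statement.

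\textbf{The case $m<d$.} Your algebra also uses $a/b\le C$ and $|b-a|\le C_m\delta\,a$, and both fail in the same example. A route that could work is additive and uses the normal floor $(d-m)/\tau^2$. Your own identity gives
\[
\mathcal{D}_{\mathcal{R}}^{\mathrm{sc}}(\tau)\le\frac{\tau^2|b-a|+\tfrac{D-d}{D-m}\,\tau^2 a}{d-m},
\]
so it would suffice to prove the scale-invariant bounds $\tau^2a\le C_m$ and $\tau^2|b-a|\le C_m\delta$.

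Both hold in the example. There $\tau^2a$ and $\tau^2b$ are essentially $\tfrac12\epsilon^2 g(x)$ with $g(x)=x^2e^{-x}$, evaluated at $x=k^2\tau$ and $x=k^2\tau/\alpha^2$ respectively. Since $g$ and $xg'(x)$ are bounded, the difference is $O(\delta)$.

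In general, $\tau^2\nabla^2\log p_\tau=\mathrm{Cov}(\bx_0\mid\bx)-\tau\bI$. So these bounds amount to uniform moment bounds on posterior covariances and higher posterior cumulants in Gaussian noise. That is a different estimate from the multiplicative one in your plan. It would still need an actual proof, including along your anisotropic covariance path.
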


While Theorem~\ref{thm:FIR_stability_d_less_D_linear} isolates the first-order distortion ($\delta$) on an idealized tangent space, real-world data manifolds are inherently curved, and practical encoders are nonlinear. To evaluate practical latent diffusion, we generalize our support to non-flat geometries.
\begin{assumption}[\textbf{Positive Reach Support}]
\label{ass:positive_reach}
Let $\mathcal{M} \subset \mathbb{R}^D$ be a compact $m$-dimensional manifold with positive reach, $\reach(\mathcal{M}) > 0$, and bounded diameter $D_{\mathcal{M}}$. Let the data measure $\mu$ be supported on $\mathcal{M}$. We restrict our analysis to a diffusion regime where the noise variance $\tau$ is strictly bounded by the manifold's tubular neighborhood; specifically, assume there exists a constant $\kappa \in (0, 1)$ such that $\tau \le \kappa \cdot \reach(\mathcal{M})^2$. For any expansion point $\bx_0 \in \mathcal{M}$, let $T_{\bx_0}\mathcal{M}$ denote its $m$-dimensional affine tangent space, and let $\tilde{\mu}^{\bx_0}$ be the local projection of the measure $\mu$ onto $T_{\bx_0}\mathcal{M}$.
\end{assumption}
\begin{theorem}[Nonlinear and Curved Stability]
\label{thm:FIR_stability_nonlinear}
Let the manifold $\mathcal{M}$ and probability measures $\mu, \tilde{\mu}^{\bx_0}$ be defined as in Assumption~\ref{ass:positive_reach}, and let $E:\mathbb{R}^D\to\mathbb{R}^d$ be a nonlinear map. Assume $E \in W^{1,2}_{\mathrm{loc}}(\mathcal{M})$, where $W^{1,2}_{\mathrm{loc}}(\mathcal{M})$ is the local Sobolev space of functions whose first-order weak derivatives are locally square-integrable on the manifold. Furthermore, assume $E$ satisfies the expected Taylor residual bound relative to the local tangent space, localized by the Gaussian heat kernel $\mathcal{K}_\tau(\bx | \bx_0)$:
\begin{equation}\label{eq:asmpt_eps}
\int_{\mathcal{M}} \int_{\mathcal{M}} \|E(\bx)-(E(\bx_0)+J_E(\bx_0)(\pi_{T_{\bx_0}}(\bx)-\bx_0))\|^2 \mathcal{K}_\tau(\bx | \bx_0) \, d\mu(\bx) \, d\mu(\bx_0) \le \varepsilon^2,
\end{equation}
where $\pi_{T_{\bx_0}}(\bx)$ is the orthogonal projection of $\bx$ onto the tangent space $T_{\bx_0}\mathcal{M}$. 
If each local Jacobian $\bA_{\bx_0} = J_E(\bx_0)$ satisfies $\|\bA_{\bx_0}^\top \bA_{\bx_0}-\bI_m\|_{2}\le \delta$, then there exists a constant $C_m > 0$ and an intrinsic spatial constant $C_{\mathcal{M}} > 0$, which depends on the manifold's local variance, such that for any $\tau>0$:
\begin{itemize}[leftmargin=*, nosep]
    \item If $m=d=D$,  $\mathcal{D}_{\mathcal{R}}(\tau) \le C_m \left( \delta + {\varepsilon}/{\sqrt{\tau}} + C_{\mathcal{M}}{\sqrt{\tau}}/{\reach(\mathcal{M})} \right)$.
    \item If $m < d \le D$,  $\mathcal{D}_{\mathcal{R}}^{\mathrm{sc}}(\tau) \le C_m\left((D-d)/(D-m) + (d-m)\delta + {\varepsilon}/{\sqrt{\tau}} + C_{\mathcal{M}}{\sqrt{\tau}}/{\reach(\mathcal{M})} \right)$.
\end{itemize}
\end{theorem}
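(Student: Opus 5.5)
We reduce Theorem~\ref{thm:FIR_stability_nonlinear} to Theorem~\ref{thm:FIR_stability_d_less_D_linear} by a triangle inequality. We insert two intermediate measures between $\mu_\tau$ and $\mu_{Z,\tau}$. The first is the locally flattened data measure $\tilde{\mu}^{\bx_0}$ on the affine tangent space $T_{\bx_0}\mathcal{M}$ of Assumption~\ref{ass:positive_reach}. The second is its image under the linearized encoder $L_{\bx_0}(\bx)=E(\bx_0)+\bA_{\bx_0}(\bx-\bx_0)$. Since Theorem~\ref{thm:FIR_stability_d_less_D_linear} applies verbatim to the pair $(\tilde{\mu}^{\bx_0}, (L_{\bx_0})_\#\tilde{\mu}^{\bx_0})$, we obtain for each $\bx_0$ the terms $C_m\delta$, respectively $C_m\big(\tfrac{D-d}{D-m}+(d-m)\delta\big)$. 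It remains to control two relative errors. The first is $|\mathcal{R}^{(D)}(\mu_\tau)-\mathcal{R}^{(D)}(\tilde\mu_\tau)|$, the curvature error. The second is $|\mathcal{R}^{(d)}(\mu_{Z,\tau})-\mathcal{R}^{(d)}((L\tilde\mu)_\tau)|$, the Taylor error. Both are normalized by an FIR of order $m/\tau^2$ (tangential part) plus the normal contribution of Lemma~\ref{lem:normal_term}. Since all quantities are ratios, we only need additive FIR errors of size $O(\tau^{-2})\cdot(\text{small})$, combined with the lower bound $\mathcal{R}\gtrsim c_m\tau^{-2}$.

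To localize, we use Proposition~\ref{prop:FIR_identity} to write $\mathcal{R}=\mathbb{E}\|\nabla \bs_\tau\|_F^2$. By Tweedie's formula, $\nabla\bs_\tau(\bx)=\tau^{-2}\mathrm{Cov}(\bx_0\mid\bx)-\tau^{-1}\bI$, so the FIR becomes an average over $\bx$ of posterior covariances. These are integrals against the heat kernel $\mathcal{K}_\tau(\cdot\mid\bx_0)$, which concentrates at scale $\sqrt\tau$ around each point. Perturbing the prior changes the posterior covariance, and we bound this change using a Wasserstein-$2$ or coupling estimate between posteriors. For the curvature term, positive reach gives $\|\bx-\pi_{T_{\bx_0}}(\bx)\|\le \|\bx-\bx_0\|^2/(2\reach(\mathcal{M}))$. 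Under the heat kernel weights, $\|\bx-\bx_0\|^2\sim C_{\mathcal{M}}\tau$, so the displacement is $O(C_{\mathcal{M}}\tau/\reach)$. Relative to the $\sqrt\tau$ noise scale, this gives a relative covariance error of $O(C_{\mathcal{M}}\sqrt\tau/\reach(\mathcal{M}))$. The condition $\tau\le\kappa\,\reach^2$ keeps us inside the tubular neighbourhood, so the projection is well defined. For the Taylor term, we couple $E(\bx)$ with $L_{\bx_0}(\pi_{T_{\bx_0}}\bx)$. By \eqref{eq:asmpt_eps}, the kernel-averaged squared displacement is at most $\varepsilon^2$. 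Dividing by the noise scale $\sqrt\tau$ gives a relative error of $\varepsilon/\sqrt\tau$ via Cauchy--Schwarz, using $\mathrm{Tr}(\Sigma^2)-\mathrm{Tr}(\Sigma'^2)\le(\|\Sigma\|_F+\|\Sigma'\|_F)\|\Sigma-\Sigma'\|_F$. The $W^{1,2}_{\mathrm{loc}}$ assumption makes $J_E$ defined almost everywhere, so $\bA_{\bx_0}$ makes sense and the integrals are finite.

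Finally, we average over $\bx_0\sim\mu$ and collect terms. Any constants arising from $\delta<1$ and the ratio manipulation $|1-a/b|\le|1-a'/b'|+\dots$ are absorbed into $C_m$. In the codimension case, the factor $\frac{d-m}{D-m}$ multiplies bounded quantities, so the same absorption applies.

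The main obstacle is the stability step: controlling the \emph{second}-order quantity $\mathbb{E}\,\mathrm{Tr}(\mathbf{H}_\tau^2)$ under a measure perturbation known only in averaged $L^2$ (Wasserstein-type) form. The FIR depends on fourth posterior moments and on the evaluation density $p_\tau$, and both change under the perturbation. I would first try to express $\nabla\bs_\tau$ through posterior covariances and use the Gaussian smoothing to get Lipschitz dependence of the posterior on the prior at scale $\sqrt\tau$, via bounded posterior moments from compactness ($D_{\mathcal{M}}$). If this fails globally, I would settle for bounds in the regime where $\varepsilon/\sqrt\tau$ and $\sqrt\tau/\reach$ are small, which is where the stated inequality is informative.
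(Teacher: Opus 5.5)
You have the paper's own skeleton: Tweedie, a local affine surrogate built from $J_E(\bx_0)$ and $\pi_{T_{\bx_0}}$, a heat-kernel-localized displacement split into Federer's reach bound and the residual \eqref{eq:asmpt_eps}, normalization by a $\tau^{-2}$ floor, and the $\delta$ and dimension terms from Theorem~\ref{thm:FIR_stability_d_less_D_linear}. But, as you suspect, the step that carries all the content is missing.

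\textbf{The stability step.} Nothing shows that $\mathcal{R}=\mathbb{E}_{p_\tau}\|\tau^{-2}\mathrm{Cov}(\bx_0\mid\bx)-\tau^{-1}\bI\|_F^2$, a fourth-order posterior quantity, moves by only $\tau^{-2}\cdot w/\sqrt{\tau}$ under a prior perturbation of kernel-averaged $L^2$ size $w$. Your trace inequality needs $\mathbb{E}\|\Sigma-\Sigma'\|_F^2$ at a common point under a common law. That requires a quantitative stability estimate for posterior covariances, plus control of the change $p_\tau\to q_\tau$ of the evaluation law. Dividing a displacement by $\sqrt{\tau}$ supplies neither.

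\textbf{Localization.} There is also a problem you do not flag. Your triangle inequality compares global FIRs with those of a surrogate built at a single $\bx_0$. The estimates that make the errors small (Federer's bound under the kernel weight, and \eqref{eq:asmpt_eps}) are local and $\bx_0$-averaged. Since $\mathcal{R}$ is not linear in the measure, averaging over $\bx_0$ only makes sense after you write $\mathcal{R}(\mu_\tau)=\mathbb{E}_{\bx_0\sim\mu}\mathbb{E}_{\bn}\|\mathbf{H}_\tau(\bx_0+\sqrt{\tau}\bn)\|_F^2$ and prove that the posterior there sees only mass near $\bx_0$. This fails if $\tilde\mu^{\bx_0}$ projects all of $\mu$: projecting a circle onto a tangent line puts the antipodal point on $\bx_0$. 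On the latent side it is not controlled by \eqref{eq:asmpt_eps} when $E$ is non-injective, because distant preimages landing near $E(\bx_0)$ carry negligible kernel weight.

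The paper does not close this gap either. It bounds score differences pointwise by ``contraction of conditional expectation'', using one posterior for two different priors. It then treats that as a bound on the FIR, i.e.\ on score Jacobians, which an $L^2$ bound on scores does not control.

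\textbf{The normalization floor when $d=m$.} Your claim that the tangential FIR is of order $m/\tau^2$ is false. For $\rho=\mathcal{N}(\bzero,\bI_m)$ it equals $m/(1+\tau)^2$, and for a uniform law on an interval it is $\Theta(\tau^{-3/2})$ at small $\tau$. A $\tau^{-2}$ floor comes only from normal directions (Lemma~\ref{lem:normal_term}), hence only when $d>m$. So in the first bullet, additive errors of size $\tau^{-2}\cdot(\text{small})$ give no relative bound.

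In fact that bullet cannot be proved. Take $\mathcal{M}=[-1,1]$ (for $m=D$, compactness forces a boundary; the reach is $+\infty$), $\mu$ uniform, $E(x)=|x|$, and read \eqref{eq:asmpt_eps} at the same $\tau$.
\begin{itemize}[leftmargin=*, nosep]
\item The bound side: $\delta=0$ a.e., and the residual vanishes unless $x$ and $x_0$ lie on opposite sides of $0$, where it equals $2|x|$. So $\varepsilon^2=O(\tau^{3/2})$ and the claimed bound is $O(\tau^{1/4})$.
\item The FIR side: let $\Phi$ be the standard normal CDF and $K=\int_{\mathbb{R}}\Phi(u)\,[(\log\Phi)''(u)]^2\,du<\infty$. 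An edge of height $h$ of a uniform density contributes $hK\tau^{-3/2}(1+o(1))$ to the FIR.
\item The law $\mu$ has two edges of height $\tfrac12$, while $E_\#\mu=\mathrm{Unif}[0,1]$ has two of height $1$. Hence $\mathcal{D}_{\mathcal{R}}(\tau)\to\tfrac12$ while the bound tends to $0$.
\end{itemize}
This also defeats your small-parameter fallback, since $\varepsilon/\sqrt{\tau}\to0$ here.

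The paper's proof invokes the same $c/\tau^2$ floor in every case and shares this defect. For $m<d$ the floor is available, and what remains is to actually prove the stability and localization lemmas above.
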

\textbf{Geometric Interpretation of FIR Deviation.}
Theorem~\ref{thm:FIR_stability_nonlinear} reveals that the FIR deviation splits into four distinct geometric penalties across two broad categories listed below: 

\textbf{1. Encoder-Induced Distortions:}  
To maintain data-level diffusability, an encoder must act as a local isometry and exhibit low artificial curvature. These constraints are expressed by two penalties:
\begin{itemize}[leftmargin=*, nosep]
    \item \emph{Tangential Distortion ($\delta$):} Local metric stretching or compression (Figure~\ref{fig:FIR_flat_encoder_geometry}(b)).
    \item \emph{High-Frequency Penalty ($\varepsilon/\sqrt{\tau}$):} Heat diffusion at variance $\tau$ smooths the distribution over a spatial scale of order $\sqrt{\tau}$. At small $\tau$ (low noise), high-frequency geometric distortions, such as the curvature shown in Figure~\ref{fig:FIR_flat_encoder_geometry}(c), disrupt the local geometry, driving up the FIR deviation. As $\tau$ increases, the diffusion process naturally blurs out these tight geometric kinks. The ratio $\varepsilon/\sqrt{\tau}$ effectively measures whether the encoder's artificial curvature survives the ``diffusion lens.''
\end{itemize}

\textbf{2. Intrinsic Data Penalties:}  
The geometry of the data introduces its own penalties. 
\begin{itemize}[leftmargin=*, nosep]
    \item \emph{Dimension Mismatch ($D-d$):} The baseline capacity loss from compression.
    \item \emph{Intrinsic Curvature Penalty ($C_{\mathcal{M}}\sqrt{\tau}/\reach(\mathcal{M})$):} As the diffusion variance $\tau$ increases, the heat kernel expands beyond the local tangent space. The manifold's curvature ($1/\reach(\mathcal{M})$) distorts the diffusion trajectories, driving up the FIR deviation purely due to data topology.
\end{itemize}

\section{Experiments}\label{sec:experiments}

We compare FI, FIR, and FIR deviations computed from diffusion models trained in pixel and latent spaces. We begin with Gaussian toy data and simple encoders to verify the theoretical results in \S\ref{sec:FI}. We then move to real image data and more complex autoencoders. Appendix~\ref{app:exp_detail} provides implementation details, including architectures, training configurations, and sampling parameters.

\subsection{FI and FIR Comparison in the Gaussian Toy Setting}
\label{sec:gaussian_toy}

\textbf{Experimental Setup.} We consider a 2D Gaussian dataset and its pointwise nonlinear encodings. Specifically, we generate training data \(\bx \sim \mathcal{N}(\bzero, \bI_2)\) and construct encoded samples \(\bz= E(\bx)\), where the encoder \(E\) is applied pointwise to each coordinate of \(\bx\). We then train a tiny diffusion model\footnote{We use the open-source implementation available at \url{https://github.com/tanelp/tiny-diffusion}.} with identical architecture and training settings separately on the original data \(\bx\) and the encoded data \(\bz\). 
The numerical estimation details are presented in
Appendix~\ref{app: FI_estimation}.

\begin{figure}[t!bp]
\centering
\begin{subfigure}{0.48\linewidth}
    \centering
    \includegraphics[width=\linewidth]{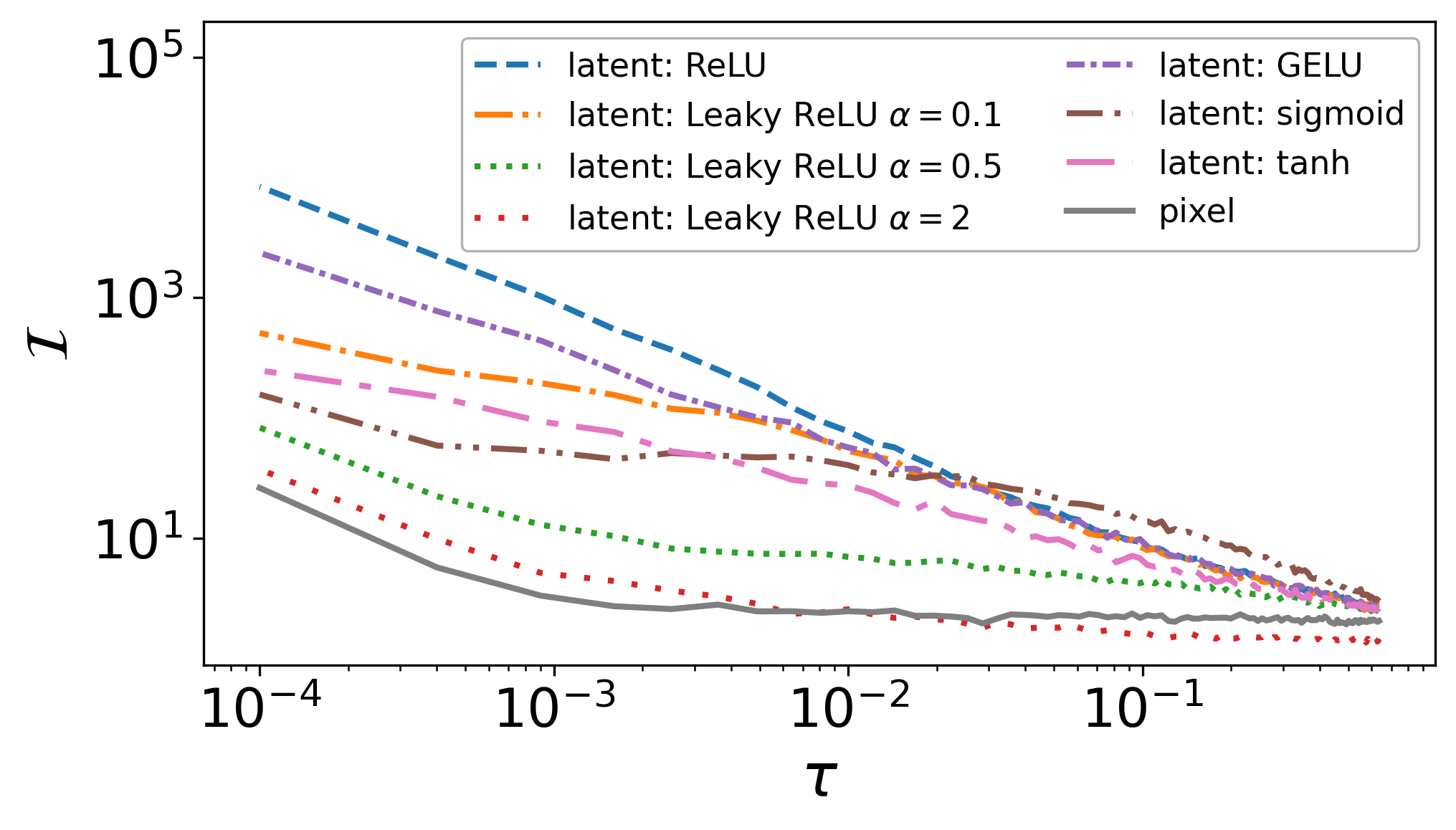}
    \caption{}
\end{subfigure}
\hspace{-2mm}
\begin{subfigure}{0.48\linewidth}
    \centering
  \includegraphics[width=\linewidth]{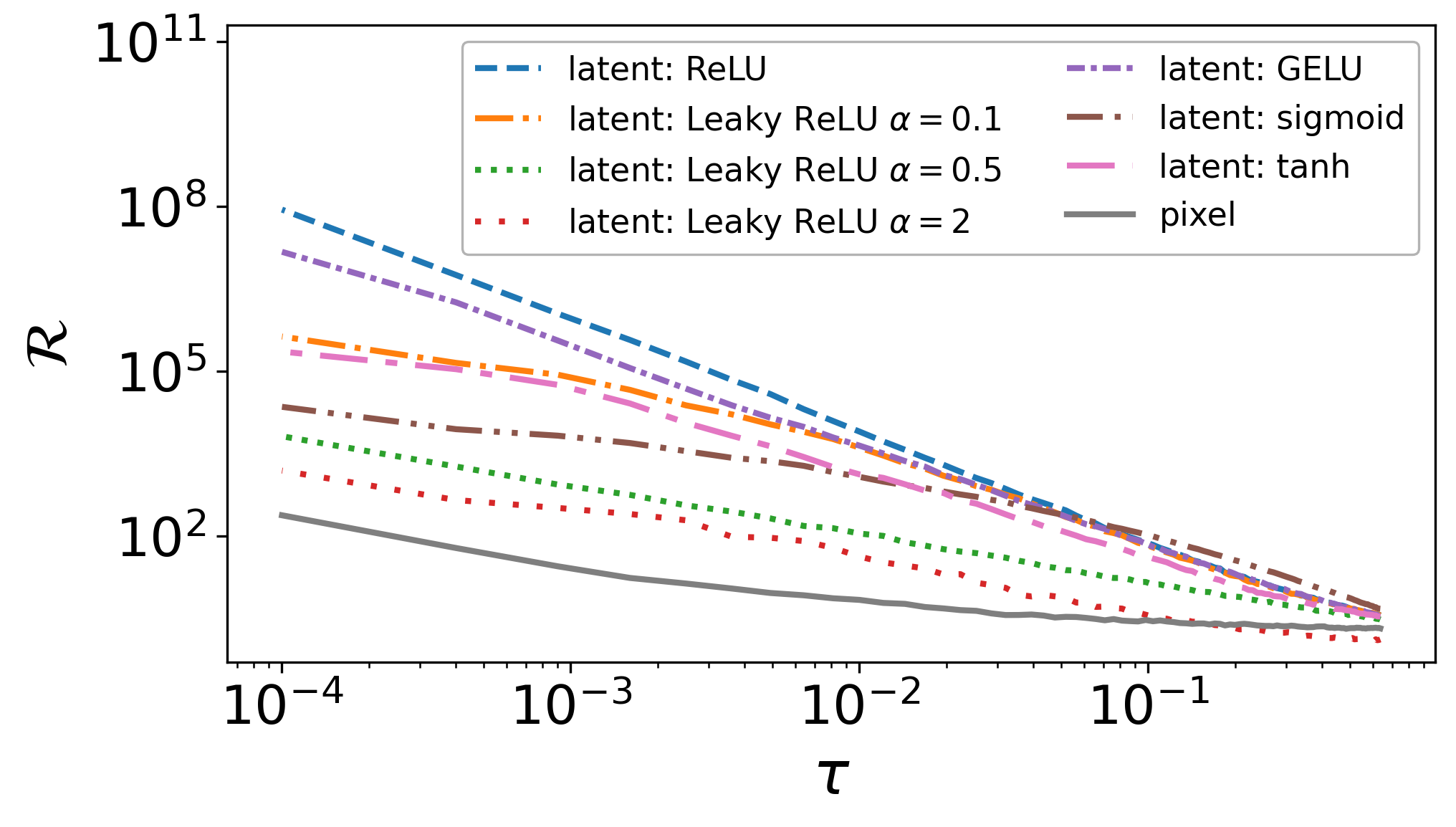}
    \caption{}
\end{subfigure}

\begin{subfigure}{0.48\linewidth}
    \centering
    \includegraphics[width=\linewidth]{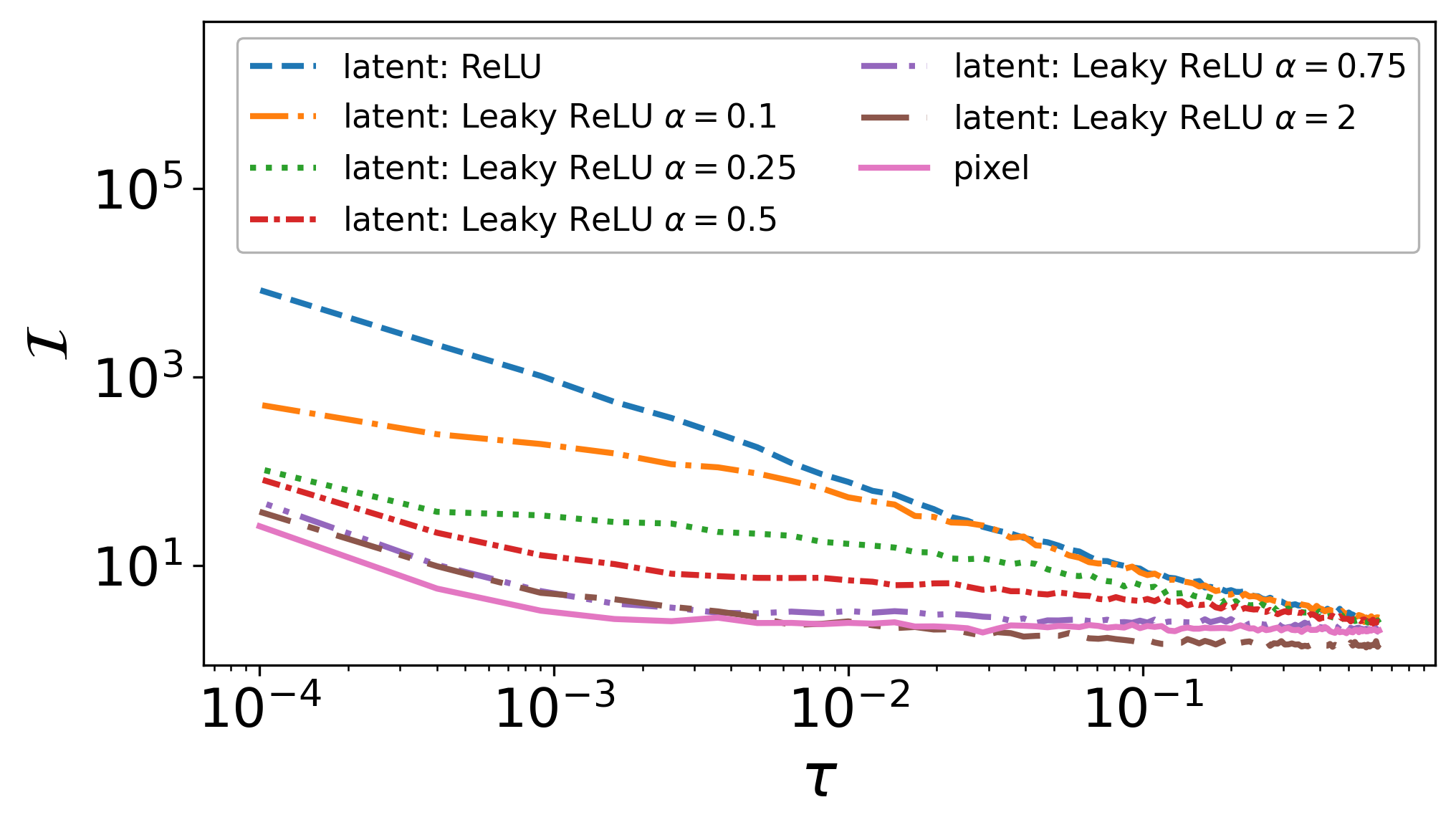}
    \caption{}
\end{subfigure}
\hspace{-2mm}
\begin{subfigure}{0.48\linewidth}
    \centering
    \includegraphics[width=\linewidth]{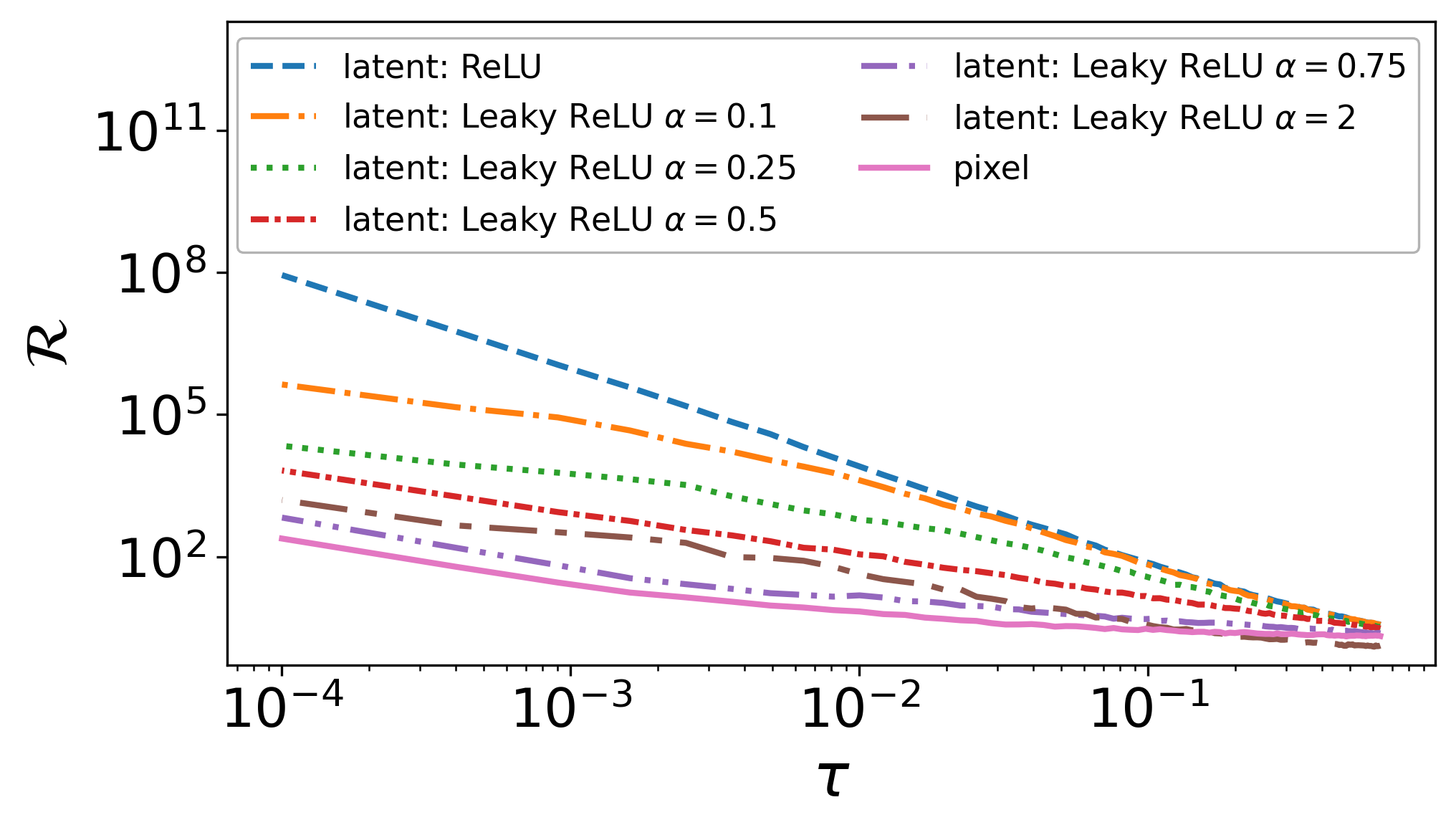}
    \caption{}
\end{subfigure}
\caption{
Values of $\mathcal I$ (left) and $\mathcal R$ (right) plotted versus the noise variance $\tau$, computed from tiny diffusion models trained on different data representations. \emph{Pixel} curves correspond to models trained on $\bx \sim \mathcal{N}(\bzero, \bI_2)$. \emph{Latent} curves correspond to models trained on encoded data $\bz= E(\bx)$, where the pointwise activation $E$ is indicated in the legend; for Leaky ReLU, $\alpha$ denotes the negative slope.
}
\label{fig:gaussian_activation}
\end{figure}

\textbf{Results.} 
Figure~\ref{fig:gaussian_activation} compares FI ($\mathcal I$) and FIR ($\mathcal{R}$) computed using diffusion models trained on original data $\bx$ (denoted \emph{pixel} in the legend) versus encoded data $\bz= E(\bx)$ (denoted \emph{latent}). We consider encoders $E$ given by common activation functions: ReLU, Leaky ReLU (with negative slope $\alpha$), GELU, sigmoid, and tanh. Derivations for the theoretical properties of encoders, using Proposition~\ref{prop:fisher_bi_lip} and discussion in \S\ref{sec:FIR}, are provided in Appendix~\ref{app:plot_gaussian_toy}.

Figures~\ref{fig:gaussian_activation}(a) and (c) compare FI across encoders against the pixel baseline. ReLU exhibits the largest deviation, as its derived FI upper bound diverges (see Appendix~\ref{app:plot_gaussian_toy}). Smooth activations (GELU, sigmoid, tanh) are locally bi-Lipschitz on the Gaussian support, keeping FI curves closer to the baseline. As the Leaky ReLU slope $\alpha$ approaches $1$, the mapping nears an isometry and the FI curve converges to the baseline. For $\alpha=2$, FI curves are not strictly below the baseline as Proposition~\ref{prop:fisher_bi_lip} suggests; this likely stems from numerical error in empirical FI estimation from finite samples.

Comparison for FIR is shown in Figures~\ref{fig:gaussian_activation}(b) and (d). 
ReLU exhibits the largest deviation from the pixel baseline, consistent with the discussion in \S\ref{sec:FIR}, since it does not satisfy second-order regularity. 
Among the smooth activations (GELU, tanh, sigmoid), deviations are smaller, with activations of lower curvature tending to produce smaller deviations.
Although Leaky ReLU does not satisfy the second-order regularity assumption, the observed deviation is not excessively large. We believe this is because $\mathcal R$ is estimated from finite data, and the Leaky ReLU encoder can be effectively approximated by a smoother mapping on the data support. As $\alpha$ approaches $1$, the smooth approximation has lower curvature, and the observed deviation decreases accordingly.

\subsection{FIR Deviation in the Gaussian Toy Setting}
\label{sec:gaussian_deviation}

\textbf{Experimental Setup.} 
Consider an intrinsic manifold of dimension $m=2$ with data $\by \sim \mathcal{N}(\bzero, \bI_2)$. We embed $\by$ into $\bx = (y_1, y_2, 0, \ldots, 0) \in \mathbb{R}^D$, and construct latent representations $\bz= E(\bx) \in \mathbb{R}^d$, where $D,d \ge 2$. Using the same tiny diffusion model, training protocol, and FIR estimation procedure described in \S\ref{sec:gaussian_toy}, we compute \( \mathcal{R}^{(D)}(\mu_\tau)\) and \( \mathcal{R}^{(d)}((\mu_Z)_\tau) \) using diffusion models trained on \(\bx\) and \(\bz\), respectively, and use these values to compute the FIR deviations \(\mathcal{D}_{\mathcal{R}}(\tau)\) and $\mathcal{D}_{\mathcal R}^{\mathrm{sc}}(\tau)$.

\begin{figure}[t!bp]
\centering
\begin{subfigure}{0.48\linewidth}
    \centering
    \includegraphics[width=\linewidth]{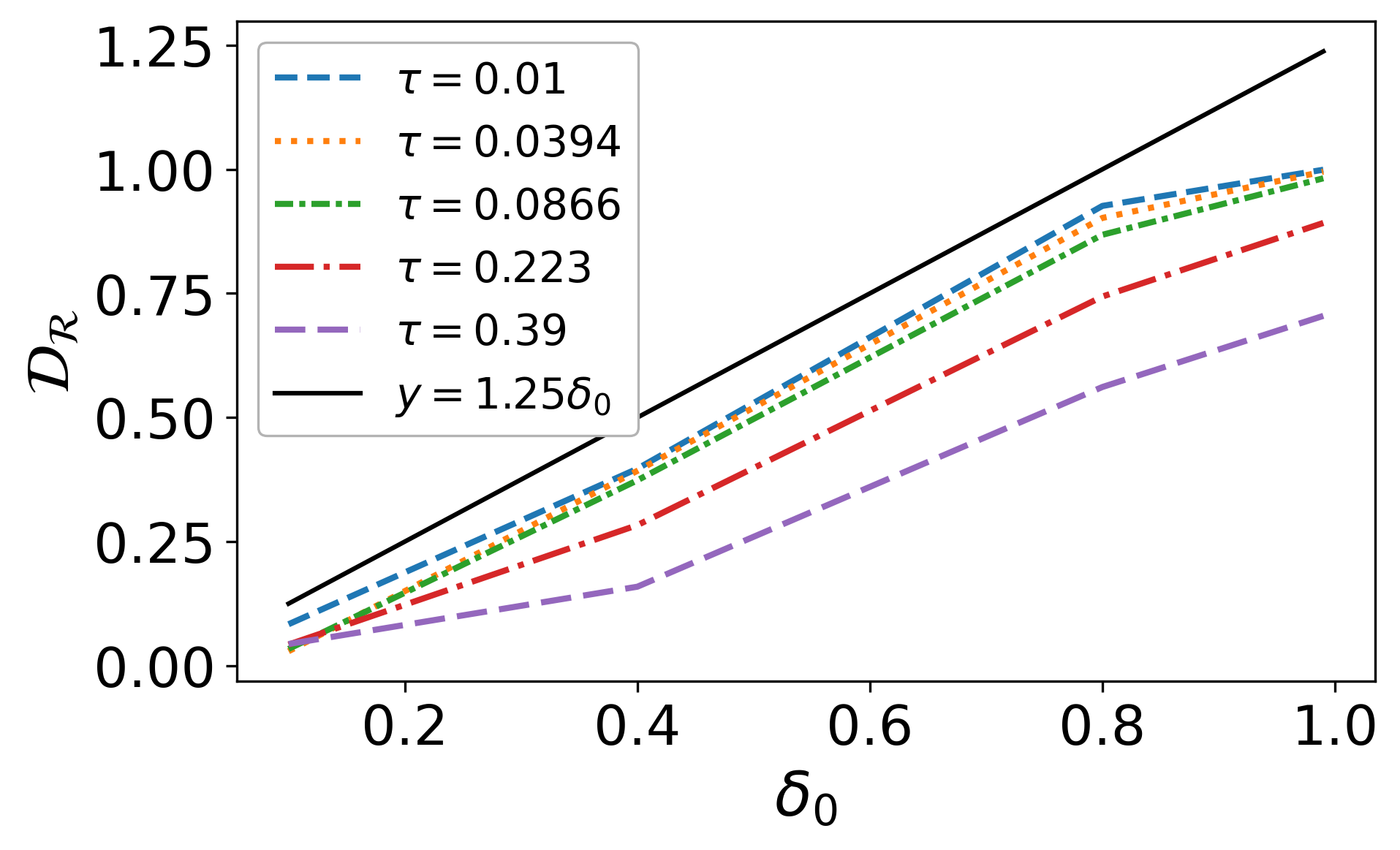}
    \caption{}
\end{subfigure}
\begin{subfigure}{0.48\linewidth}
    \centering
    \includegraphics[width=\linewidth]{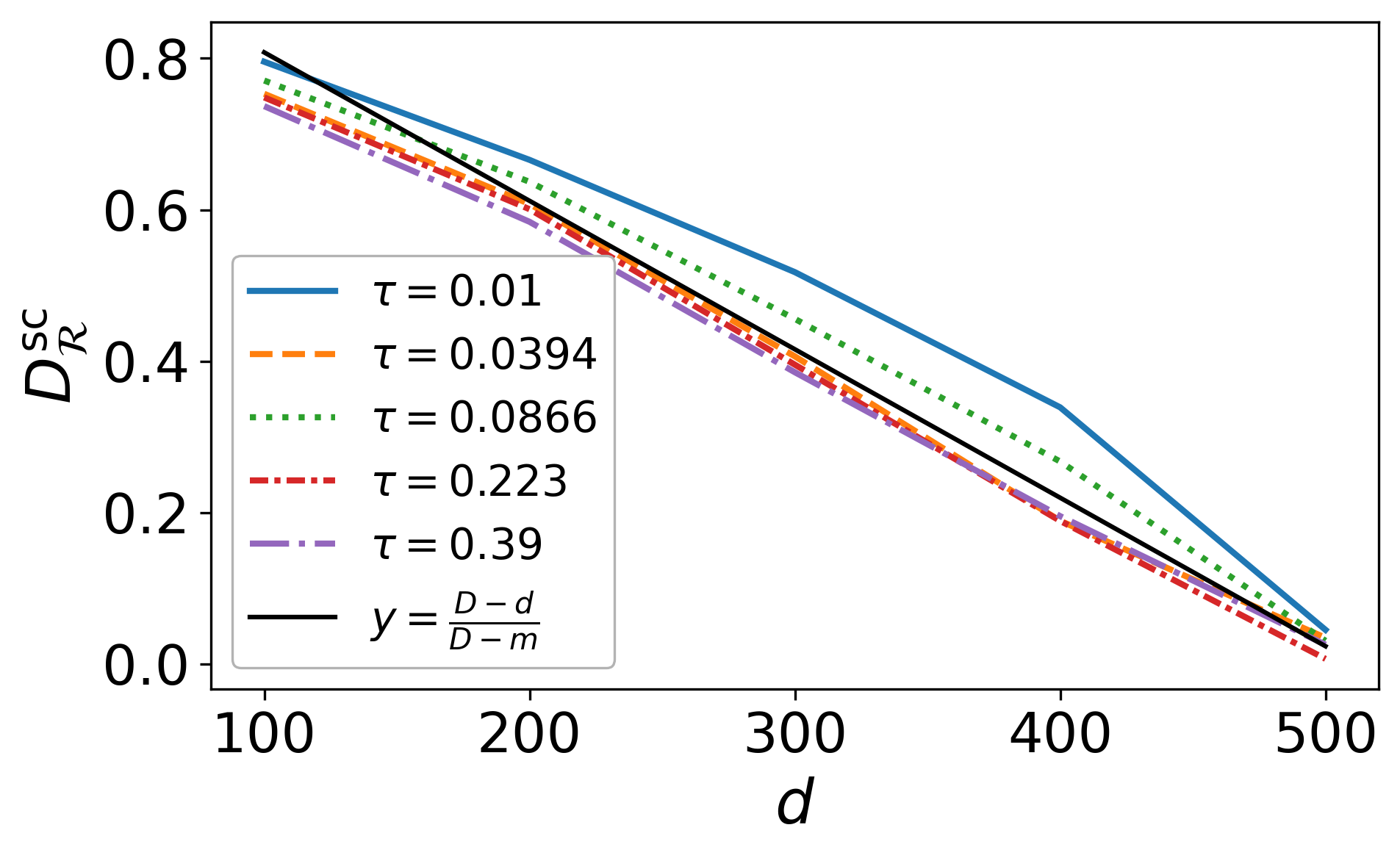}
    \caption{}
\end{subfigure}

\begin{subfigure}{0.48\linewidth}
    \centering
    \includegraphics[width=\linewidth]{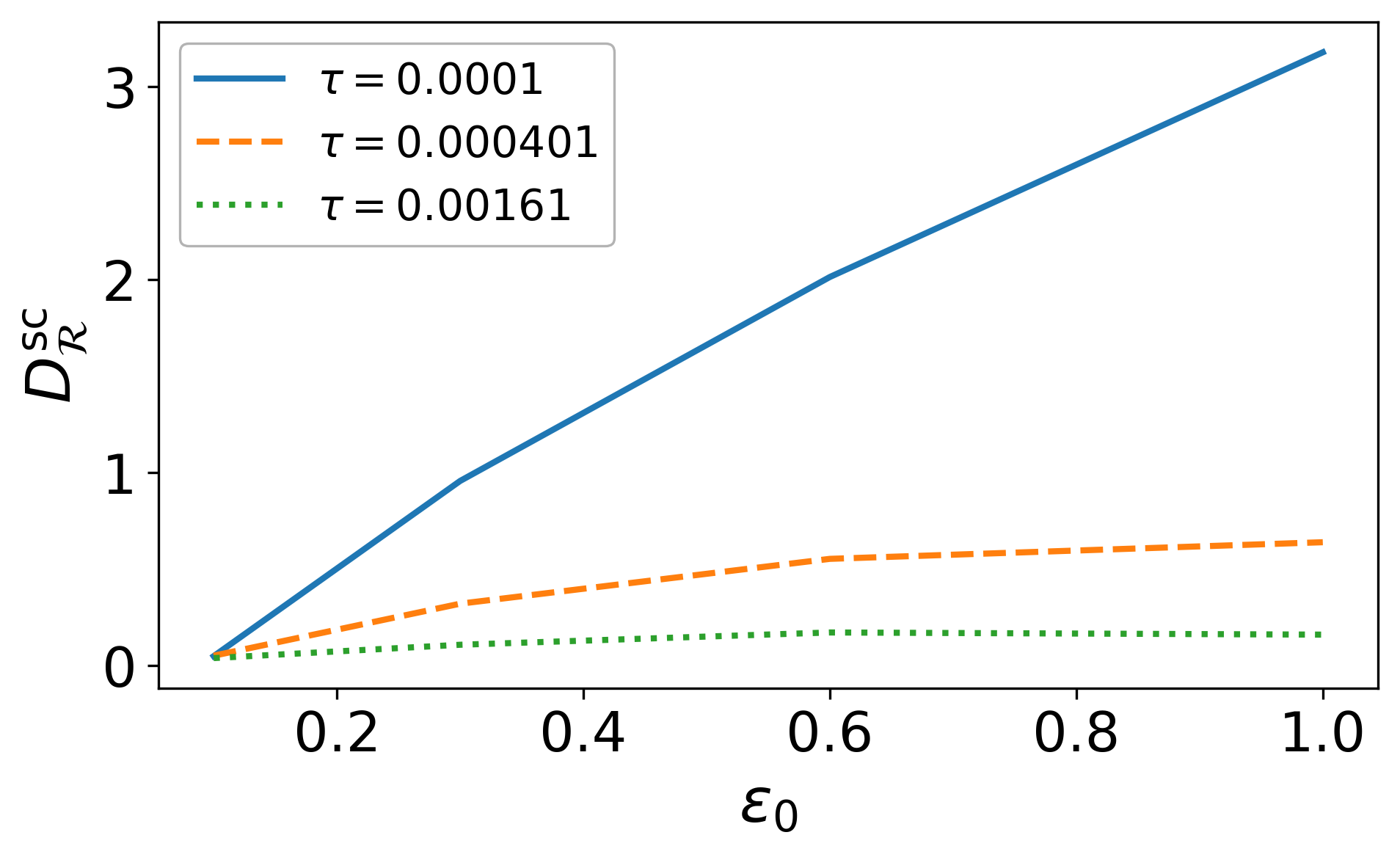}
    \caption{}
\end{subfigure}
\begin{subfigure}{0.48\linewidth}
    \centering
    \includegraphics[width=\linewidth]{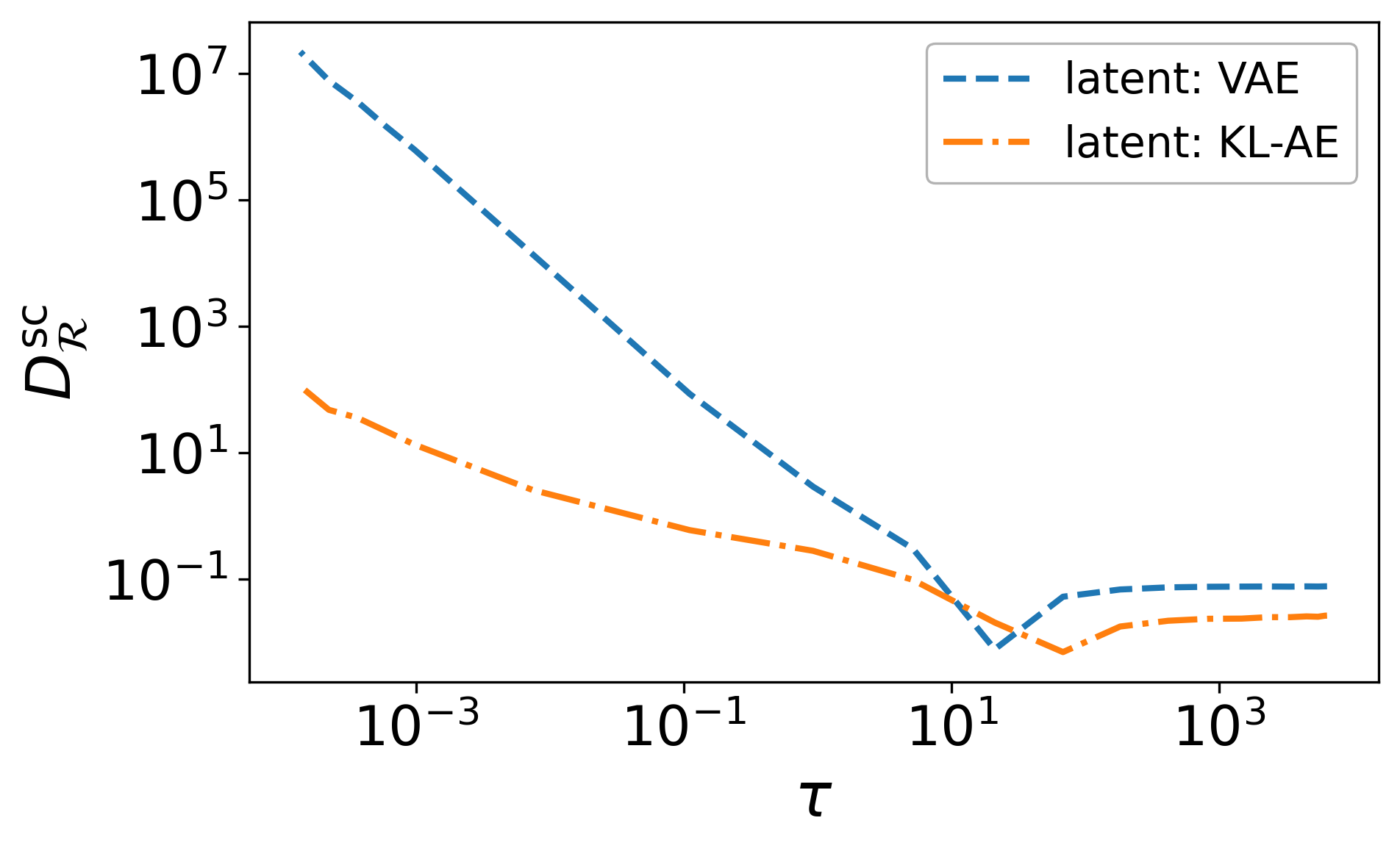}
    \caption{}
\end{subfigure}
\caption{
FIR deviation $\mathcal{D}_{\mathcal R}$ vs.~(a) $\delta_0$, (b) $d$, and (c) $\varepsilon_0$ in toy settings. 
Data $\by\sim\mathcal N(\bzero,\bI_2)$ are embedded as $\bx=(y_1, y_2,0,\ldots,0)\in\mathbb R^D$ and encoded to $\bz=E(\bx)\in\mathbb R^d$. 
We compute $\mathcal{D}_{\mathcal R}$ from 
\( \mathcal{R}^{(D)}(\mu_\tau)\) and \( \mathcal{R}^{(d)}((\mu_Z)_\tau) \)
 using diffusion models trained on $\bx$ and $\bz$. 
Curves denote fixed noise variance $\tau$. Solid lines $y=1.25\delta_0$ (a) and $y=\frac{D-d}{D-2}$ (b) serve as a linear reference. 
Encoder $E$ setups: 
(a) $D=d=2$, $E(\bx)=\bA\bx$ with $\bA=\mathrm{diag}(\sqrt{1+\delta_0},\sqrt{1-\delta_0})$.  
(b) $D=512$ with $\bz=(y_1, y_2,0,\ldots,0) \in\mathbb R^d$.  
(c) $D=d=3$ with $E((x_1,x_2,0),\varepsilon_0)=\bigl(\sin(\varepsilon_0 x_1)/\varepsilon_0,\; x_2,\; (1-\cos(\varepsilon_0 x_1))/\varepsilon_0\bigr)$. 
(d) Real data (\S\ref{sec:exp_VAE}): $\mathcal{D}_{\mathcal R}^{\mathrm{sc}}$ vs.\ $\tau$ ($D=64\times64\times3$, $m=20$, $d=256$ for VAE, $d=3\times16\times16$ for KL-AE). FFHQ images $\bx$ are encoded to latents $\bz=E(\bx)$ via VAE or KL-AE.
}
\label{fig:gaussian_FIR_deviation}
\end{figure}

\textbf{Results.} For all experiments in this section, detailed derivations of the theoretical parameters using Theorem~\ref{thm:FIR_stability_d_less_D_linear} and~\ref{thm:FIR_stability_nonlinear}, and additional plots of the \(\mathcal{D}_{\mathcal{R}}\) versus noise variance $\tau$ are provided in Appendix~\ref{app:plot_gaussian_toy}. We begin with the case \(D = d = m = 2\). The encoded data are obtained via a linear transformation \(\bz= \bA \bx\), where \(\bA = \mathrm{diag}(\sqrt{1+\delta_0}, \sqrt{1-\delta_0})\). In this setting, \(\|\bA^\top \bA - \bI\|_2 = \delta_0\), so the near-isometry condition in Theorem~\ref{thm:FIR_stability_d_less_D_linear} is satisfied with deviation parameter \(\delta=\delta_0\). 
Figure~\ref{fig:gaussian_FIR_deviation}(a) shows \(\mathcal{D}_{\mathcal{R}}\) as a function of \(\delta_0\) for fixed noise variances \(\tau\).
For each fixed \(\tau\), \(\mathcal{D}_{\mathcal{R}}\) increases approximately linearly with respect to \(\delta_0\), consistent with Theorem~\ref{thm:FIR_stability_d_less_D_linear}, which predicts \(\mathcal{D}_{\mathcal{R}}(\tau) \le C\delta_0\).
Note that we report curves at relatively small noise variances \(\tau\), as for large \(\tau\) the FIR values become very small due to heavy smoothing, making the deviation numerically unstable and less informative.

Next, we consider \(D = 512\) and varying \(d\). We take \(\bz= (y_1, y_2, 0, \ldots, 0) \in \mathbb{R}^d\). 
Here, the encoder acts as the identity on the intrinsic subspace, from which we derive that \(\delta = 0\) in Theorem~\ref{thm:FIR_stability_d_less_D_linear}. The theorem then yields the bound for scaled FIR deviation \(\mathcal{D}_{\mathcal R}^{\mathrm{sc}}\le C \frac{D-d}{D-m}\).
Figure~\ref{fig:gaussian_FIR_deviation}(b) reports \(\mathcal{D}_{\mathcal R}^{\mathrm{sc}}\) as a function of \(d\). Consistent with the bound, the curves exhibit an almost linear downward trend.

Finally, we consider \(D = d = 3\) and \(\bz= E(\bx,\varepsilon_0)\), where $E(\bx, \varepsilon_0)$ is a nonlinear embedding that wraps the 2D data manifold into a 3D cylinder of radius $1/\varepsilon_0$, defined as
$E((x_1,x_2,0),\varepsilon_0)
=
\bigl(
\sin(\varepsilon_0 x_1)/\varepsilon_0,\;
x_2,\;
(1-\cos(\varepsilon_0 x_1))/ \varepsilon_0
\bigr)$ with $\varepsilon_0>0$. 
With this construction, by direct computation, we can derive the nonlinear deviation parameter $\varepsilon$ in Theorem~\ref{thm:FIR_stability_nonlinear} as $\varepsilon = \sqrt{1.5}\varepsilon_0$.
In addition, the linear deviation parameter satisfies $\delta=0$ and the dimension-penalty term $\frac{D-d}{D-m}=0$. Applying Theorem~\ref{thm:FIR_stability_nonlinear} therefore yields $\mathcal{D}_{\mathcal{R}}^{\mathrm{sc}}(\tau) \le C{\varepsilon_0}/{\sqrt{\tau}}$. 
Figure~\ref{fig:gaussian_FIR_deviation}(c) plots the scaled FIR deviation versus \(\varepsilon_0\) for fixed noise variance \(\tau\). 
For each fixed \(\tau\), the deviation grows approximately linearly with respect to \(\varepsilon_0\), consistent with the bound we derived. The linear trend is most evident for small \(\tau\). For larger \(\tau\), the deviation values are much smaller, which may make the FIR estimation error more prominent and thus visually weaken the linear pattern.

\subsection{FI and FIR Comparison on FFHQ with VAE and KL-AE Encoding}
\label{sec:exp_VAE}

\textbf{Experimental Setup.} 
We conduct experiments on the FFHQ dataset~\cite{karras2019style}, which contains $70{,}000$ high-quality face images. 
Images are downsampled to $64\times64$. 
For latent experiments, we consider a traditional VAE and the KL-AE (KL-regularized autoencoder)~\cite{rombach2022high} trained on the same data. 
The traditional VAE uses a hybrid architecture with convolutional and fully connected layers and has latent dimension $256$, whereas the KL-AE uses a fully convolutional KL-autoencoder architecture with latent dimension $3\times16\times16$. 
We train three diffusion models: one directly on the pixel space, and two on latent representations produced by the VAE and the KL-AE, respectively.
They all use the EDM~\cite{karras2022elucidating} training framework with a convolutional U-Net architecture, and share the same model architecture and training setup across pixel-space and latent-space experiments. 
We use pretrained checkpoints for the KL-AE and pixel-space diffusion model.

\begin{figure}[t!bp]
\centering
\begin{subfigure}{0.48\linewidth}
    \centering
    \includegraphics[width=\linewidth]{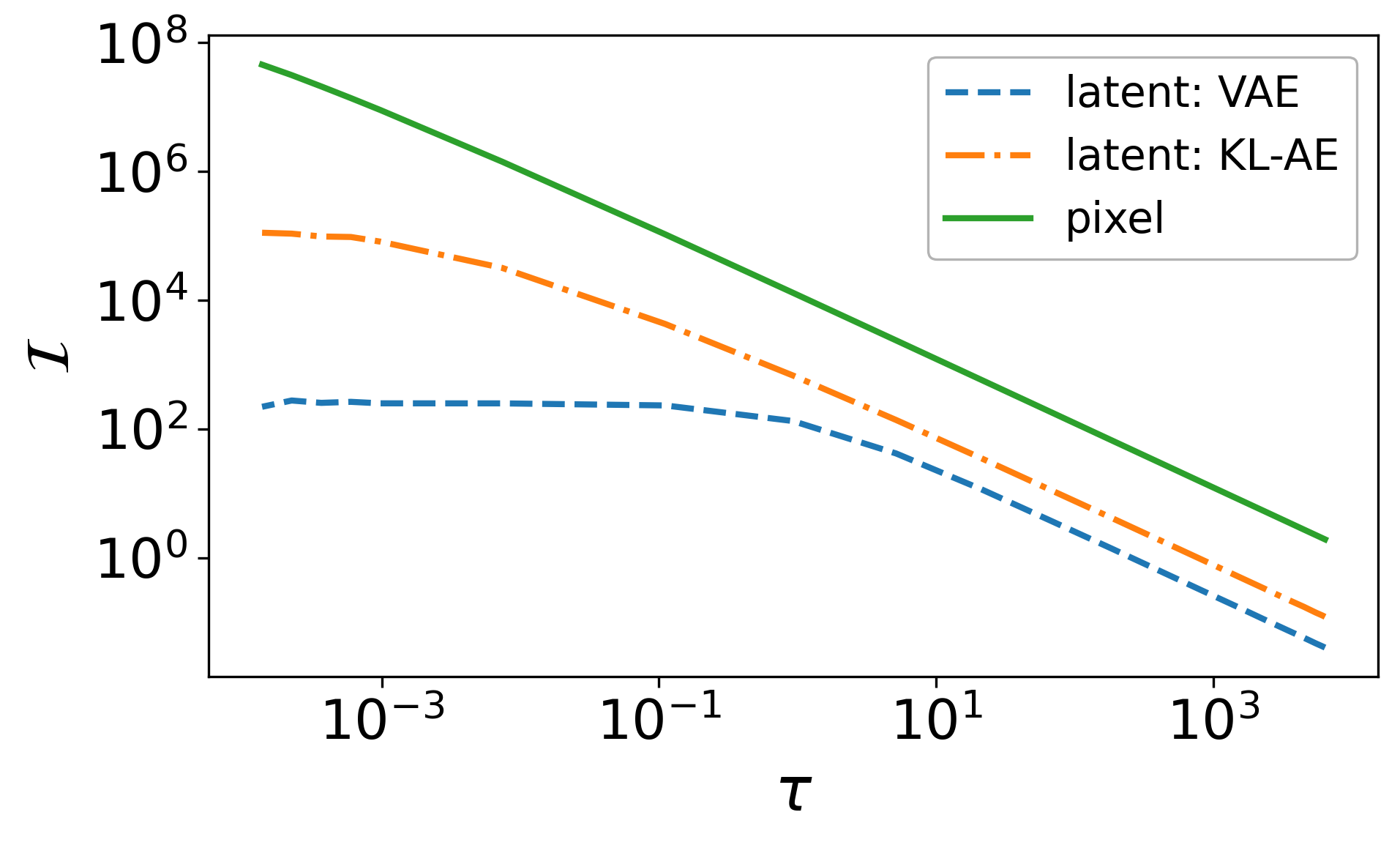}
\caption{}
\end{subfigure}
\begin{subfigure}{0.48\linewidth}
    \centering
    \includegraphics[width=\linewidth]{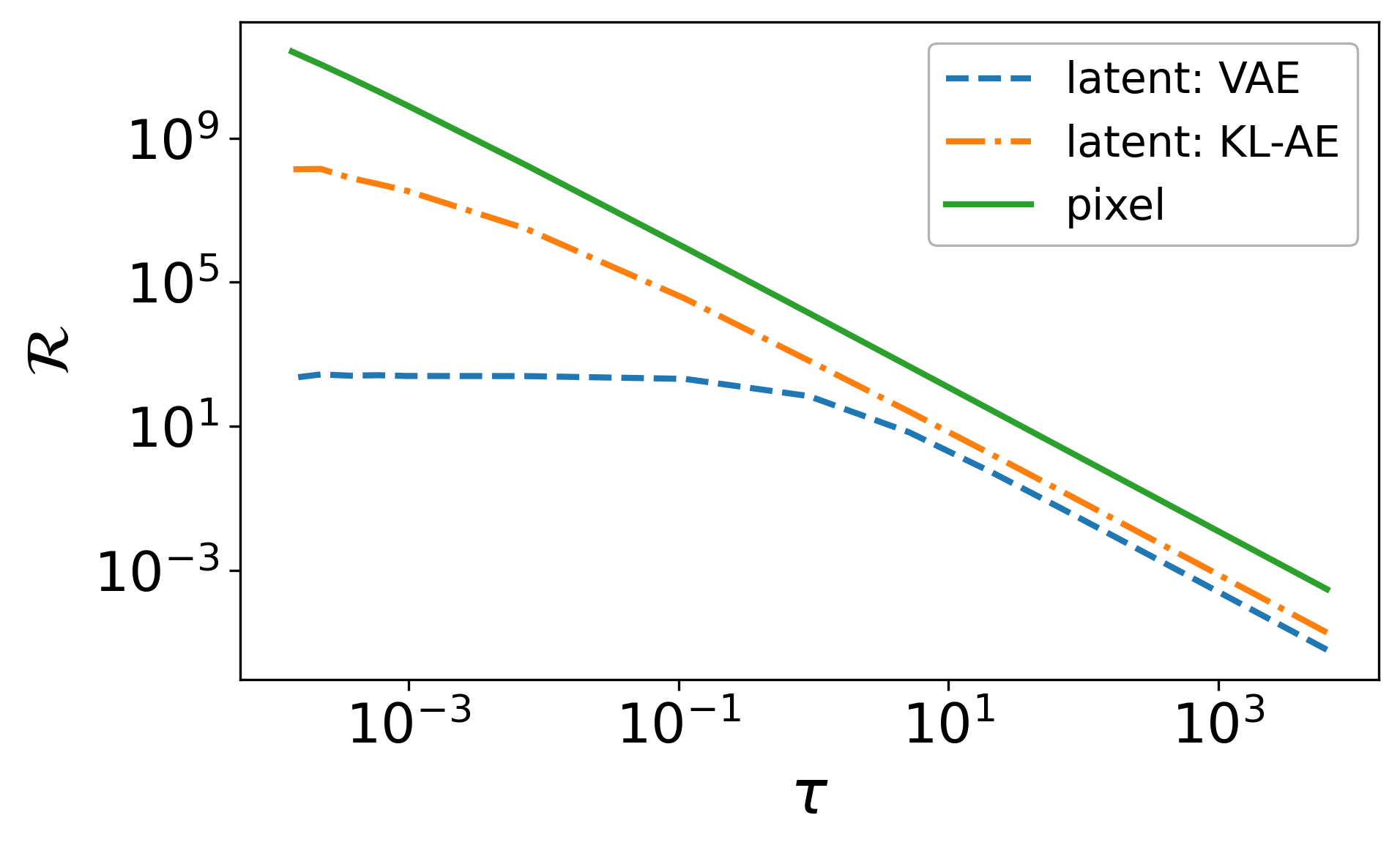}
\caption{}
\end{subfigure}
\caption{
Values of (a) $\mathcal I$ and (b) $\mathcal R$ plotted versus the noise variance $\tau$, computed from diffusion models trained on different data representations. The \emph{pixel} curves correspond to models trained directly on FFHQ images. The \emph{latent} curves correspond to models trained on latent representations of an image encoder (VAE or KL-AE) pretrained on FFHQ. We show $\sqrt{\tau} \in [0.01,80]$, excluding smaller $\tau$ due to numerical instability.
}
\label{fig:VAE_FI}
\end{figure}

\begin{figure}[t!bp]
\centering
\begin{subfigure}{0.32\linewidth}
    \centering
    \includegraphics[width=\linewidth]{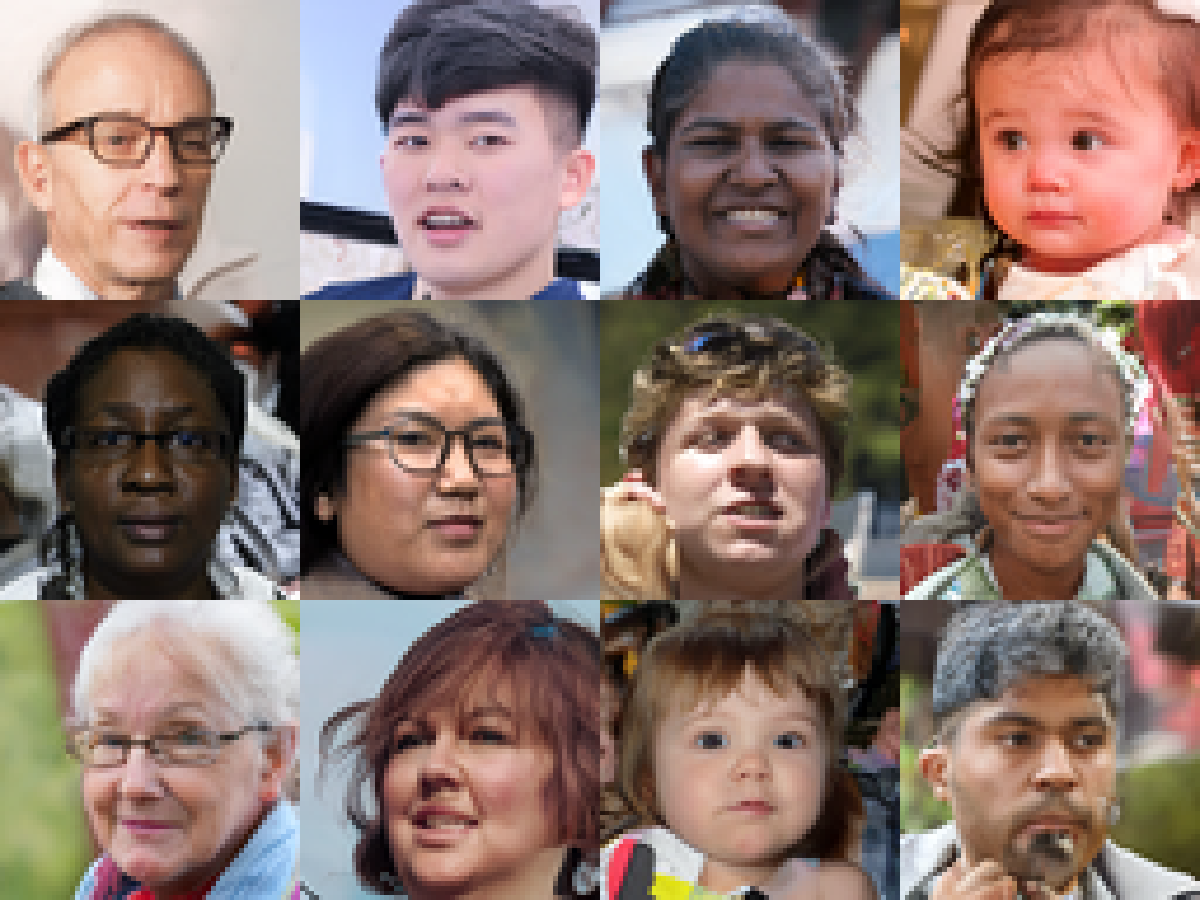}
\caption{}
\end{subfigure}
\hspace{-1.5mm}
\begin{subfigure}{0.32\linewidth}
    \centering
    \includegraphics[width=\linewidth]{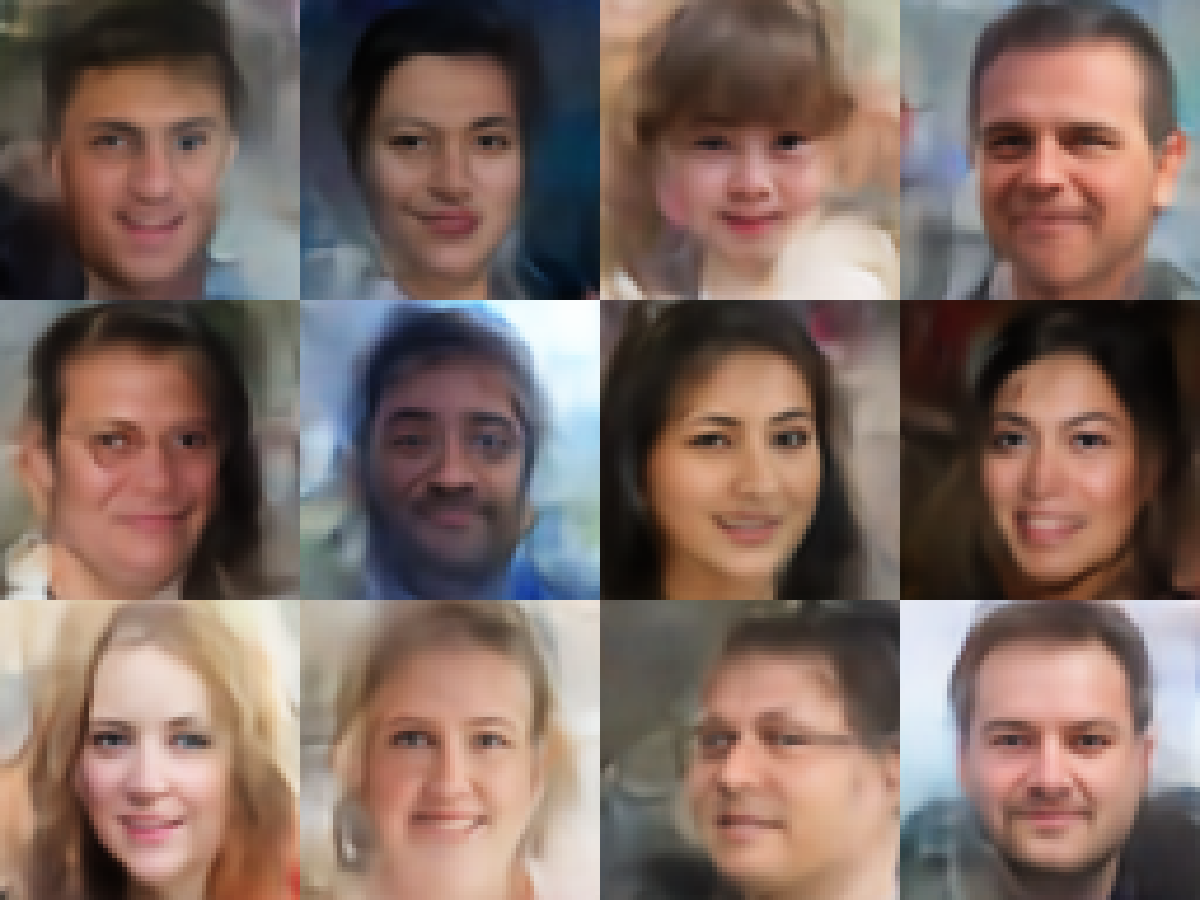}
\caption{}
\end{subfigure}
\hspace{-1.5mm}
\begin{subfigure}{0.32\linewidth}
    \centering
    \includegraphics[width=\linewidth]{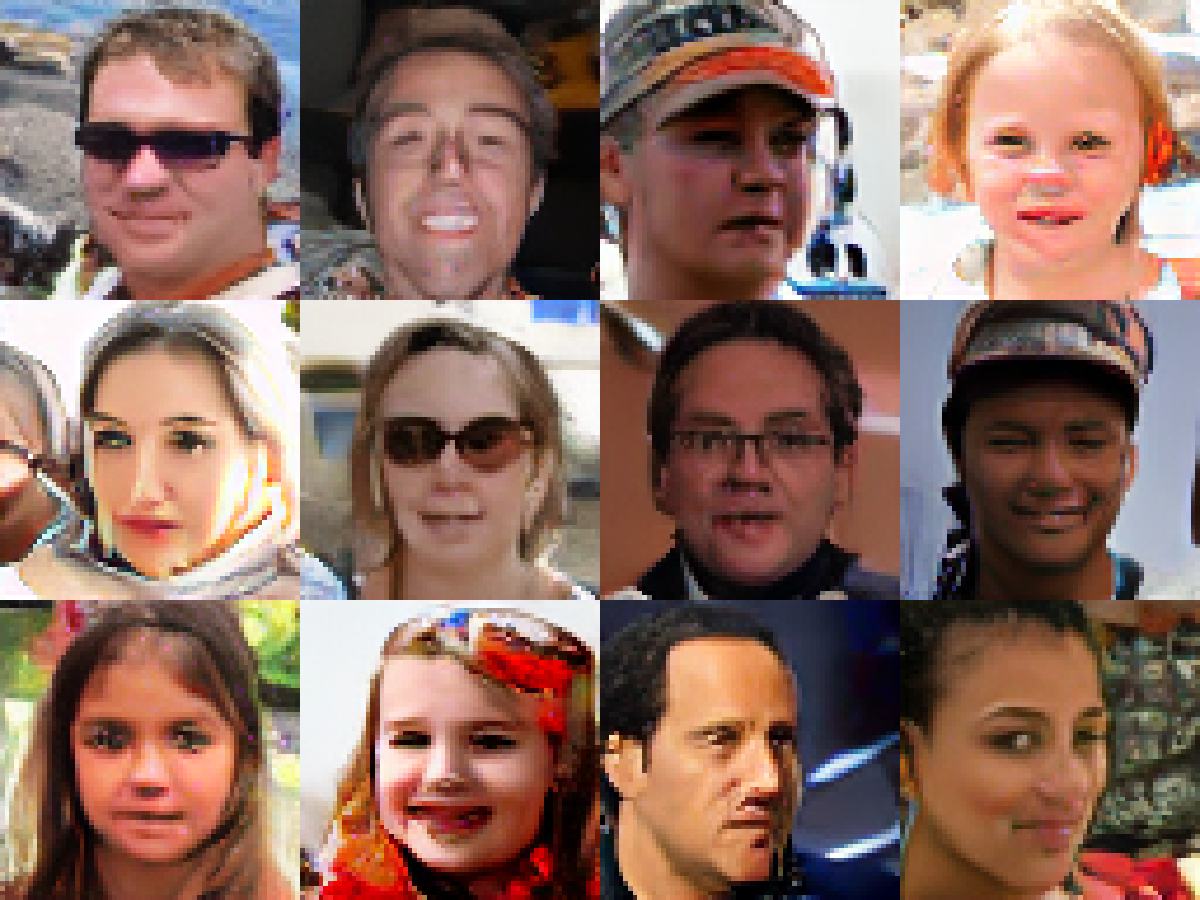}
\caption{}
\end{subfigure}
\caption{
(a) Samples from a diffusion model trained directly on $64\times64$ FFHQ. 
(b, c) Samples from latent diffusion models trained on (b) VAE and (c) KL-AE representations. 
Generated latents are mapped back to image space using their respective decoders.
}
\label{fig:generate_image}
\end{figure}

\textbf{Results.} Figure~\ref{fig:generate_image} shows representative samples generated by the three  diffusion models. 
Figure~\ref{fig:VAE_FI} compares the corresponding FI and FIR, while Figure~\ref{fig:gaussian_FIR_deviation}(d) reports the scaled FIR deviation.
For the scaling factor in \(\mathcal{D}_{\mathcal R}^{\mathrm{sc}}\), we use pixel dimension \(D=3\times64\times64\); and latent dimensions \(d=256\) for VAE and \(d=3\times16\times16\) for KL-AE. 
We set \(m=20\) for FFHQ, following intrinsic-dimension estimates~\cite{pope2021intrinsic} of approximately \(20\) for CelebA~\cite{liu2015deep}, another human-face dataset.

KL-AE latent diffusion produces higher visual quality than VAE, though it still falls short of the pixel baseline. The FI and FIR
diagnostics show the same trend: the KL-AE-based curves lie consistently closer to the pixel
baseline than those of VAE for both FI and FIR.

The improved FI alignment of KL-AE is consistent with
Proposition~\ref{prop:fisher_bi_lip}. In Appendix~\ref{app:lipschitz_c_VAE}, we estimate the
bi-Lipschitz constants $c$ and $C$ for both encoders and find that the KL-AE mapping is
substantially closer to an isometry. Proposition~\ref{prop:fisher_bi_lip} therefore predicts tighter
control of latent FI relative to the pixel-space FI, matching the observed curves.

The smaller FIR deviation of KL-AE can be interpreted through
Theorem~\ref{thm:FIR_stability_nonlinear}. KL-AE is fully convolutional and keeps a spatial latent structure of size $3\times16\times16$,
whereas the traditional VAE compresses each image into a $256$-dimensional vector. Thus, KL-AE
is likely to preserve more local image geometry and spatial neighborhood structure that lead to smaller $\delta$ and $\varepsilon$ in Theorem~\ref{thm:FIR_stability_nonlinear}. In addition, KL-AE has a lower compression rate,
so the dimension-mismatch term $(D-d)/(D-m)$ in the theorem is smaller. 
At the same time, the remaining gap between KL-AE and pixel diffusion
is also consistent with the theorem, since the encoder may still introduce compression loss, tangential
distortion, and nonlinear residuals that are absent in pixel space.

\subsection{FI and FIR Comparison on FFHQ Data with NVAE encoding}
\label{sec:exp_NVAE}

\textbf{Experimental Setup.} We use the FFHQ dataset downsampled to \(64 \times 64\) resolution. 
For pixel-space, we use the same trained diffusion model as in \S\ref{sec:exp_VAE}. We employ a pretrained NVAE~\cite{vahdat2020nvae} to study latent-space behavior. 
The NVAE encoder produces latent tensors of size \(20 \times d_{\bz} \times d_{\bz}\) at multiple spatial resolutions, and we train separate diffusion models for each \(d_{\bz}\).
All diffusion models are trained under the EDM framework using the convolutional U-Net architecture as the score network.

\begin{figure}[t!bp]
\centering
\begin{subfigure}{0.48\linewidth}
    \centering
    \includegraphics[width=\linewidth]{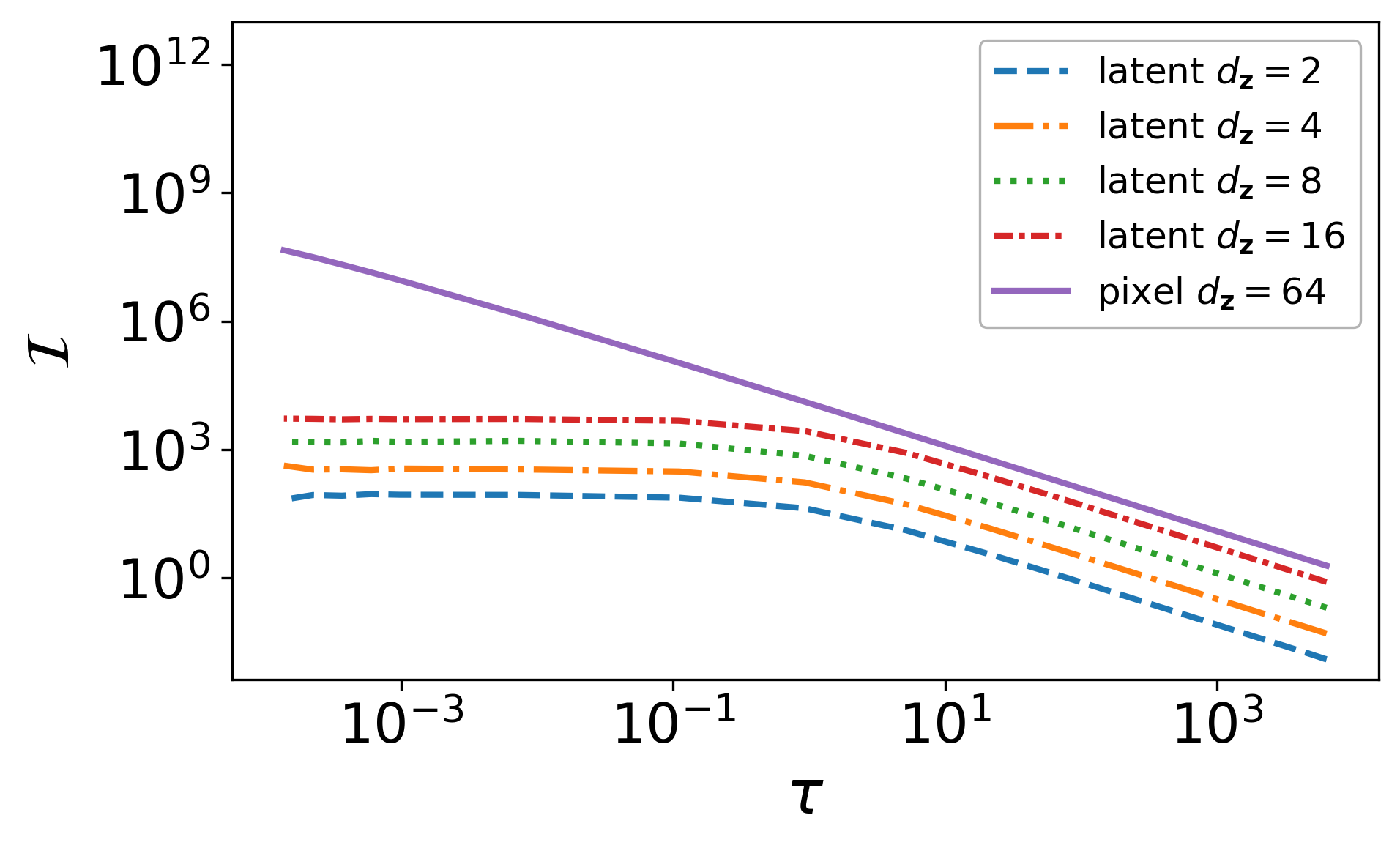}
\caption{}
\end{subfigure}
\begin{subfigure}{0.48\linewidth}
    \centering
    \includegraphics[width=\linewidth]{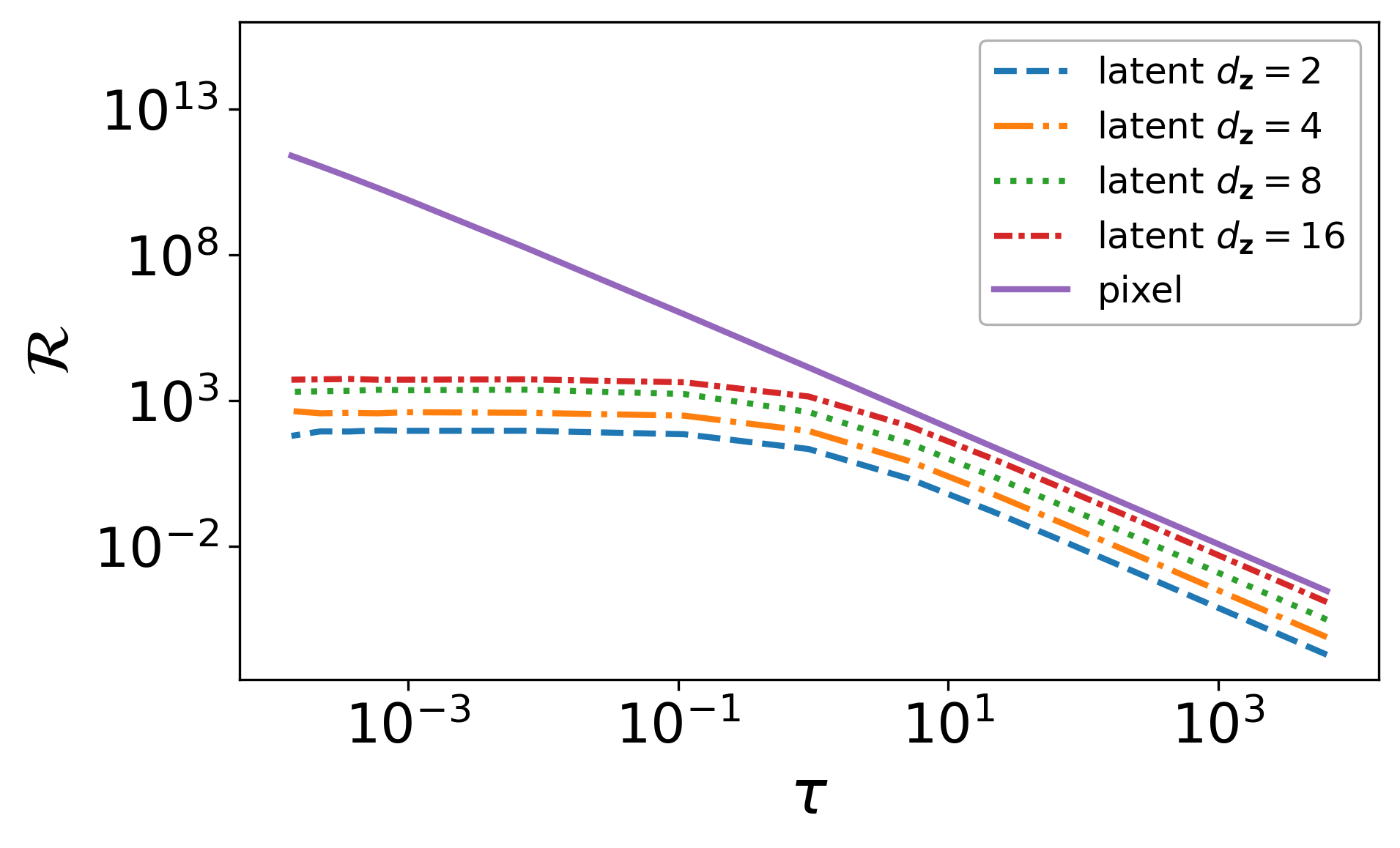}
\caption{}
\end{subfigure}
\caption{
Values of (a) $\mathcal I$ and (b) $\mathcal R$ plotted versus the noise variance $\tau$ for  diffusion models trained on different data representations. 
\emph{Pixel} and \emph{latent} curves denote models trained on FFHQ images and their pretrained NVAE latents, respectively. NVAE was pretrained on FFHQ, with spatial size $20 \times d_{\bz} \times d_{\bz}$. 
We show $\sqrt{\tau} \in [0.01,80]$, excluding smaller $\tau$ due to numerical instability.
}
\label{fig:NVAE_FI}
\end{figure}

\textbf{Results.} Figure~\ref{fig:NVAE_FI} compares FI and FIR computed using the above pixel-space and latent diffusion models. As \(d_{\bz}\) increases, the FI and FIR curves for the latent models move closer to the pixel baseline. This behavior can be explained by the hierarchical structure of NVAE. Latent variables with larger \(d_{\bz}\) lie closer to the input space and therefore preserve more of the local geometry of the data manifold, resulting in less distortion in the mapping from data space to the latent representation.

We also observe that for both FI and FIR, the pixel and latent curves remain closely aligned for large \(\tau\), while the deviation increases noticeably once \(\tau\) becomes smaller. A possible explanation is that when \(\tau\) is large, the distribution \(p_\tau\) is strongly smoothed by Gaussian noise, so the corresponding score functions become dominated by noise and the resulting FIR values carry limited geometric information about the underlying representation.

\section{Conclusions}\label{sec:conclusion}

We introduced a formal information-geometric framework to demystify the ``diffusability'' of latent spaces in generative modeling. By decomposing denoising complexity 
into FI and FIR, 
we identified geometric requirements for successful latent diffusion.  Our theoretical analysis explicitly decouples the effects of dimensional compression from intrinsic geometric distortion, revealing that latent-space degradation is primarily driven by local curvature and nonlinearity injected by the encoder. 

Specifically, we derived dual requirements for information-preserving encoders: they must be both \emph{near-isometric} to stabilize global FI and \emph{near-harmonic} to minimize the FIR-based geometric complexity penalty. Our experiments on toy Gaussian datasets and high-dimensional FFHQ images validate these metrics as a robust diagnostic suite. We showed that standard VAEs exhibit severe FIR deviations correlated with generation failure.
By providing a task-independent way to evaluate latent spaces before full diffusion training, our framework offers a principled foundation for designing more diffusible autoencoding architectures.

\FloatBarrier

\bibliography{refs}
\bibliographystyle{plain}

\newpage
\appendix

\section*{Appendix}

The appendices are organized as follows: Appendix \ref{sec:related_works} reviews related literature, while Appendix \ref{sec:proofs} contains all missing proofs and derivations. Practical estimation of FI and FIR is detailed in Appendix \ref{app: FI_estimation}, and Appendix \ref{app:exp_detail} provides comprehensive experimental setups. 
Additional mathematical derivations and numerical supplements, relevant to the toy experimental settings, are provided in Appendix \ref{app:plot_gaussian_toy}. 
Finally, Appendices \ref{app:model_performance}--\ref{app:spectra} present further numerical results and spectral analyses, relevant to the real experimental settings.

\section{Related Works}
\label{sec:related_works}
The term ``diffusability'' was first introduced by Skorokhodov et al.~\cite{skorokhodov2025diffusability} to describe the extent to which a latent space facilitates the diffusion process. The authors argued that high-frequency components in latent representations reduce diffusability, though they did not provide a rigorous quantitative measure for this effect. Subsequently, \cite{zheng2026diffusion} offered further empirical evidence demonstrating that standard representation encoders often induce latent distributions that are inherently difficult for diffusion models to fit. 

Several follow-up works have aimed to improve diffusability through specific architectural or training modifications. For instance, \cite{kouzelis2025eq} promoted diffusability by enforcing equivariance in the latent representation, while \cite{yang2026latent} targeted the corruption robustness of the latents. Taking a different angle, \cite{heek2026unified} improved diffusability by training the encoder to produce latents that already resemble mildly noised diffusion states, allowing the diffusion prior to denoise from a smoother, better-aligned distribution. Rather than modifying the latent distribution directly, \cite{baade2026latent} proposed denoising latent features early in the pipeline and using them to condition later pixel-space denoising, effectively simplifying the overall problem.

Another distinct line of work focuses on explicitly constraining the latent geometry. In \cite{yue2026image}, latent codes are normalized to lie uniformly on a hypersphere. More broadly, alternative geometric priors for autoencoders have gained traction, including the use of hyperbolic latent spaces \cite{cho2023hyperbolic,saez2023neural}.

While these prior works have successfully improved diffusability through empirical regularization and structural constraints, they largely lack a unified mathematical framework to quantify why certain representation spaces are easier to model than others. In contrast, our work provides a rigorous geometric analysis of diffusability. By explicitly bounding the Fisher Information Rate (FIR) of the encoded distributions, we establish formal guarantees on how linear encoders and geometric constraints fundamentally alter the stability of the diffusion process.

\section{Supplementary Derivations and Proofs} \label{sec:proofs}

\subsection{Proof of Proposition~\ref{prop:FIR_identity}}\label{sec:proof_FIR}

Fix $\tau > 0$. 
Let $p_\tau(\bx)$ be the smooth density of the noisy measure $\mu_\tau$. As defined in \S\ref{sec:background}, the density is given by the convolution of the clean measure $\mu$ with the Gaussian kernel 
\begin{equation}
\label{eq:gaussian_kernel}
 \phi_\tau(\by) = (2\pi\tau)^{-k/2} \exp(-\|\by\|^2 / (2\tau))   
\end{equation}
as follows:
\begin{equation}
\label{eq:density_conv_def_app}
p_\tau(\bx) = (\mu * \phi_\tau)(\bx) = \int_{\mathbb{R}^k} \phi_\tau(\bx - \bx_0) \, d\mu(\bx_0).
\end{equation}
In view of this expression, $p_\tau \in C^\infty(\mathbb{R}^k)$, $p_\tau > 0$, and all derivatives of $p_\tau$ decay at least as fast as a Gaussian. We express the Fisher Information (FI) as:
\begin{equation}
\mathcal{I}^{(k)}(\mu_\tau) = \int_{\mathbb{R}^k} p_\tau(\bx) \|\nabla_{\bx} \log p_\tau(\bx)\|^2 \, d\bx = \int_{\mathbb{R}^k} \frac{\|\nabla_{\bx} p_\tau(\bx)\|^2}{p_\tau(\bx)} \, d\bx.
\end{equation}
Using the probabilistic heat equation $\frac{\partial p_\tau}{\partial \tau} = \frac{1}{2}\Delta_{\bx} p_\tau$ from \eqref{eq:heat_and_ode}, we differentiate $\mathcal{I}^{(k)}$ with respect to $\tau$:
\begin{align}
\frac{d}{d\tau}\mathcal{I}^{(k)}(\mu_\tau) &= \int_{\mathbb{R}^k} \left( \frac{2\nabla_{\bx} p_\tau \cdot \nabla_{\bx}(\partial_\tau p_\tau)}{p_\tau} - \frac{\|\nabla_{\bx} p_\tau\|^2}{p_\tau^2} \partial_\tau p_\tau \right) d\bx \nonumber \\
&= \int_{\mathbb{R}^k} \left( \frac{\nabla_{\bx} p_\tau \cdot \nabla_{\bx}(\Delta_{\bx} p_\tau)}{p_\tau} - \frac{1}{2}\frac{\|\nabla_{\bx} p_\tau\|^2}{p_\tau^2} \Delta_{\bx} p_\tau \right) d\bx. \label{eq:FI_deriv_expanded}
\end{align}
Integrating the first term by parts (where boundary terms vanish due to Gaussian decay):
\begin{align}
\int_{\mathbb{R}^k} \frac{\nabla_{\bx} p_\tau \cdot \nabla_{\bx}(\Delta_{\bx} p_\tau)}{p_\tau} \, d\bx &= -\int_{\mathbb{R}^k} \Delta_{\bx} p_\tau \,  \! \left( \frac{\nabla_{\bx} p_\tau}{p_\tau} \right) d\bx \nonumber \\
&= -\int_{\mathbb{R}^k} \Delta_{\bx} p_\tau \left( \frac{\Delta_{\bx} p_\tau}{p_\tau} - \frac{\|\nabla_{\bx} p_\tau\|^2}{p_\tau^2} \right) d\bx.
\end{align}
Substituting this back into \eqref{eq:FI_deriv_expanded} yields:
\begin{equation}
\frac{d}{d\tau}\mathcal{I}^{(k)}(\mu_\tau) = \int_{\mathbb{R}^k} \left[ -\frac{(\Delta_{\bx} p_\tau)^2}{p_\tau} + \frac{1}{2}\Delta_{\bx} p_\tau \frac{\|\nabla_{\bx} p_\tau\|^2}{p_\tau^2} \right] d\bx.
\end{equation}
To simplify, let $f = \log p_\tau$, $A = \Delta_{\bx} f$, and $B = \|\nabla_{\bx} f\|^2$. Noting that $\frac{\Delta_{\bx} p_\tau}{p_\tau} = \Delta_{\bx} \log p_\tau + \|\nabla_{\bx} \log p_\tau\|^2 = A + B$, we have:
\begin{align}
\frac{d}{d\tau}\mathcal{I}^{(k)}(\mu_\tau) &= \int_{\mathbb{R}^k} p_\tau \left[ -(A+B)^2 + \frac{1}{2}(A+B)B \right] d\bx \nonumber \\
&= \int_{\mathbb{R}^k} p_\tau \left[ -A^2 - \frac{3}{2}AB - \frac{1}{2}B^2 \right] d\bx. \label{eq:diff_FI_expanded_app}
\end{align}
Now, let $\mathbf{H}_\tau = \nabla^2_{\bx} \log p_\tau$ be the Hessian. The Euclidean Bochner identity for $f = \log p_\tau$ states:
\begin{equation}
\frac{1}{2} \Delta_{\bx} \|\nabla_{\bx} f\|^2 = \|\nabla^2_{\bx} f\|_F^2 + \nabla_{\bx} f \cdot \nabla_{\bx} (\Delta_{\bx} f).
\end{equation}
Integrating $\frac{1}{2} \Delta_{\bx} B$ and $-\nabla_{\bx} f \cdot \nabla_{\bx} A$ against the density $p_\tau$ via integration by parts, and noting that $\divr(p_\tau \nabla_{\bx} f) = \nabla_{\bx} p_\tau \cdot \nabla_{\bx} f + p_\tau \Delta_{\bx} f = p_\tau(B+A)$, we obtain:
\begin{align*}
\int_{\mathbb{R}^k} p_\tau \frac{1}{2} \Delta_{\bx} B \, d\bx 
&= \int_{\mathbb{R}^k} \frac{1}{2} B \Delta_{\bx} p_\tau \, d\bx = \int_{\mathbb{R}^k} p_\tau \left( \frac{1}{2} AB + \frac{1}{2} B^2 \right) d\bx, \\
-\int_{\mathbb{R}^k} p_\tau \nabla_{\bx} f \cdot \nabla_{\bx} A \, d\bx &= \int_{\mathbb{R}^k} A \, \divr(p_\tau \nabla_{\bx} f) \, d\bx = \int_{\mathbb{R}^k} p_\tau (A^2 + AB) \, d\bx.
\end{align*}
Summing these yields the weighted Bochner identity: 
\begin{equation}
\mathbb{E}_{p_\tau}[\|\mathbf{H}_\tau\|_F^2] = \mathbb{E}_{p_\tau}\left[A^2 + \frac{3}{2}AB + \frac{1}{2}B^2\right].
\end{equation}
Comparing this to \eqref{eq:diff_FI_expanded_app}, we find:
\begin{equation}
\label{eq:express_deriv_I_app}
\frac{d}{d\tau}\mathcal{I}^{(k)}(\mu_\tau) = - \int_{\mathbb{R}^k} p_\tau \|\mathbf{H}_\tau\|_F^2 \, d\bx = - \mathbb{E}_{p_\tau} \left[ \mathrm{Tr}(\mathbf{H}_\tau^2) \right]. 
\end{equation}
By \eqref{eq:FI_FIR_def}, $\mathcal{R}^{(k)}(\mu_\tau) = -\frac{d}{d\tau}\mathcal{I}^{(k)}(\mu_\tau)$. Substituting the score $\bs_\tau(\bx) = \nabla_{\bx} \log p_\tau$, we have $\nabla_{\bx} \bs_\tau = \mathbf{H}_\tau$, and $\mathcal{R}^{(k)}(\mu_\tau) = \mathbb{E}_{p_\tau} [ \|\nabla_{\bx} \bs_\tau(\bx)\|_F^2 ]$, which completes the proof.


\subsection{Derivation of \eqref{eq:MMSE_and_I} and \eqref{eq:denoising_resistance}}\label{sec:appendix_prove_4_5}

In this section, we provide the detailed derivation for the I-MMSE identity and the Denoising Resistance decomposition. 
Consider a clean signal $\bx_0 \in \mathbb{R}^k$ drawn from a distribution $\mu$, and its corruption by additive white Gaussian noise $\bn \sim \mathcal{N}(\bzero, \bI_k)$, resulting in the noisy observation:
\begin{equation*}
\bx = \bx_0 + \sqrt{\tau}\bn,
\end{equation*}
where $\tau$ is the noise variance. The total variance of the corruption is $\mathbb{E}[\|\bx_0 - \bx\|^2] = \mathbb{E}[\|-\sqrt{\tau}\bn\|^2] = \tau k$. 

Let $\hat{\bx}_0(\bx) = \mathbb{E}[\bx_0 \mid \bx]$ be the MMSE estimator. We can decompose the total noise variance by adding and subtracting this estimator:
\begin{multline}
\mathbb{E}[\|\bx_0 - \bx\|^2] = \mathbb{E}[\|(\bx_0 - \hat{\bx}_0(\bx)) + (\hat{\bx}_0(\bx) - \bx)\|^2]  \\
= \mathbb{E}[\|\bx_0 - \hat{\bx}_0(\bx)\|^2] + \mathbb{E}[\|\hat{\bx}_0(\bx) - \bx\|^2] + 2\mathbb{E}[(\bx_0 - \hat{\bx}_0(\bx))^\top(\hat{\bx}_0(\bx) - \bx)]. \label{eq:var_decomp}
\end{multline}
The cross-term vanishes by the tower property of conditional expectation. Since $(\hat{\bx}_0(\bx) - \bx)$ is deterministic given $\bx$, we have:
\begin{align}
\mathbb{E}[(\bx_0 - \hat{\bx}_0(\bx))^\top(\hat{\bx}_0(\bx) - \bx)] &= \mathbb{E}_{\bx}\left[ \mathbb{E}[(\bx_0 - \hat{\bx}_0(\bx))^\top(\hat{\bx}_0(\bx) - \bx) \mid \bx] \right] \nonumber \\
&= \mathbb{E}_{\bx}\left[ (\mathbb{E}[\bx_0 \mid \bx] - \hat{\bx}_0(\bx))^\top(\hat{\bx}_0(\bx) - \bx) \right] \nonumber \\
&= \mathbb{E}_{\bx}\left[ (\hat{\bx}_0(\bx) - \hat{\bx}_0(\bx))^\top(\hat{\bx}_0(\bx) - \bx) \right] = 0.
\end{align}
Next, we invoke Tweedie's formula \cite{Efron01122011}, which relates the residual of the MMSE estimator to the score function $\bs_\tau(\bx) = \nabla_{\bx} \log p_\tau(\bx)$:
\begin{equation}
\label{eq:tweedie_app}
\hat{\bx}_0(\bx) - \bx = \tau \nabla_{\bx} \log p_\tau(\bx) = \tau \bs_\tau(\bx).
\end{equation}
Substituting \eqref{eq:tweedie_app} and the definition $\text{MMSE}^{(k)}(\mu_\tau) = \mathbb{E}[\|\bx_0 - \hat{\bx}_0(\bx)\|^2]$ into \eqref{eq:var_decomp} yields:
\begin{equation}
\tau k = \text{MMSE}^{(k)}(\mu_\tau) + \tau^2 \mathbb{E}_{p_\tau}[\|\bs_\tau(\bx)\|^2] = \text{MMSE}^{(k)}(\mu_\tau) + \tau^2 \mathcal{I}^{(k)}(\mu_\tau).
\end{equation}
Rearranging for $\text{MMSE}^{(k)}(\mu_\tau)$ directly recovers the I-MMSE identity \eqref{eq:MMSE_and_I} as established in \cite{Guo05_fisherL2}.

To derive the Denoising Resistance, we differentiate \eqref{eq:MMSE_and_I} with respect to the noise variance $\tau$ using the product rule:
\begin{equation}\label{eq:diff_step1_app}
\frac{d}{d\tau} \text{MMSE}^{(k)}(\mu_\tau) = \frac{d}{d\tau} \left( k\tau - \tau^2 \mathcal{I}^{(k)}(\mu_\tau) \right) = k - 2\tau \mathcal{I}^{(k)}(\mu_\tau) - \tau^2 \frac{d}{d\tau} \mathcal{I}^{(k)}(\mu_\tau).
\end{equation}
Substituting our definition of the Fisher Information Rate (FIR) from \eqref{eq:FI_FIR_def}, namely $\mathcal{R}^{(k)}(\mu_\tau) = -\frac{d}{d\tau}\mathcal{I}^{(k)}(\mu_\tau)$, we obtain the fundamental decomposition:
\begin{equation}
\label{eq:denoising_resistance_final}
\frac{d}{d\tau} \text{MMSE}^{(k)}(\mu_\tau) = \underbrace{k - 2\tau \mathcal{I}^{(k)}(\mu_\tau)}_{\text{Intrinsic Noise Gain}} + \underbrace{\tau^2 \mathcal{R}^{(k)}(\mu_\tau)}_{\text{Geometric Complexity Penalty}}.
\end{equation}
As noted in the main text, the \emph{Intrinsic Noise Gain} represents the baseline linear increase in estimation error due to dimensionality $k$, attenuated by the current magnitude of information $\mathcal{I}^{(k)}$. Conversely, the \emph{Geometric Complexity Penalty} represents the additional difficulty imposed by the local curvature of the density. By Proposition~\ref{prop:FIR_identity}, this term is governed by $\mathbb{E}_{p_\tau}[\|\nabla \bs_\tau\|_F^2]$, indicating that distributions with rapidly changing score functions (high curvature) exhibit a faster degradation in denoising performance as noise scales.

\subsection{Proof of Proposition \ref{prop:fisher_bi_lip}}\label{sec:proof_fisher_stability}

Let $\mathcal{M} \subset \mathbb{R}^D$ and $\mathcal{N} = E(\mathcal{M})\subset \mathbb{R}^d$ be the data and latent manifolds, respectively. Because ambient Gaussian noise pushes samples off $\mathcal{M}$, we isolate the intrinsic geometry by analyzing the orthogonally projected noisy measure $\mu_\tau^{\mathcal{M}} := \pi_{\#}\mu_\tau$. By the tubular neighborhood theorem, for sufficiently small $\tau$, this measure is supported entirely on $\mathcal{M}$ and admits a smooth density $p_\tau^{\mathcal{M}}(\bx)$ with respect to the volume form of $\mathcal{M}$. 

We evaluate the geometry of the latent space by analyzing the pushforward of this intrinsic measure, $\tilde{\mu}_\tau^{\mathcal{N}} := E_{\#} \mu_\tau^{\mathcal{M}}$, which represents the exact distribution of the intrinsically noisy encoded data, admitting density $\tilde{p}_\tau^{\mathcal{N}}(\bz)$ on $\mathcal{N}$.

Let $T_{\bx}\mathcal{M}$ and $T_{\bz}\mathcal{N}$ denote the tangent spaces at $\bx \in \mathcal{M}$ and $\bz = E(\bx) \in \mathcal{N}$. The intrinsic manifold score vectors are defined as:
\begin{equation}
    \bs_\tau(\bx) = \nabla_{\mathcal{M}} \log p_\tau^{\mathcal{M}}(\bx) \in T_{\bx}\mathcal{M}, \quad \text{and} \quad \tilde{\bs}_\tau(\bz) = \nabla_{\mathcal{N}} \log \tilde{p}_\tau^{\mathcal{N}}(\bz) \in T_{\bz}\mathcal{N}.
\end{equation}
Let $J_E(\bx): T_{\bx}\mathcal{M} \to T_{\bz}\mathcal{N}$ be the Jacobian of the encoder restricted to the tangent space. Because $E$ acts as a diffeomorphism from $\mathcal{M}$ onto its image $\mathcal{N}$, the Area Formula rigorously establishes the exact relationship between these intrinsic densities:
\begin{equation}
    \tilde{p}_\tau^{\mathcal{N}}(\bz) = \frac{p_\tau^{\mathcal{M}}(\bx)}{\sqrt{\det(J_E(\bx)^\top J_E(\bx))}}.
\end{equation}
Taking the manifold gradient of the logarithm of this exact equality, the chain rule dictates that the score vectors transform strictly via the adjoint Jacobian operator $J_E(\bx)^\top: T_{\bz}\mathcal{N} \to T_{\bx}\mathcal{M}$:
\begin{equation*}
\bs_\tau(\bx) = J_E(\bx)^\top \tilde{\bs}_\tau(\bz)
+ \nabla_{\mathcal{M}} \log \sqrt{\det(J_E(\bx)^\top J_E(\bx))}.
\end{equation*}
By the triangle inequality and the definition $\tilde{\varepsilon} := \sup_{\bx\in \mathcal{M}} \|\nabla_{\mathcal{M}} \log \sqrt{\det(J_E(\bx)^\top J_E(\bx))} \|$, we have
\begin{equation}\label{eq:score_transform_app}
\|J_E(\bx)^\top \tilde{\bs}_\tau(\bz)\|
- \tilde{\varepsilon} \le \|\bs_\tau(\bx)\| \le \|J_E(\bx)^\top \tilde{\bs}_\tau(\bz)\|
+ \tilde{\varepsilon}.    
\end{equation}
The bi-Lipschitz condition on $E$ ensures that for any tangent vector $\mathbf{v} \in T_{\bx}\mathcal{M}$, the singular values of $J_E(\bx)$ are bounded within $[c, C]$. This implies that for the adjoint operator:
\begin{equation}\label{eq:s_bi_lip}
    \frac{1}{C}\|J_E(\bx)^\top \tilde{\bs}_\tau(\bz)\| \le \|\tilde{\bs}_\tau(\bz)\| \le \frac{1}{c} \|J_E(\bx)^\top \tilde{\bs}_\tau(\bz)\|.
\end{equation}
Combining \eqref{eq:score_transform_app} and \eqref{eq:s_bi_lip} and using the inequality $(a+b)^2 \leq (1+\delta) a^2 + (\delta^{-1} + 1) b^2$ for any $a,b,\delta >0$, we obtain
\begin{equation*}
    \frac{1-\delta}{C^2} \|\bs_\tau(\bx)\|^2 -\frac{(\delta^{-1}-1)\tilde{\varepsilon}^2}{C^2}
    \le \|\tilde{\bs}_\tau(\bz)\|^2 
    \le \frac{1+\delta}{c^2} \|\bs_\tau(\bx)\|^2 + \frac{(\delta^{-1}+1)\tilde{\varepsilon}^2}{c^2}. 
\end{equation*}
Integrating with respect to $\mu_\tau^{\mathcal{M}}$, we apply the definition of the pushforward measure, which yields the exact identity $\mathbb{E}_{p_\tau^{\mathcal{M}}}[f(E(\bx))] = \mathbb{E}_{\tilde{p}_\tau^{\mathcal{N}}}[f(\bz)]$ for any integrable function $f$. Applying this to our inequality:
\begin{equation}
    \frac{1-\delta}{C^2} \mathcal{I}_{\mathcal{M}} -\frac{(\delta^{-1}-1)\tilde{\varepsilon}^2}{C^2}
    \le \mathcal{I}_{\mathcal{N}}
    \le \frac{1+\delta}{c^2} \mathcal{I}_{\mathcal{M}} + \frac{(\delta^{-1}+1)\tilde{\varepsilon}^2}{c^2}. 
\end{equation}
This confirms that if $c \le 1 \le C$, the intrinsic latent information of the encoded process is rigorously bounded by the intrinsic data information.
\qed

\subsection{Formulation of an Immediate Corollary of Proposition \ref{prop:fisher_bi_lip}}\label{sec:cor_fisher_stability}

\begin{corollary}[Dimensional Compression Penalty]
\label{cor:dim_compression}
For an ambient measure $\eta$ in a $k$-dimensional space, let the total Intrinsic Noise Gain be denoted by $G^{(k)}(\eta_\tau) = k - 2\tau\mathcal{I}^{(k)}(\eta_\tau)$. Because the ambient score decomposes orthogonally into intrinsic manifold and normal components, the ambient FI behaves as $\mathcal{I}^{(k)}(\eta_\tau) \approx \mathcal{I}_{\text{intrinsic}} + \frac{k-m}{\tau}$ for small $\tau$. 
Under the condition of perfect manifold isometry ($\mathcal{I}_{\mathcal{N}} = \mathcal{I}_{\mathcal{M}}$) from Proposition~\ref{prop:fisher_bi_lip}, the latent space preserves the data's intrinsic geometry but sheds $D - d$ normal noise dimensions. Let $\mu_{Z,\tau}$ be the ambient diffused latent measure. Substituting the orthogonal decomposition into the Intrinsic Noise Gain, the baseline denoising resistance shifts by a strict constant:
\begin{equation}
\Delta G(\tau) = G^{(d)}(\mu_{Z,\tau}) - G^{(D)}(\mu_\tau) \approx (-d + 2m - 2\tau\mathcal{I}_{\mathcal{N}}) - (-D + 2m - 2\tau\mathcal{I}_{\mathcal{M}}) = D - d.
\end{equation}
\end{corollary}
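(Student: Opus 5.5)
The plan is to establish the approximate decomposition $\mathcal{I}^{(k)}(\eta_\tau) \approx \mathcal{I}_{\text{intrinsic}} + \frac{k-m}{\tau}$ for a measure supported on an $m$-dimensional manifold in $\mathbb{R}^k$. Once that is in hand, we substitute it into $G^{(k)}$ for $k=D$ (data) and $k=d$ (latent) and subtract.

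For the decomposition we work inside the tubular neighborhood, which is valid for small $\tau$ as in the proof of Proposition~\ref{prop:fisher_bi_lip}. There we write $\bx = \pi(\bx) + \bw$ with $\bw$ in the normal space $N_{\pi(\bx)}\mathcal{M}$. To leading order in $\sqrt{\tau}/\reach(\mathcal{M})$, the density $p_\tau(\bx)$ factorizes as $p^{\mathcal{M}}_\tau(\pi(\bx))\,\phi^{(k-m)}_\tau(\bw)$, where $\phi^{(k-m)}_\tau$ is the $(k-m)$-dimensional Gaussian of \eqref{eq:gaussian_kernel}.

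The score then splits orthogonally. Its tangential part is $\nabla_{\mathcal{M}}\log p^{\mathcal{M}}_\tau$, and its normal part is $-\bw/\tau$. Taking expectations, the cross term vanishes by orthogonality. Since $\mathbb{E}\|\bw\|^2 = (k-m)\tau$, the normal term contributes $(k-m)\tau/\tau^2 = (k-m)/\tau$. This gives $\mathcal{I}^{(k)} \approx \mathcal{I}_{\text{intrinsic}} + (k-m)/\tau$.

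Next we apply this twice. For the data measure in $\mathbb{R}^D$ we get $\mathcal{I}_{\mathcal{M}} + (D-m)/\tau$. For the latent measure $\mu_{Z}$ in $\mathbb{R}^d$, supported on $\mathcal{N}=E(\mathcal{M})$, which is $m$-dimensional because $E$ is an immersion by the lower bound $c>0$, we get $\mathcal{I}_{\mathcal{N}} + (d-m)/\tau$. Plugging in yields
\begin{equation*}
G^{(D)} \approx D - 2\tau\mathcal{I}_{\mathcal{M}} - 2(D-m) = -D + 2m - 2\tau\mathcal{I}_{\mathcal{M}},
\end{equation*}
and similarly $G^{(d)} \approx -d + 2m - 2\tau\mathcal{I}_{\mathcal{N}}$. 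Imposing the isometric case of Proposition~\ref{prop:fisher_bi_lip}, namely $c=C=1$ and $\tilde{\varepsilon}=0$ so that $\mathcal{I}_{\mathcal{N}}=\mathcal{I}_{\mathcal{M}}$, the intrinsic terms cancel and $\Delta G = D-d$.

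The main obstacle is making the ``$\approx$'' honest. Three things have to be identified: the intrinsic term of the ambient score with the FI of the projected measure $\pi_\#\mu_\tau$, the neglected curvature corrections, and the small-$\tau$ mixing of different manifold points. Since the corollary is stated only to leading order, I would treat each of these as an error of relative order $O(\sqrt{\tau}/\reach)$ and not track it explicitly. If pressed, I would bound the curvature-induced Jacobian of the normal coordinates, which is $I + O(\|\bw\|/\reach)$, to control the normal-term deviation from $(k-m)/\tau$.
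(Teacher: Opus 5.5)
Your proposal is correct and follows the paper's route. The paper justifies the corollary only by asserting, in one line, that the ambient score splits orthogonally into an intrinsic part and a normal part, so that $\mathcal{I}^{(k)}(\eta_\tau)\approx\mathcal{I}_{\text{intrinsic}}+\frac{k-m}{\tau}$; it then substitutes this into $G^{(D)}$ and $G^{(d)}$ and cancels the intrinsic terms via $\mathcal{I}_{\mathcal{N}}=\mathcal{I}_{\mathcal{M}}$. Your tubular-neighborhood factorization, the normal score $-\bw/\tau$ with $\mathbb{E}\|\bw\|^2=(k-m)\tau$, and the resulting $G^{(k)}\approx -k+2m-2\tau\mathcal{I}_{\text{intrinsic}}$ make that sketch explicit, and they are exact in the flat setting of Lemma~\ref{lem:normal_term}.
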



\subsection{Score Jacobian and Curvature Injection}\label{app:curvature_injection}

When mapping a diffusion process from a latent space to the data space, the score transformation includes a volume distortion term, $\nabla_{\bx} \log \sqrt{\det(J_E(\bx)^\top J_E(\bx))}$. As Proposition~\ref{prop:fisher_bi_lip} establishes, its magnitude is bounded by $\tilde{\varepsilon}$ and is negligible under near-isometry. Omitting this term, the scores relate via $\bs_\tau(\bx) \approx J_E(\bx)^\top \tilde{\bs}_\tau(\bz)$. 

Applying the chain rule, the data score Jacobian $\nabla_{\bx}\bs_\tau(\bx)$ relates to the latent score Jacobian $\nabla_{\bz}\tilde{\bs}_\tau(\bz)$ via:
\begin{equation}
\nabla_{\bx}\bs_\tau(\bx) \approx \underbrace{J_E(\bx)^\top \nabla_{\bz}\tilde{\bs}_\tau(\bz) J_E(\bx)}_{\text{Inherited Curvature}} + \underbrace{\sum_{i=1}^{d} [\tilde{\bs}_\tau(\bz)]_i \nabla^2_{\bx} E_i(\bx)}_{\text{Curvature Injection}}.
\end{equation}
This decomposition formally reveals that even if the latent score Jacobian is completely flat ($\nabla_{\bz}\tilde{\bs}_\tau(\bz) = \bzero$), a nonlinear encoder injects artificial curvature directly through its own second derivatives, $\nabla^2_{\bx} E_i(\bx)$, which in turn inflates the Fisher Information Rate.

\subsection{Normal-Direction Decomposition for Subspace Measures}

The following lemma establishes the ``Dimensional Compression Penalty'' by showing that normal noise directions contribute a universal baseline to the FIR. This term depends purely on the noise scale $\tau$ and the codimension $D-m$, but is constant across the manifold and independent of the data distribution. It is used to prove \Cref{thm:FIR_stability_d_less_D_linear} in Appendix \ref{app:proof_thm_1}.

\begin{lemma}[Exact normal-direction contribution]
\label{lem:normal_term}
Let $1\le m\le D$ and identify $\mathbb R^D\cong \mathbb R^m\times\mathbb R^{D-m}$.
Let $\rho$ be a Borel probability measure on $\mathbb R^m$, and define a measure
$\mu$ on $\mathbb R^D$ by $\mu := \rho\otimes \delta_{\bzero}$. Recalling that $\phi_\tau$ denotes the Gaussian kernel defined in \eqref{eq:gaussian_kernel}, we note that for every variance $\tau>0$, the smoothed density $\mu_\tau$ factorizes as:
\[
\mu_\tau(\by,\bw) = (\rho*\phi_\tau^{(m)})(\by)\,\phi_\tau^{(D-m)}(\bw), \qquad (\by,\bw)\in\mathbb R^m\times\mathbb R^{D-m},
\]
and the Fisher Information Rate satisfies the exact splitting:
\[
\mathcal{R}^{(D)}(\mu_\tau) = \mathcal{R}^{(m)}(\rho_\tau) + \frac{D-m}{\tau^2}.
\]
\end{lemma}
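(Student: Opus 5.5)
The plan is to use the product structure directly. First, the factorization: since $\mu=\rho\otimes\delta_{\bzero}$ and the isotropic Gaussian kernel on $\mathbb R^D$ factorizes as $\phi_\tau^{(D)}(\by,\bw)=\phi_\tau^{(m)}(\by)\,\phi_\tau^{(D-m)}(\bw)$, the convolution \eqref{eq:density_conv_def_app} gives
\[
p_\tau(\by,\bw)=\int \phi_\tau^{(m)}(\by-\by_0)\,\phi_\tau^{(D-m)}(\bw-\bzero)\,d\rho(\by_0)=(\rho*\phi_\tau^{(m)})(\by)\,\phi_\tau^{(D-m)}(\bw).
\]
This is the stated factorization, with $\rho_\tau:=\rho*\phi_\tau^{(m)}$ having density $q_\tau$. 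Taking logarithms, $\log p_\tau(\by,\bw)=\log q_\tau(\by)-\|\bw\|^2/(2\tau)+\text{const}$. The two variables therefore decouple additively.

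Next, the Hessian. Because $\log p_\tau$ is additively separable, $\mathbf H_\tau$ is block diagonal, with blocks $\nabla_{\by}^2\log q_\tau(\by)$ and $-\tau^{-1}\bI_{D-m}$; the cross derivatives vanish. Hence
\[
\|\mathbf H_\tau(\by,\bw)\|_F^2=\|\nabla_{\by}^2\log q_\tau(\by)\|_F^2+\frac{D-m}{\tau^2}.
\]

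Then I take expectations under $p_\tau$, using Proposition~\ref{prop:FIR_identity} in both dimensions. The first term depends only on $\by$, and the $\by$-marginal of $p_\tau$ is $q_\tau$. Its expectation is therefore $\mathbb E_{q_\tau}\|\nabla^2\log q_\tau\|_F^2$, which equals $\mathcal R^{(m)}(\rho_\tau)$ by Proposition~\ref{prop:FIR_identity} applied on $\mathbb R^m$. The second term is constant and integrates to $(D-m)/\tau^2$ against a probability density. Combined with Proposition~\ref{prop:FIR_identity} on $\mathbb R^D$, this yields the splitting. When $m=D$ the normal factor is absent and the identity is trivial.

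As a cross-check that avoids the Hessian representation, one can use the definition $\mathcal R=-\frac{d}{d\tau}\mathcal I$. Fisher information is additive over independent product factors, and the Gaussian factor contributes $\mathcal I=(D-m)/\tau$, whose negative $\tau$-derivative is $(D-m)/\tau^2$.

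There is essentially no hard step. The only point needing care is integrability: $\mathbb E_{q_\tau}\|\nabla^2\log q_\tau\|_F^2<\infty$ is needed to justify splitting the expectation. This is already implicit in Proposition~\ref{prop:FIR_identity}, via the Gaussian decay argument, so I would simply cite it.
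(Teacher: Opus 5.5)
Your proposal is correct and follows the paper's own proof: factorize $\mu_\tau$ through the product Gaussian kernel, note that $\nabla^2\log\mu_\tau$ is block diagonal with normal block $-\tau^{-1}\bI_{D-m}$, split the Frobenius norm, and integrate using Proposition~\ref{prop:FIR_identity} and Fubini. Your extra cross-check, via additivity of Fisher information over independent factors with the Gaussian factor contributing $(D-m)/\tau$, is a valid sanity check but is not needed for the proof.
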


\begin{proof}
Fix $\tau>0$. By the definition of convolution and the product structure of the standard Gaussian kernel, we have $\mu_\tau(\by,\bw) = \int_{\mathbb R^m} \phi_\tau^{(m)}(\by-\by')\,\phi_\tau^{(D-m)}(\bw)\,\rho(d\by')$. This gives the claimed factorization.
Using $\log\mu_\tau(\by,\bw) = \log(\rho*\phi_\tau^{(m)})(\by) + \log\phi_\tau^{(D-m)}(\bw)$, the Hessian $\nabla^2\log\mu_\tau$ is block diagonal:
\[
\nabla^2_{(\by,\bw)}\log\mu_\tau = \text{diag}\left(\nabla_{\by}^2\log(\rho*\phi_\tau^{(m)})(\by), \nabla_{\bw}^2\log\phi_\tau^{(D-m)}(\bw)\right).
\]
Consequently, $\|\nabla^2\log\mu_\tau\|_F^2 = \|\nabla_{\by}^2\log\rho_\tau\|_F^2 + \|\nabla_{\bw}^2\log\phi_\tau^{(D-m)}\|_F^2$. 
Applying the identity from \Cref{prop:FIR_identity} and using Fubini's theorem to integrate over $\mathbb{R}^m \times \mathbb{R}^{D-m}$:
\[
\mathcal{R}^{(D)}(\mu_\tau) = \mathcal{R}^{(m)}(\rho_\tau) + \int_{\mathbb R^{D-m}} \phi_\tau^{(D-m)}(\bw) \|\nabla_{\bw}^2\log\phi_\tau^{(D-m)}(\bw)\|_F^2 d\bw.
\]
Since $\nabla_{\bw}^2\log\phi_\tau^{(D-m)}(\bw)=-\frac{1}{\tau}\bI_{D-m}$, the second term evaluates exactly to $\frac{D-m}{\tau^2}$, noting $\int \phi_\tau = 1$.
\end{proof}

\subsection{Proof of \Cref{thm:FIR_stability_d_less_D_linear} (Linear Stability)}
\label{app:proof_thm_1}

By Lemma~\ref{lem:normal_term}, $\mathcal{R}^{(D)}(\mu_\tau) = \mathcal{R}^{(m)}(\rho_\tau) + \frac{D-m}{\tau^2}$. Similarly, $\mu_Z$ is supported on an $m$-plane in $\mathbb{R}^d$ with tangential coordinate law $\bA_\#\rho$. Thus, 
$$\mathcal{R}^{(d)}(\mu_{Z,\tau}) = \mathcal{R}^{(m)}((\bA_\#\rho)_\tau) + \frac{d-m}{\tau^2}.$$
The difference is:
\begin{equation}\label{eq:diff_linear}
\mathcal{R}^{(D)}(\mu_\tau) - \mathcal{R}^{(d)}(\mu_{Z,\tau}) = \left(\mathcal{R}^{(m)}(\rho_\tau) - \mathcal{R}^{(m)}((\bA_\#\rho)_\tau)\right) + \frac{D-d}{\tau^2}.
\end{equation}
To bound the tangential distortion, let $\nu = \bA_\# \rho$. Using polar decomposition $\bA = \mathbf{R}\mathbf{S}$ where $\mathbf{R}$ is orthogonal and $\mathbf{S} = (\bA^\top \bA)^{1/2}$, we note that $\mathbf{R}$ preserves FIR. Under the near-isometry condition $\|\bA^\top \bA - \bI_m\|_2 \le \delta$, the eigenvalues of $\mathbf{S}$ lie in $[\sqrt{1-\delta}, \sqrt{1+\delta}]$. In particular, $\|\mathbf{S}\|_2 \le \sqrt{1+\delta}$ and $\|\mathbf{S}^{-1}\|_2 \le 1/\sqrt{1-\delta}$.

To analyze the scaling distortion, we define a surrogate measure $\tilde{\nu}_\tau := \mathbf{S}_\#(\rho_\tau)$ by pushing forward the exactly diffused base measure. By coordinate transformation and the chain rule on this surrogate:
\[
\nabla_{\by}^2\log\tilde{\nu}_\tau(\by) = \mathbf{S}^{-\top}\bigl(\nabla_{\bx}^2\log \rho_\tau(\bx)\bigr)\,\mathbf{S}^{-1}\Big|_{\bx=\mathbf{S}^{-1}\by}.
\]
We use the elementary bound $\|\mathbf{M} \mathbf{H}\|_F \le \|\mathbf{M}\|_2 \|\mathbf{H}\|_F$. For any $m \times m$ matrix $\mathbf{H}$, this gives:
\begin{equation}
\label{eq:norm_inequalities}
\|\mathbf{S}^{-\top} \mathbf{H} \mathbf{S}^{-1}\|_F \le \|\mathbf{S}^{-1}\|_2^2\,\|\mathbf{H}\|_F, \qquad \|\mathbf{S}^{-\top} \mathbf{H} \mathbf{S}^{-1}\|_F \ge \|\mathbf{S}\|_2^{-2}\,\|\mathbf{H}\|_F.
\end{equation}
Using these operator norm bounds, the FIR of the surrogate satisfies $(1+\delta)^{-(\tfrac{m}{2}+2)}\,\mathcal{R}^{(m)}(\rho_\tau) \le \mathcal{R}^{(m)}(\tilde{\nu}_\tau) \le (1+\delta)^{\tfrac{m}{2}+2}\,\mathcal{R}^{(m)}(\rho_\tau)$.

Crucially, the surrogate $\tilde{\nu}_\tau$ corresponds mathematically to adding anisotropic noise $\mathcal{N}(\bzero, \tau \mathbf{S}^2)$ to the clean pushforward data $\nu$, whereas the true target measure $\nu_\tau$ is formed by adding strictly isotropic noise $\mathcal{N}(\bzero, \tau \bI_m)$. Because the near-isometry condition ensures $\|\mathbf{S}^2 - \bI_m\|_2 \le \delta$, the noise covariance matrices differ by at most $\tau \delta$. By the regularity of the Gaussian heat kernel, the log-density Hessian is stable to $\mathcal{O}(\delta)$ perturbations in the noise covariance, bounding the deviation between the true and surrogate FIRs by $|\mathcal{R}^{(m)}(\nu_\tau) - \mathcal{R}^{(m)}(\tilde{\nu}_\tau)| \le C' \delta \mathcal{R}^{(m)}(\rho_\tau)$. 

Applying a Taylor expansion for $\delta \le 1$ to the surrogate bounds and integrating this covariance perturbation error, we establish the total tangential bound $|\mathcal{R}^{(m)}(\nu_\tau) - \mathcal{R}^{(m)}(\rho_\tau)| \le C\,\delta\,\mathcal{R}^{(m)}(\rho_\tau)$. Substituting this back into \eqref{eq:diff_linear} and dividing by $\mathcal{R}^{(d)}(\mu_{Z,\tau})$ yields the stated bounds for $\mathcal{D}_{\mathcal{R}}$ and $\mathcal{D}_{\mathcal{R}}^{\mathrm{sc}}$.
\qed

\subsection{Proof of Theorem~\ref{thm:FIR_stability_nonlinear} (Nonlinear Stability)}

Fix $\tau > 0$. We evaluate the global Fisher Information Rate (FIR) deviation by bounding the expected squared Euclidean error between the true nonlinear score function and its localized affine approximations. 

Let $p_\tau(\bz)$ denote the density of the noisy latent variable $\bz = E(\bx) + \sqrt{\tau}\boldsymbol{\epsilon}$, where $\bx \sim \mu$ and $\boldsymbol{\epsilon} \sim \mathcal{N}(\bzero, \bI_d)$. By the conditional formulation of the score function (Tweedie's formula), the score can be expressed as an expectation over the clean data posterior:
\begin{equation}
\label{eq:conditional_score}
\nabla \log p_\tau(\bz) = \mathbb{E}_{\bx|\bz} \left[ -\frac{\bz - E(\bx)}{\tau} \right].
\end{equation}

For an arbitrary expansion point $\bx_0 \in \mathcal{M}$, let $E_{\mathrm{lin}}^{\bx_0}(\bx) := E(\bx_0) + J_E(\bx_0)(\pi_{T_{\bx_0}}(\bx) - \bx_0)$ denote the local affine approximation onto the tangent space $T_{\bx_0}\mathcal{M}$. We define the surrogate local density $q_\tau^{\bx_0}(\bz)$ induced by this linear mapping. By the contraction property of the conditional expectation, the pointwise squared difference between the true score and the surrogate affine score is bounded by the expected geometric displacement of the mappings:
\begin{equation}
\label{eq:score_displacement_bound}
\|\nabla \log p_\tau(\bz) - \nabla \log q_\tau^{\bx_0}(\bz)\|^2 \le \frac{1}{\tau^2} \mathbb{E}_{\bx|\bz} \left[ \|E(\bx) - E_{\mathrm{lin}}^{\bx_0}(\bx)\|^2 \right].
\end{equation}

To establish the absolute FIR deviation, we integrate this pointwise bound over the joint distribution of the data and the noise, conditioning the expansion point $\bx_0$ such that it moves with the data. Applying the smoothing properties of the Gaussian heat kernel $\mathcal{K}_\tau(\bx | \bx_0)$, the absolute deviation in the FIR is strictly upper-bounded by:
\begin{equation}
\label{eq:FIR_absolute_bound}
|\mathcal{R}^{(d)}(\mu_{Z, \tau}) - \mathbb{E}_{\bx_0 \sim \mu}[\mathcal{R}^{(d)}(\nu_\tau^{\bx_0})]| \le \frac{1}{\tau^2} \int_{\mathcal{M}} \int_{\mathcal{M}} \|E(\bx) - E_{\mathrm{lin}}^{\bx_0}(\bx)\|^2 \mathcal{K}_\tau(\bx | \bx_0) \, d\mu(\bx) \, d\mu(\bx_0).
\end{equation}

This expected geometric displacement seamlessly decomposes into two distinct geometric penalties via the triangle inequality: the intrinsic curvature of the manifold, and the extrinsic nonlinearity of the encoder. 

Regarding the intrinsic curvature, the definition of positive reach guarantees that the orthogonal projection distance from the manifold $\mathcal{M}$ to the tangent space $T_{\bx_0}\mathcal{M}$ is quadratically bounded by the radial distance. Integrating this quadratic bound against the variance $\tau$ of the Gaussian kernel yields an expected squared displacement bounded by $C_{\mathcal{M}}^2 \tau^2 / \reach(\mathcal{M})^2$.

Regarding the extrinsic nonlinearity, the remainder of the displacement is precisely the Taylor residual of the encoder relative to the tangent space. By the localized bound in Equation \eqref{eq:asmpt_eps}, the expected squared Euclidean displacement of this residual is bounded by $\varepsilon^2$. 

Substituting these spatial bounds back into Equation \eqref{eq:FIR_absolute_bound}, we find that the absolute FIR deviation introduced by curvature and nonlinearity is bounded by $\mathcal{O}\left( \frac{\varepsilon^2}{\tau^2} + \frac{C_{\mathcal{M}}^2}{\reach(\mathcal{M})^2} \right)$. Because the Fisher Information Rate acts as a squared metric, taking the square root establishes an absolute magnitude penalty bounded by $\mathcal{O}\left( \frac{\varepsilon}{\tau} + \frac{C_{\mathcal{M}}}{\reach(\mathcal{M})} \right)$.

Finally, we normalize these absolute error bounds to determine the relative deviation $\mathcal{D}_{\mathcal{R}}(\tau)$. Because the local surrogate $\nu^{\bx_0}$ is the pushforward of a flat tangent measure via an affine map, Lemma~\ref{lem:normal_term} governs its baseline, assuring that $\mathcal{R}^{(d)}(\nu_\tau^{\bx_0}) \ge c/\tau^2$. Dividing our absolute error bounds by this $\mathcal{O}(1/\tau^2)$ baseline properly scales the penalties by a factor of $\tau$: the nonlinearity penalty simplifies to $\varepsilon / \sqrt{\tau}$, and the curvature penalty simplifies to $C_{\mathcal{M}} \sqrt{\tau} / \reach(\mathcal{M})$.

Combining these normalized geometric penalties with the baseline linear tangential distortion $\delta$, previously established in Theorem~\ref{thm:FIR_stability_d_less_D_linear}, yields the final dimension-scaled bounds. \qed

Note that Figure~\ref{fig:FIR_flat_encoder_geometry} demonstrates the encoder-induced distortion implied by this theorem and discussed in the main text.

\begin{figure}[htb!]
\centering
\resizebox{0.8\columnwidth}{!}{
\begin{tikzpicture}

\begin{scope}[yshift=4cm] 

\draw[thick] (-2,0) -- (2,0);
\node at (-2.3,0.9) {$M$};

\draw[dashed] (-0.8,-0.45) rectangle (0.8,0.45);
\node at (0,-0.95) { local neighborhood};

\draw[->,thick] (2.5,0) -- (4,0);
\node at (3.3,0.6) {$E$};

\draw[thick] (4.8,0) .. controls (6,0.35) and (7,-0.35) .. (8.2,0);

\draw[dashed] (6.1,-0.45) rectangle (7.5,0.45);
\node at (6.8,-0.95) { linear approx.\ region};

\node at (3,-1.7) { (a) Ideal encoder: local isometry and low curvature};

\end{scope}

\begin{scope} 

\draw[thick] (-2,0) -- (2,0);
\node at (-2.3,0.9) {$M$};

\draw[dashed] (-0.8,-0.45) rectangle (0.8,0.45);
\node at (0,-0.95) { local neighborhood};

\draw[->,thick] (2.5,0) -- (4,0);
\node at (3.3,0.6) {$E$};

\draw[thick] (4.8,0) -- (9.5,0);

\draw[dashed] (5.7,-0.18) rectangle (8.8,0.18);
\node at (7.4,-0.95) { distorted metric};

\node at (3,-1.7) { (b) Tangential Distortion: local stretching/compression with large $\delta$};

\end{scope}

\begin{scope}[yshift=-4cm] 

\draw[thick] (-2,0) -- (2,0);
\node at (-2.3,0.9) {$M$};

\draw[dashed] (-0.8,-0.45) rectangle (0.8,0.45);
\node at (0,-0.95) { local neighborhood};

\draw[->,thick] (2.5,0) -- (4,0);
\node at (3.3,0.6) {$E$};

\draw[dashed] (5.8,-0.21) rectangle (7.4,1.21);

\def\xL{4.8}
\def\xR{8.2}
\def\A{0.9} 

\draw[thick, domain=\xL:\xR, samples=150]
plot (\x, {\A*(1+cos(deg(3*pi*(\x-(\xL+\xR)/2)/(\xR-\xL))))/2});

\node at (6.8,-0.95) { linear approx.\ region};

\node at (3,-1.7) { (c)  Curvature Injection: high-frequency nonlinearity with large $\varepsilon$ and small $\tau$.};

\end{scope}

\end{tikzpicture}
}
\caption{Geometric interpretation of encoder-induced distortions.}
\label{fig:FIR_flat_encoder_geometry}
\end{figure}

\section{Estimation of FI and FIR}
\label{app: FI_estimation}

In this section, we describe how the Fisher Information (FI) and Fisher Information Rate (FIR) are computed in our experiments based on trained diffusion models. We specifically focus on the computational strategies that allow these metrics to scale to high-dimensional neural architectures.

\textbf{Fisher Information (FI).}
The FI can be expressed using the score function $\bs_\tau(\bx) = \nabla_{\bx} \log p_\tau(\bx)$. For a given noise variance $\tau > 0$ and dimension $D$, the FI is calculated as:
\begin{equation}\label{eqn:FI_estimation}
\begin{aligned}
\mathcal I^{(D)}(\mu_\tau)
&= \int_{\mathbb R^D} p_\tau(\bx)\,\bigl\| \nabla_{\bx} \log p_\tau(\bx) \bigr\|^2\,d\bx \\
&= \mathbb E_{\bx\sim p_\tau} \bigl[ \| \bs_\tau(\bx) \|^2 \bigr] \\
&\approx \frac{1}{N} \sum_{n=1}^N \bigl\| \bs_\tau(\bx^{(n)}) \bigr\|^2,
\end{aligned}
\end{equation}
where $\bx^{(n)}$ are samples drawn from the smoothed distribution $p_\tau$. In practice, these are obtained by adding Gaussian noise $\mathcal{N}(\bzero, \tau \bI)$ to clean data samples from the empirical distribution.

\textbf{Fisher Information Rate (FIR).}
The FIR is computed as the dissipation rate of the FI. Using the Hessian representation from Proposition~\ref{prop:FIR_identity}, we have:
\begin{align*}
\mathcal{R}^{(D)}(\mu_\tau)
&= -\frac{d}{d\tau}\,\mathcal I^{(D)}(\mu_\tau) \\
&= \mathbb E_{\bx\sim p_\tau} \Big[ \mathrm{Tr}\Big(\big(\nabla_{\bx} \bs_\tau(\bx)\big)^2\Big) \Big] \\
&\approx \frac{1}{N} \sum_{n=1}^N \mathrm{Tr}\Big(\big(\nabla_{\bx} \bs_\tau(\bx^{(n)})\big)^2\Big).
\end{align*}

For high-dimensional neural networks, explicitly forming the $D \times D$ score Jacobian $\nabla_{\bx} \bs_\tau$ is computationally intractable, requiring $\mathcal{O}(D^2)$ memory and $\mathcal{O}(D^3)$ operations for the matrix square. We circumvent this by employing the \textbf{Hutchinson trace estimator} \cite{hutchinson1989stochastic}. Crucially, because the score is a conservative vector field, its Jacobian is a symmetric Hessian matrix ($\nabla_{\bx} \bs_\tau = \nabla_{\bx}^2 \log p_\tau$). Therefore, the trace of its square equates to its squared Frobenius norm, which can be written as an expectation over probe vectors $\bv$:
\begin{align*}
\mathrm{Tr}\!\big((\nabla_{\bx} \bs_\tau(\bx))^2\big)
&= \mathbb E_{\bv}\!\left[ \big\| \nabla_{\bx} \bs_\tau(\bx)\, \bv \big\|^2 \right] \\
&\approx \frac{1}{M} \sum_{m=1}^M \big\| \nabla_{\bx} \bs_\tau(\bx)\, \bv^{(m)} \big\|^2,
\end{align*}
where the probe vectors $\bv^{(m)}$ are drawn independently from a Rademacher distribution (entries in $\{-1, 1\}$). This leads to the empirical estimator:
\begin{equation}\label{eqn:FIR_estimation}
\mathcal{R}^{(D)}(\mu_\tau)
\approx \frac{1}{NM} \sum_{n=1}^N \sum_{m=1}^M \big\| \nabla_{\bx} \bs_\tau(\bx^{(n)})\, \bv^{(m)} \big\|^2.
\end{equation}

\textbf{Implementation via Directional Derivatives.}
The term $\nabla_{\bx} \bs_\tau(\bx) \bv$ is a \textbf{Jacobian-vector product (JVP)}, representing the directional derivative of the score function. In modern auto-differentiation frameworks (e.g., \path{torch.autograd.functional.jvp} in PyTorch), this is evaluated in $\mathcal{O}(D)$ time—comparable to a single forward pass—without ever explicitly computing or storing the full Jacobian matrix.

Alternatively, to avoid second-order automatic differentiation entirely or further reduce memory overhead, the term can be approximated via a first-order finite difference. For a small step size $\varepsilon > 0$:
\[
\nabla_{\bx} \bs_\tau(\bx)\,\bv \approx \frac{\bs_\tau(\bx + \varepsilon \bv) - \bs_\tau(\bx)}{\varepsilon}.
\]
This yields the gradient-free estimator:
\begin{equation}
\mathcal{R}^{(D)}(\mu_\tau) \approx \frac{1}{NM} \sum_{n=1}^N \sum_{m=1}^M \left\| \frac{\bs_\tau(\bx^{(n)} + \varepsilon \bv^{(m)}) - \bs_\tau(\bx^{(n)})}{\varepsilon} \right\|^2.
\label{eq:grad_free_FIR}
\end{equation}
In our experiments, the JVP approach was found to be highly accurate and numerically stable, providing a scalable means to monitor the geometric complexity penalty during the diffusion process.

\textbf{Score Function Parameterization in Experiments.}
In practice, the score function is obtained directly from the trained diffusion models, but the exact extraction depends on the model's training objective.

For the toy experiments described in \S\ref{sec:gaussian_toy} and \S\ref{sec:gaussian_deviation}, we use a noise-prediction diffusion model. The empirical score is approximated using the model's predicted noise $\boldsymbol{\epsilon}_{\theta}(\bx, \tau)$:
\[
\bs_\tau(\bx) = \nabla_{\bx}\log p_\tau(\bx) \approx -\frac{1}{\sqrt{\tau}}\, \boldsymbol{\epsilon}_{\theta}(\bx, \tau).
\]

For the experiments described in \S\ref{sec:exp_VAE} and \S\ref{sec:exp_NVAE}, we adopt the Elucidating Diffusion Models (EDM) framework \cite{karras2022elucidating}. Following EDM standard practices, we employ a denoiser parameterization $D_\theta(\bx, \tau)$ alongside their proposed preconditioning and noise scaling schedules. Unless otherwise specified, all core hyperparameters (e.g., network capacity, learning rate, and weighting schedules) are strictly borrowed from the default EDM configurations. Under this framework, the score is approximated directly via the denoiser:
\[
\bs_\tau(\bx) = \nabla_{\bx} \log p_\tau(\bx) \approx -\frac{\bx - D_\theta(\bx,\tau)}{\tau}.
\]

In all experiments, FI and FIR are computed using Equation~\eqref{eqn:FI_estimation} and Equation~\eqref{eqn:FIR_estimation}, with gradients evaluated via automatic differentiation. For diffusion models using the EDM framework and DiT architecture trained on FFHQ images (\S\ref{sec:exp_VAE} and \S\ref{sec:exp_NVAE}) and on latent representations, we use $N=200$ samples and $M=20$ Hutchinson probe vectors due to the larger model size. For all other experiments, we use $N=1000$ samples and $M=50$ Hutchinson probe vectors.

\section{Experimental Details}
\label{app:exp_detail}

In this section, we provide the model architectures, training configurations, and sampling parameters used in our experiments. 
Each subsection corresponds to the experimental setting described in the main text.

\subsection{FI and FIR Comparison in the Gaussian Toy Setting}

For the experiment in \S\ref{sec:gaussian_toy}, we first describe the architecture of the tiny diffusion model used in this setting. 
The denoising network in the tiny diffusion model is implemented as an MLP (multilayer perceptron). We use a hidden dimension of \(128\) with \(3\) hidden layers, and apply sinusoidal embeddings of dimension \(128\) to both the input data and the diffusion time.

For all experiments, we use \(50{,}000\) training samples with batch size \(256\) and train for \(200\) epochs. The diffusion process uses \(100\) timesteps with a linear noise \(\beta\)-schedule. We use the AdamW optimizer with a learning rate of \(5 \times 10^{-4}\).

\subsection{FIR Deviation in the Gaussian Toy Setting}

This subsection provides the experimental details for the study in 
\S\ref{sec:gaussian_deviation}. The model and training configuration follows 
the Gaussian toy experiment in \S\ref{sec:gaussian_toy}. 

The only modification concerns the input embedding used in the tiny diffusion model. 
In \S\ref{sec:gaussian_toy}, sinusoidal embeddings were applied to the input 
coordinates. Since the experiments in this section consider data whose input 
dimension may exceed two, we instead use the identity embedding for all tiny 
diffusion models.

\subsection{FI and FIR Comparison on FFHQ with VAE and KL-AE Encoding}

This subsection provides the experimental details for the study described in 
\S\ref{sec:exp_VAE} on FFHQ data\footnote{dataset license: CC BY-NC-SA 4.0. Individual Flickr images are released under CC BY 2.0, CC BY-NC 2.0, Public Domain Mark 1.0, CC0 1.0, or U.S. Government Works licenses.}.

\textbf{VAE and KL-AE}
The traditional VAE employs a hybrid encoder--decoder architecture, combining convolutional layers with fully connected layers for latent-space projection and feature reconstruction.
We adopt the architecture configuration provided for the CelebA dataset at 
\(64 \times 64\) resolution in the implementation\footnote{\url{https://github.com/wonjunee/GPE_codes}; license: custom research-use license.}
The latent dimension is set to $256$.

The VAE models is trained on FFHQ images at \(64 \times 64\) resolution using a batch size of $100$ and learning rate \(1\times10^{-4}\). 
We employ a $\beta$-annealing strategy, gradually increasing the KL-divergence weight $\beta$ during training. 
Specifically, we begin with a warm-up phase using $\beta \in \{0, 10^{-3}, 10^{-2}\}$, training each stage for $5$ hours, followed by training with $\beta=0.1$ for $20$ hours on one GPU.

The KL-AE uses a convolutional KL-autoencoder architecture. 
We use the pretrained KL autoencoder with downsampling factor \(f=4\) from the official latent-diffusion implementation\footnote{\url{https://github.com/CompVis/latent-diffusion/tree/main}; license: MIT.}, trained on \(256\times256\) OpenImages data~\cite{kuznetsova2020open} as a general-purpose image autoencoder. 
When applied to the \(64\times64\) FFHQ images used in our experiments, this downsampling factor yields latent representations of dimension \(3\times16\times16\).

\textbf{Diffusion models}

We employ the convolutional U-Net architecture within the EDM framework provided in the official implementation\footnote{\url{https://github.com/NVlabs/edm}; license: CC BY-NC-SA 4.0.}.
For the pixel-space experiment, where the diffusion model is trained directly on FFHQ images at \(64\times64\) resolution, we use the pretrained checkpoint provided in the official EDM repository. 

For the latent-space experiments, diffusion models are trained on latent representations produced by the VAE and KL-AE. Before training, the latents are scaled by their empirical standard deviation, following the setup used in latent diffusion models~\cite{rombach2022high}. We use the same U-Net architecture and a similar training configuration as for the pixel-based model, with residual block configuration $(1,2,2,2)$ across the resolution levels.

For training, we use a batch size of $256$, dropout rate $0.05$, and a learning-rate schedule consisting of a $10{,}000$ kimg warmup, where one kimg denotes one thousand training images, followed by a constant learning rate of \(2\times10^{-4}\). 
Optimization is performed using Adam with $\beta_1=0.9$, $\beta_2=0.999$, and $\epsilon=10^{-8}$.
Models are trained for $200{,}000$ kimg. 
The pretrained pixel-space model uses data augmentation with probability $0.15$, while no data augmentation is applied when training diffusion models on the latent representations.

For sampling, we follow the standard EDM procedure with $40$ steps, $\sigma_{\min}=0.002$, and $\sigma_{\max}=80$. 
We set $S_{\mathrm{churn}}=0$ for the pixel-space and VAE latent models, and $S_{\mathrm{churn}}=40$ for the KL-AE latent model.

\subsection{FI and FIR Comparison on FFHQ Data with NVAE Encoding}

This subsection provides the experimental details for the study in 
\S\ref{sec:exp_NVAE}.

\textbf{NVAE}

To obtain latent representations of the FFHQ dataset, we use a publicly available pretrained NVAE model trained on FFHQ.\footnote{We use the publicly available pretrained NVAE model: \url{https://github.com/NVlabs/NVAE}; license: NVIDIA Source Code License.}
NVAE is a hierarchical variational autoencoder whose encoder produces multiple latent tensors at different spatial resolutions.

To maintain consistency with the pixel-space experiments, we use \(64 \times 64\) resolution images as input to the encoder. Although the pretrained NVAE model was originally trained on \(256 \times 256\) images, the encoder is fully convolutional and can therefore process inputs of different spatial resolutions without modifying the architecture.

\textbf{Diffusion models}

For the pixel-space experiments, we use the same diffusion model trained on FFHQ data as described in \S\ref{sec:exp_VAE}. 

For the latent-space experiments, the NVAE encoder produces latent tensors of size \(20 \times d_{\bz} \times d_{\bz}\) at spatial resolutions \(d_{\bz} \in \{32,16,8,4,2\}\), where \(20\) denotes the channel dimension. Since multiple latent tensors exist at the same resolution level, we select a single representative latent for each \(d_{\bz}\), choosing the one closest to the input space in the encoder hierarchy. Diffusion models are then trained separately on the latent representations corresponding to each \(d_{\bz}\).
We omit the case \(d_{\bz}=32\) from latent diffusion training, since the resulting latent dimensionality, \(20 \times 32 \times 32\), exceeds that of the input images and therefore does not provide a compressed representation.

The diffusion architecture and training details largely follow those described in \S\ref{sec:exp_VAE}, except for small latent resolutions where we slightly simplify the U-Net: for \(d_{\bz}=4\) we use residual block configuration \((1,2,2)\), and for \(d_{\bz}=2\) we use \((1,2)\), due to the limited spatial resolution of the inputs.

\subsection{Computing Resources}
All experiments were run on NVIDIA A100-SXM4-80GB GPUs. Each GPU has 80GB of memory.

\section{Gaussian Toy Experiment: Derivations and Additional Results}
\label{app:plot_gaussian_toy}

\subsection{Derivation Relevant for the Experiment in \S\ref{sec:gaussian_toy}}

In this section, we provide derivations for the experiment in \S\ref{sec:gaussian_toy}. 
We consider samples $\bx \sim \mathcal{N}(\bzero,\bI_2)$ and construct encoded representations $\bz = E(\bx)$, where the encoder $E$ is applied pointwise. 
For $E$, we consider functions given by common activation functions, including ReLU, Leaky ReLU (with negative slope $\alpha$), GELU, sigmoid, and tanh.

We derive the Fisher information bounds in Proposition~\ref{prop:fisher_bi_lip} and analyze the curvature of the encoder used in \S\ref{sec:FIR}, relating to the discussion there on encoder curvature injection into the FIR.

\textbf{ReLU encoder}
Consider the pointwise ReLU encoder
\[
E(\bx)=\mathrm{ReLU}(\bx)=(\max\{x_1,0\},\max\{x_2,0\}).
\]

We derive the FI bound in Proposition~\ref{prop:fisher_bi_lip}. 
Consider the pointwise ReLU encoder. Let $\bx=(x_1,x_2)$ and
\[
E(\bx)=\mathrm{ReLU}(\bx)=(\max\{x_1,0\},\max\{x_2,0\}).
\]
Its Jacobian is
\[
J_E(\bx)=
\begin{pmatrix}
\mathbf{1}_{x_1>0} & 0\\
0 & \mathbf{1}_{x_2>0}
\end{pmatrix},
\]
away from the nondifferentiable set $\{x_1=0\}\cup\{x_2=0\}$. For any tangent vector $\bv=(v_1,v_2)$,
\[
\|J_E(\bx)\bv\|^2
=
\mathbf{1}_{x_1>0}v_1^2+\mathbf{1}_{x_2>0}v_2^2
\le \|\bv\|^2,
\]
so the upper Lipschitz constant is $C=1$ and lower constant is $c=0$. 

Substituting these constants into Proposition~\ref{prop:fisher_bi_lip} yields the bound
\[
\mathcal{I}^{(D)} \le \mathcal{I}^{(d)} \le \infty.
\]

Moreover, for this ReLU encoder $E(\bx)=(E_1(\bx),E_2(\bx))$, the Hessians satisfy
\[
\nabla_{\bx}^2 E_1(\bx)
=
\begin{pmatrix}
\delta(x_1) & 0\\
0 & 0
\end{pmatrix},
\qquad
\nabla_{\bx}^2 E_2(\bx)
=
\begin{pmatrix}
0 & 0\\
0 & \delta(x_2)
\end{pmatrix},
\]
where $\delta(\cdot)$ denotes the Dirac delta distribution.

Thus the encoder curvature vanishes for $x_1 \neq 0$ and $x_2 \neq 0$, since ReLU is piecewise linear. At $x_1=0$ or $x_2=0$, the second derivatives become singular through the Dirac delta terms, introducing concentrated curvature.

\textbf{Leaky ReLU encoder}
Next we consider the pointwise Leaky ReLU encoder with negative slope $\alpha>0$,
\[
E(\bx)=\mathrm{LeakyReLU}_\alpha(\bx)=(\phi_\alpha(x_1),\phi_\alpha(x_2)),
\quad
\phi_\alpha(x)=
\begin{cases}
x, & x>0,\\
\alpha x, & x\le0 .
\end{cases}
\]

Its Jacobian is
\[
J_E(\bx)=
\begin{pmatrix}
\phi_\alpha'(x_1) & 0\\
0 & \phi_\alpha'(x_2)
\end{pmatrix},
\qquad
\phi_\alpha'(x)\in\{\alpha,1\}.
\]

Hence for any tangent vector $\bv$,
\[
\|J_E(\bx)\bv\|^2
=
\phi_\alpha'(x_1)^2 v_1^2+\phi_\alpha'(x_2)^2 v_2^2 .
\]

If $0<\alpha\le 1$, then $\alpha^2\|\bv\|^2 \le \|J_E(\bx)\bv\|^2 \le \|\bv\|^2$, so $c=\alpha$ and $C=1$.

If $\alpha>1$, then $\|\bv\|^2 \le \|J_E(\bx)\bv\|^2 \le \alpha^2\|\bv\|^2$, so $c=1$ and $C=\alpha$.

Substituting into Proposition~\ref{prop:fisher_bi_lip} yields
\[
\begin{cases}
\mathcal I^{(D)} \le \mathcal I^{(d)} \le \dfrac{1}{\alpha^2}\mathcal I^{(D)}, & 0<\alpha\le 1,\\[6pt]
\dfrac{1}{\alpha^2}\mathcal I^{(D)} \le \mathcal I^{(d)} \le \mathcal I^{(D)}, & \alpha>1 .
\end{cases}
\]

For the Leaky ReLU encoder $E(\bx)=(E_1(\bx),E_2(\bx))$ with negative slope $\alpha>0$, the Hessians satisfy
\[
\nabla_{\bx}^2 E_1(\bx)
=
\begin{pmatrix}
(1-\alpha)\delta(x_1) & 0\\
0 & 0
\end{pmatrix},
\qquad
\nabla_{\bx}^2 E_2(\bx)
=
\begin{pmatrix}
0 & 0\\
0 & (1-\alpha)\delta(x_2)
\end{pmatrix},
\]
where $\delta(\cdot)$ denotes the Dirac delta distribution.

Thus the encoder curvature vanishes for $x_1 \neq 0$ and $x_2 \neq 0$, since the map is piecewise linear. At $x_1=0$ or $x_2=0$, the second derivatives become singular through the Dirac delta terms, with magnitude proportional to $1-\alpha$.

\subsection{Derivation Relevant for the Experiment in \S\ref{sec:gaussian_deviation}}

In this section, we provide derivations for the experiment in \S\ref{sec:gaussian_deviation}. 
We consider data $\by \sim \mathcal{N}(\bzero,\bI_2)$ on a 2-dimensional intrinsic manifold, embedded in $\mathbb{R}^D$ as $\bx=(y_1,y_2,0,\ldots,0)$, with latent representations $\bz=E(\bx)\in\mathbb{R}^d$.

\textbf{Linear Encoder}
We derive the deviation parameter \(\delta\) in Theorem~\ref{thm:FIR_stability_d_less_D_linear} for the linear encoder used in our experiment. Take \(D=d=m=2\) and consider the encoder
\[
\bz=\bA\bx,
\qquad
\bA=\mathrm{diag}\bigl(\sqrt{1+\delta_0},\,\sqrt{1-\delta_0}\bigr).
\]
Then
\[
\bA^\top \bA-\bI_2
=
\mathrm{diag}(1+\delta_0,\;1-\delta_0)-\bI_2
=
\mathrm{diag}(\delta_0,\;-\delta_0).
\]
Therefore,
\[
\|\bA^\top \bA-\bI_2\|_2=\delta_0.
\]
Thus, we can take 
\[
\delta=\delta_0.
\]

\textbf{Dimension-Varying Encoder}
We provide derivation of the deviation parameter \(\delta\) in Theorem~\ref{thm:FIR_stability_d_less_D_linear}.
Take \(D=512\) and \(d\ge m=2\). For $\bx = (y_1, y_2, 0, \ldots, 0) \in \mathbb{R}^D$, the encoder maps
\[
\bz = E(\bx) = (y_1, y_2, 0, \ldots, 0) \in \mathbb{R}^d .
\]

Since the encoder acts as the identity on the intrinsic subspace, we may take
\[
\bA = \bI_m .
\]
Therefore
\[
\bA^\top \bA - \bI_m = \mathbf{0},
\qquad
\|\bA^\top \bA - \bI_m\|_2 = 0,
\]
and the deviation parameter in Theorem~\ref{thm:FIR_stability_d_less_D_linear} satisfies
\[
\delta = 0 .
\]

\textbf{Nonlinear Encoder}
Consider $D=d=3$. For $\bx = (y_1,y_2,0)$, consider the encoder
\[
E((x_1,x_2,0),\varepsilon_0)
=
\left(
\frac{\sin(\varepsilon_0 x_1)}{\varepsilon_0},
\;
x_2,
\;
\frac{1-\cos(\varepsilon_0 x_1)}{\varepsilon_0}
\right),
\qquad \varepsilon_0>0.
\]

We first derive the linear deviation parameter $\delta$ and the dimension penalty term in \Cref{thm:FIR_stability_nonlinear}.
Since $D=d$, the dimension-penalty term in \Cref{thm:FIR_stability_nonlinear} satisfies
\[
\frac{D-d}{D-m}=0.
\]

Next we derive the linear deviation parameter $\delta$ in \Cref{thm:FIR_stability_nonlinear}. Let $M \subset \mathbb{R}^3$ denote the plane
\[
M = \{(x_1,x_2,0) : x_1,x_2 \in \mathbb{R}\}.
\]
Then the distribution $\mu$ supported on $M$ is the Gaussian measure
\[
d\mu(x_1,x_2)
=
\frac{1}{2\pi} e^{-(x_1^2+x_2^2)/2} \, dx_1 dx_2 .
\]
Let $\bx_0=(a,b,0)$.
The Jacobian of $E$ is
\[
\nabla E(x_0)
=
\begin{pmatrix}
\cos(\varepsilon_0 a) & 0 & 0 \\
0 & 1 & 0 \\
\sin(\varepsilon_0 a) & 0 & 0
\end{pmatrix}.
\]

Then $\bA_{\bx_0}=\nabla E(\bx_0)$ satisfies
\[
\bA_{\bx_0}^\top \bA_{\bx_0}
=
\begin{pmatrix}
\cos^2(\varepsilon_0 a)+\sin^2(\varepsilon_0 a) & 0 \\
0 & 1
\end{pmatrix}
=
\bI_m.
\]

Therefore
\[
\|\bA_{\bx_0}^\top \bA_{\bx_0}-\bI_m\|_2=0,
\]
and the linear deviation parameter in \Cref{thm:FIR_stability_nonlinear} satisfies
\[
\delta=0.
\]

Lastly, we explicitly compute the nonlinear deviation parameter $\varepsilon$ appearing in \Cref{thm:FIR_stability_nonlinear}. 
We compute the quantity in \eqref{eq:asmpt_eps}. 
Define the Taylor remainder
\[
R(\bx,\bx_0)
=
E(\bx) - \big(E(\bx_0) + \nabla E(\bx_0)(\bx-\bx_0)\big).
\]

Since the second coordinate of $E$ is linear, the remainder
only appears in the first and third components.
Let
\[
t = \varepsilon_0 x_1,
\qquad
t_0 = \varepsilon_0 a .
\]
Then
\begin{align*}
    R_1
&=
\frac{\sin t - \sin t_0 - \cos t_0 (t-t_0)}{\varepsilon_0},\\
R_3
&=
\frac{(1-\cos t) - (1-\cos t_0) - \sin t_0 (t-t_0)}{\varepsilon_0}.
\end{align*}
By Taylor's theorem with remainder,
there exist $\xi,\eta$ between $t$ and $t_0$ such that
\[
\sin t
=
\sin t_0 + \cos t_0 (t-t_0)
-
\frac{\sin \xi}{2}(t-t_0)^2,
\]
\[
1-\cos t
=
1-\cos t_0 + \sin t_0 (t-t_0)
+
\frac{\cos \eta}{2}(t-t_0)^2.
\]
Therefore
\[
R_1
=
-\frac{\sin \xi}{2\varepsilon_0}(t-t_0)^2,
\qquad
R_3
=
\frac{\cos \eta}{2\varepsilon_0}(t-t_0)^2.
\]
Using $|\sin|\le 1$ and $|\cos|\le 1$, we obtain
\[
|R_1|
\le
\frac{1}{2\varepsilon_0}(t-t_0)^2,
\qquad
|R_3|
\le
\frac{1}{2\varepsilon_0}(t-t_0)^2.
\]
Since $t-t_0=\varepsilon_0(x_1-a)$,
\[
|R_1|
\le
\frac{\varepsilon_0}{2}(x_1-a)^2,
\qquad
|R_3|
\le
\frac{\varepsilon_0}{2}(x_1-a)^2.
\]
Consequently
\[
\|R(x,x_0)\|^2
=
R_1^2 + R_3^2
\le
\frac{\varepsilon_0^2}{2}(x_1-a)^4 .
\]

Let $X_1 \sim \mathcal{N}(0,1)$.
Then the fourth moment is $\mathbb{E}[X_1^4] = 3.$ A direct computation gives
\[
\mathbb{E}[(X_1-a)^4]
=
3 + 6a^2 + a^4.
\]
Therefore
\[
\int_M
\|E(\bx)-(E(\bx_0)+\nabla E(\bx_0)(\bx-\bx_0))\|^2 d\mu(\bx)
\le
\frac{\varepsilon_0^2}{2}
(3+6a^2+a^4) =: \varepsilon^2.
\]
The right hand side is minimized when $a=0$, thus $\varepsilon = \sqrt{\frac{3}{2}}\;\varepsilon_0 .$

\subsection{Supplementary Plots}

In \S\ref{sec:gaussian_deviation}, Figure~\ref{fig:gaussian_FIR_deviation} presents the FIR deviation $\mathcal{D}_{\mathcal R}$ as functions of the parameters $\delta_0$, $d$, and $\varepsilon_0$ for fixed noise variance $\tau$. Here we provide additional plots that instead show the dependence of $\mathcal{D}_{\mathcal R}$ on $\tau$ for the same experimental settings. These results are shown in Figures~\ref{app:fig:gaussian_FIR_deviation_delta}, \ref{app:fig:gaussian_FIR_deviation_d}, and \ref{app:fig:gaussian_FIR_deviation_eps}.

\begin{figure}[t!bp]
\centering
\begin{subfigure}{0.48\linewidth}
    \centering
    \includegraphics[width=\linewidth]{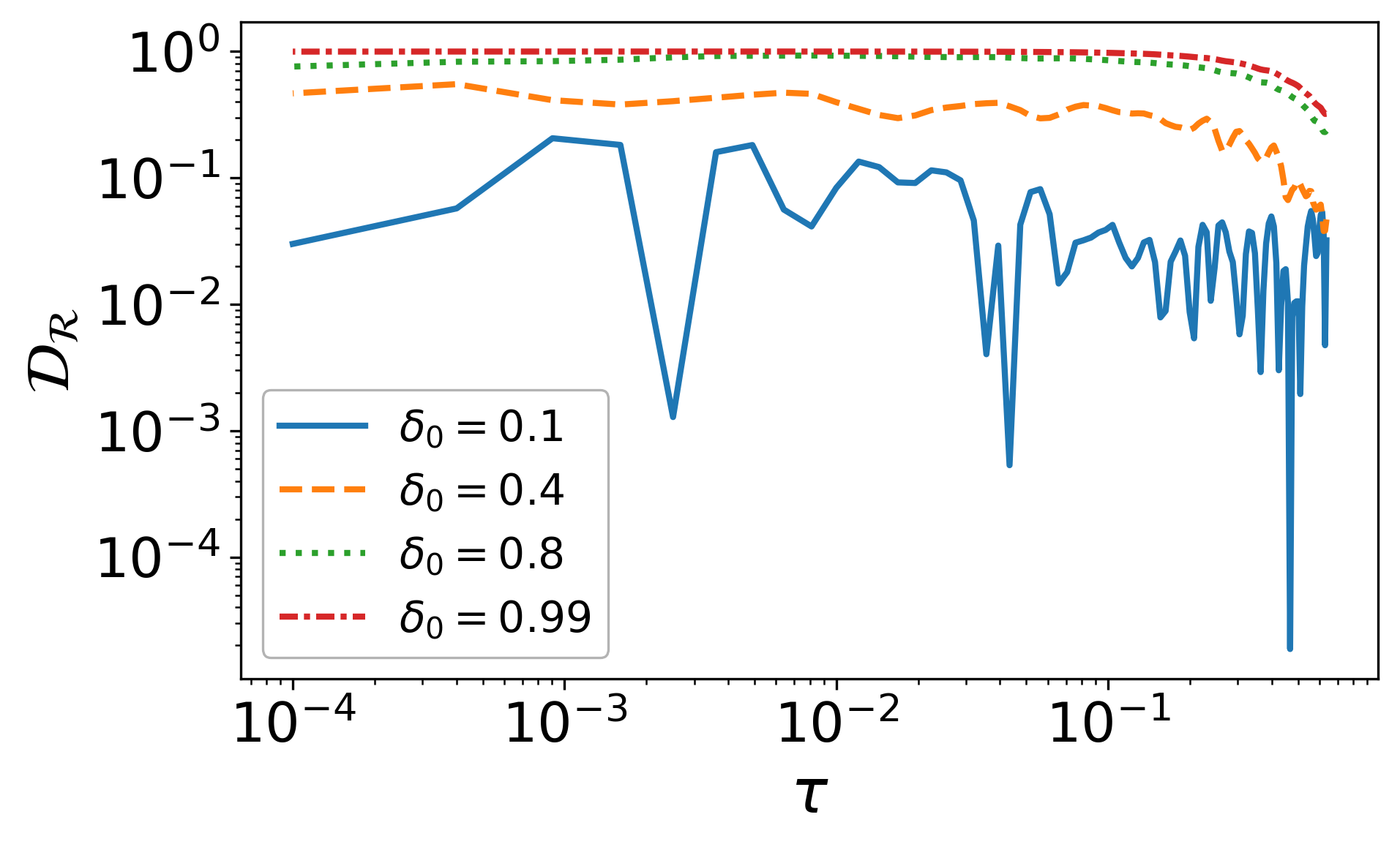}
\caption{}
\end{subfigure}
\begin{subfigure}{0.48\linewidth}
    \centering
    \includegraphics[width=\linewidth]{images/gaussian_FIR_deviation_delta2.png}
\caption{}
\end{subfigure}
\caption{
FIR deviation $\mathcal{D}_{\mathcal R}$ vs.~(a) $\tau$ and (b) $\delta_0$ in toy settings. 
Data $\by\sim\mathcal N(\bzero,\bI_2)$ are embedded as $\bx=(y_1, y_2,0,\ldots,0)\in\mathbb R^D$ and encoded to $\bz=E(\bx)\in\mathbb R^d$. We consider $D=d=2$ and $E(\bx)=\bA\bx$ with $\bA=\mathrm{diag}(\sqrt{1+\delta_0},\sqrt{1-\delta_0})$.
We compute $\mathcal{D}_{\mathcal R}$ from 
\( \mathcal{R}^{(D)}(\mu_\tau)\) and \( \mathcal{R}^{(d)}((\mu_Z)_\tau) \)
 using diffusion models trained on $\bx$ and $\bz$. Solid line $y=1.25\delta_0$ in (b) serves as a linear reference. 
}
\label{app:fig:gaussian_FIR_deviation_delta}
\end{figure}

\begin{figure}[t!bp]
\centering
\begin{subfigure}{0.48\linewidth}
    \centering
    \includegraphics[width=\linewidth]{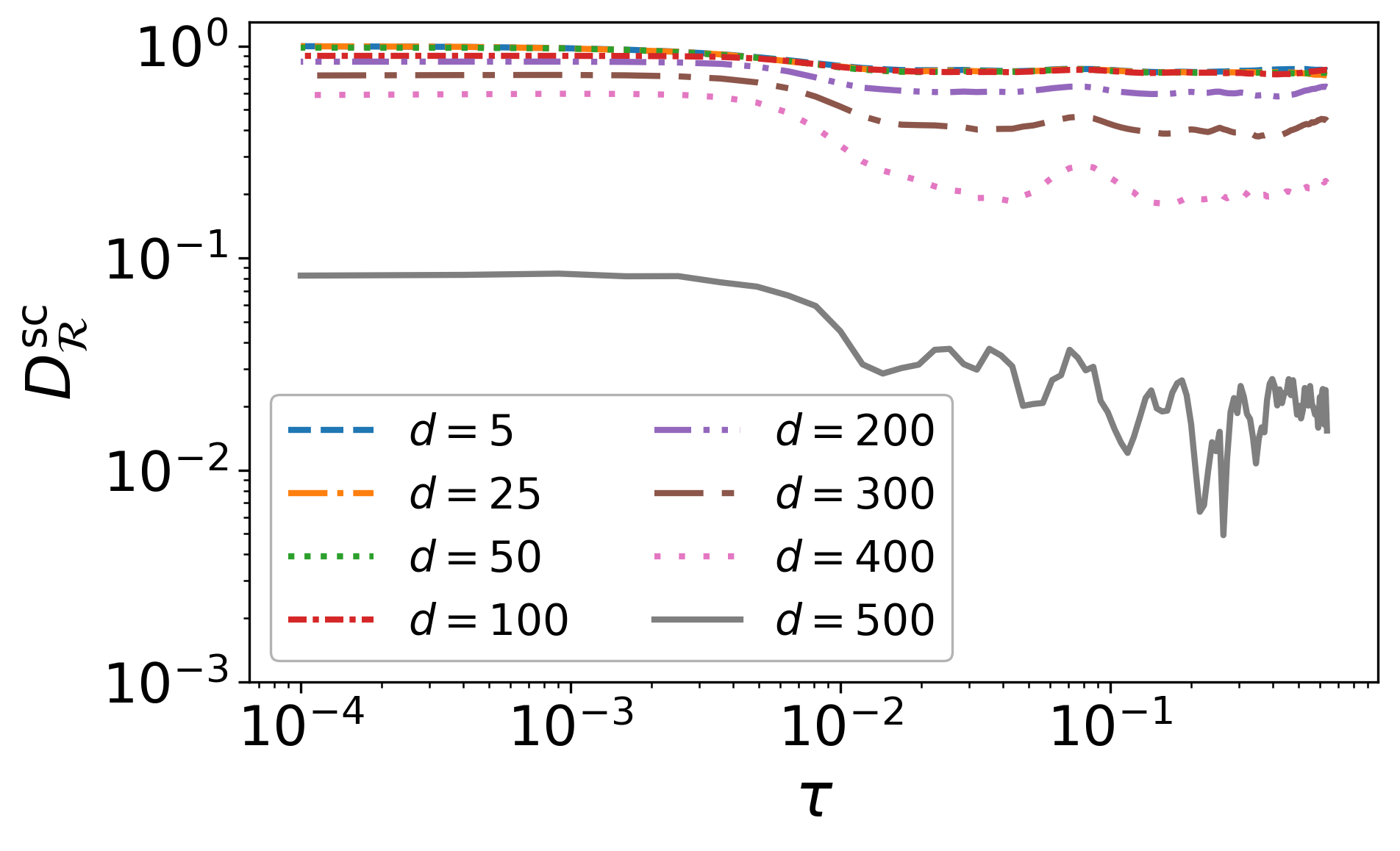}
\caption{}
\end{subfigure}
\begin{subfigure}{0.48\linewidth}
    \centering
    \includegraphics[width=\linewidth]{images/gaussian_FIR_deviation_d2.png}
\caption{}
\end{subfigure}
\caption{
FIR deviation $\mathcal{D}_{\mathcal R}$ vs.~(a) $\tau$ and (b) $d$ in toy settings. 
Data $\by\sim\mathcal N(\bzero,\bI_2)$ are embedded as $\bx=(y_1,y_2,0,\ldots,0)\in\mathbb R^{D}$ with $D=512$, and 
$\bz=(y_1,y_2,0,\ldots,0)\in\mathbb R^d$.
We compute $\mathcal{D}_{\mathcal R}$ from 
\( \mathcal{R}^{(D)}(\mu_\tau)\) and \( \mathcal{R}^{(d)}((\mu_Z)_\tau) \)
 using diffusion models trained on $\bx$ and $\bz$. Solid line $y=\frac{D-d}{D-2}$ in (b) serves as a linear reference. 
}
\label{app:fig:gaussian_FIR_deviation_d}
\end{figure}

\begin{figure}[t!bp]
\centering
\begin{subfigure}{0.48\linewidth}
    \centering
    \includegraphics[width=\linewidth]{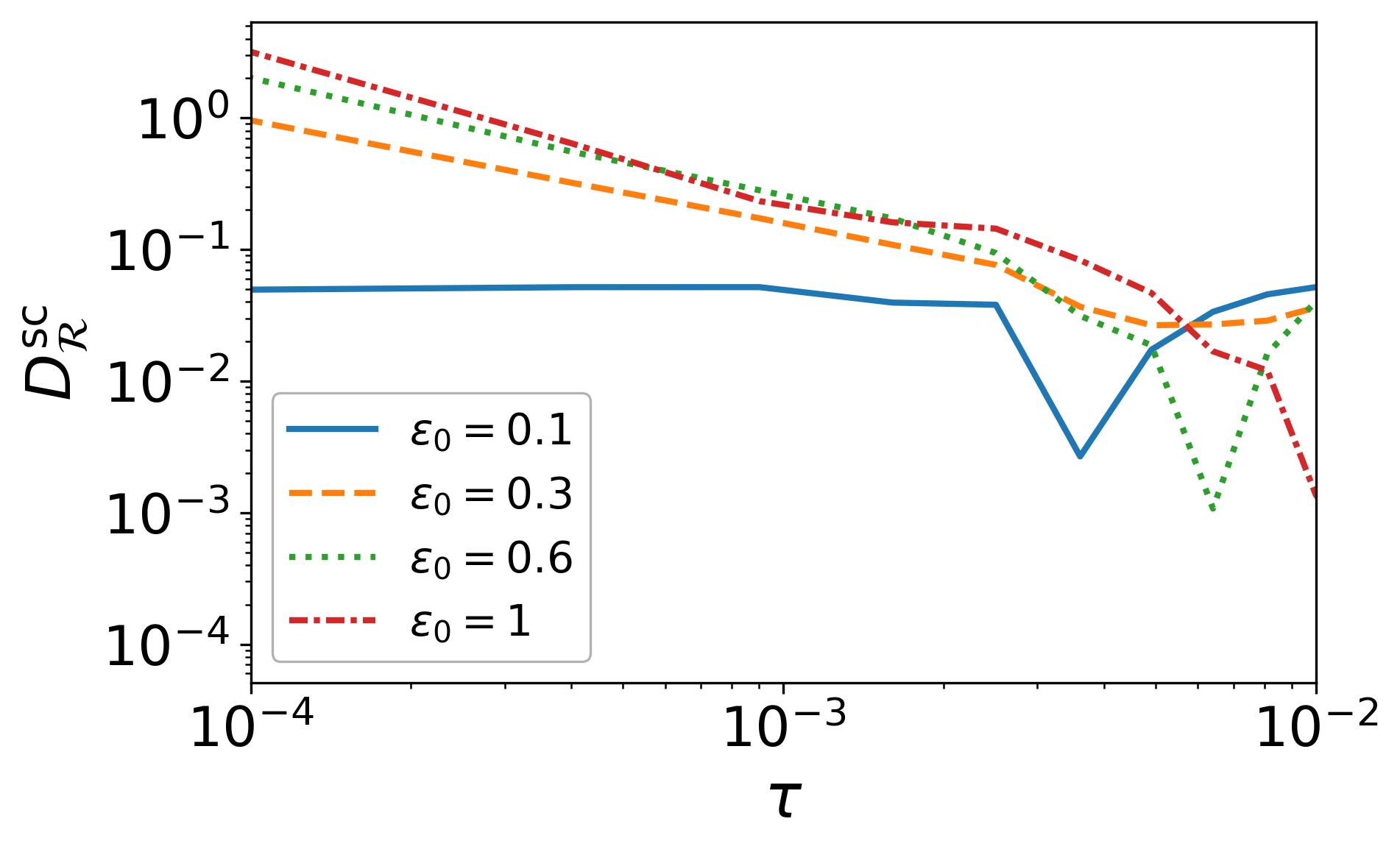}
\caption{}
\end{subfigure}
\begin{subfigure}{0.48\linewidth}
    \centering
    \includegraphics[width=\linewidth]{images/gaussian_FIR_deviation_epsilon2.png}
\caption{}
\end{subfigure}
\caption{
FIR deviation $\mathcal{D}_{\mathcal R}$ vs.~(a) $\tau$ and (b) $\varepsilon_0$ in toy settings. 
Data $\by\sim\mathcal N(\bzero,\bI_2)$ are embedded as $\bx=(y_1, y_2,0,\ldots,0)\in\mathbb R^D$ and encoded to $\bz=E(\bx)\in\mathbb R^d$. We consider $D=d=3$ and $E((x_1,x_2,0),\varepsilon_0)=\bigl(\sin(\varepsilon_0 x_1)/\varepsilon_0,\; x_2,\; (1-\cos(\varepsilon_0 x_1))/\varepsilon_0\bigr)$.
We compute $\mathcal{D}_{\mathcal R}$ from 
\( \mathcal{R}^{(D)}(\mu_\tau)\) and \( \mathcal{R}^{(d)}((\mu_Z)_\tau) \)
 using diffusion models trained on $\bx$ and $\bz$. 
}
\label{app:fig:gaussian_FIR_deviation_eps}
\end{figure}

\section{Additional Results for VAE and KL-AE}

\subsection{Bi-Lipschitz Constant Estimation}
\label{app:lipschitz_c_VAE}

For the VAE and KL-AE models trained on FFHQ used in the experiments of \S\ref{sec:exp_VAE}, we estimate the bi-Lipschitz constants and the corresponding values of $C$ and $c$ used in the FI bound of Proposition~\ref{prop:fisher_bi_lip}.

In Proposition~\ref{prop:fisher_bi_lip}, the constants $c,C>0$ are defined through the Jacobian bounds
\begin{equation*}
c\,\|\bv\| \le \|J_E(\bx)\,\bv\| \le C\,\|\bv\|, 
\quad \forall \bx \in \mathcal{M},\ \forall \bv \in T_{\bx}\mathcal{M}.
\end{equation*}
To estimate these constants in practice, we use the global distance distortion of the encoder:
\begin{equation*}
\hat c \|\bx_1 - \bx_2\| \le \|E(\bx_1) - E(\bx_2)\| \le \hat C \|\bx_1 - \bx_2\|.
\end{equation*}
If $E$ is differentiable, this global bi-Lipschitz bound implies
\(
\hat c \|\bv\| \le \|J_E(\bx)\bv\| \le \hat C\|\bv\|
\)
for all $\bx$ and tangent vectors $\bv$. Thus $(\hat c,\hat C)$ provide valid estimates for $(c,C)$ in Proposition~\ref{prop:fisher_bi_lip}.

To estimate $(\hat c,\hat C)$, we sample $5000$ images $\{\bx_i\}$ from the FFHQ dataset and compute their encoded representations $\{E(\bx_i)\}$. For each pair $(i,j)$, we evaluate the distortion ratio
\begin{equation*}
r_{ij} = \frac{\|E(\bx_i) - E(\bx_j)\|}{\|\bx_i - \bx_j\|}.
\end{equation*}
Using all pairs, we estimate the constants as
\begin{equation*}
\hat c \approx \min_{i<j} r_{ij}, 
\qquad 
\hat C \approx \max_{i<j} r_{ij}.
\end{equation*}
The resulting estimates are reported in Table~\ref{app:tab:VAE_bilipschitz_constants}, together with the ratio $\hat C / \hat c$, which provides an empirical estimate of the bi-Lipschitz constant.

\begin{table}[t]
\centering
\caption{Empirical estimates $(\hat c,\hat C)$ of the constants $(c,C)$ in Proposition~\ref{prop:fisher_bi_lip} for encoders of the VAE and KL-AE trained on FFHQ. The ratio $\hat C/\hat c$ provides an empirical estimate of the bi-Lipschitz constant.}
\label{app:tab:VAE_bilipschitz_constants}
\begin{tabular}{lccc}
\toprule
Model & $\hat c$ & $\hat C$ & $\hat C/ \hat c$ \\
\midrule
VAE & 0.20 & 3.86 & 19.67 \\
KL-AE & 1.60 & 4.83 & 3.02 \\
\bottomrule
\end{tabular}
\end{table}

\subsection{Additional Model Performance Results}
\label{app:model_performance}

This section reports additional quantitative results for the models used in the experiments in \S\ref{sec:exp_VAE}. 
Table~\ref{app:tab:reconstruction_metrics} reports the reconstruction MSE for the VAE and KL-AE on FFHQ, while Table~\ref{app:tab:generation_metrics} summarizes the FID scores of diffusion models trained in pixel space and on the corresponding VAE and KL-AE latent representations.

For the KL-AE, we use a pretrained checkpoint trained on \(256\times256\) OpenImages data. 
When applied to the \(64\times64\) FFHQ images used in our experiments, the decoder tends to sharpen the reconstructions relative to the original downsampled images, leading to a larger reconstruction MSE. 
To account for this autoencoder-induced distribution shift, we also report the generation FID relative to KL-AE reconstructions, which evaluates decoded diffusion samples against the reconstruction distribution rather than the original FFHQ distribution.

The generation results in Table~\ref{app:tab:generation_metrics} are consistent with the FI/FIR analysis in \S\ref{sec:exp_VAE}. 
The traditional VAE latent diffusion model performs poorly, reflecting the substantial mismatch between its latent representation and the pixel-space information geometry. 
For KL-AE, the FID relative to reconstructions is much lower than the FID relative to FFHQ, but remains higher than the pixel-space FID. 
This agrees with our analysis that KL-AE provides a better latent representation than the traditional VAE, while still incurring nonzero compression and encoder-induced distortion.

\begin{table}[htbp]
\centering
\caption{
Reconstruction MSE (mean squared error) for the VAE and KL-AE trained on FFHQ.
}
\setlength{\tabcolsep}{4pt}
\begin{tabular}{lc}
\toprule
\textbf{Model} & \textbf{Reconstruction MSE} \\
\midrule
VAE & 0.0094 \\
KL-AE & 0.41 \\
\bottomrule
\end{tabular}
\label{app:tab:reconstruction_metrics}
\end{table}

\begin{table}[htbp]
\centering
\caption{
Generation quality on FFHQ. 
FID (Fréchet Inception Distance) is computed using two reference distributions: the original FFHQ images and the reconstructions produced by the corresponding autoencoder. 
The first FID column measures generation quality relative to the original image distribution, while the second measures generation quality relative to the reconstruction distribution of the corresponding latent model. 
For the pixel model, images are generated by sampling a diffusion model trained directly on FFHQ. 
For the VAE and KL-AE models, images are generated by sampling diffusion models trained in the latent space and decoding the generated latents using the corresponding decoder. 
All diffusion models use the U-Net architecture within the EDM framework.
}
\setlength{\tabcolsep}{4pt}
\begin{tabular}{lcc}
\toprule
\textbf{Model} & \textbf{FID vs. FFHQ} & \textbf{FID vs. Reconstructions} \\
\midrule
Pixel Diffusion & 2.56 & -- \\
VAE (Latent Diffusion) & 62.51 & -- \\
KL-AE (Latent Diffusion) & 14.97 & 3.58 \\
\bottomrule
\end{tabular}
\label{app:tab:generation_metrics}
\end{table}

\subsection{MMSE Results}
\label{app:dmmse}
Figure~\ref{fig:dMMSE} shows the sensitivity of MMSE to \(\tau\), as defined in Equation~\ref{eq:denoising_resistance}. 
We compare a pixel-space diffusion model trained directly on \(64\times64\) FFHQ images with latent-space diffusion models trained on VAE and KL-AE representations. 
In Equation~\ref{eq:denoising_resistance}, we set \(k=D=3\times64\times64\) for the pixel-space model, corresponding to the image dimension, and \(k=d\) for the latent-space models, with \(d=256\) for VAE and \(d=3\times16\times16\) for KL-AE.

The pixel-space curve has the largest sensitivity at small \(\tau\), indicating that the MMSE changes rapidly in the low-noise regime. 
Both latent curves lie below the pixel curve for small \(\tau\), suggesting that the latent representations are less sensitive to fine-scale noise perturbations than the original pixel space. 
The KL-AE curve follows the decreasing trend of the pixel curve more closely than the VAE. 

\begin{figure}[t!bp]
\centering
\includegraphics[width=0.5\linewidth]{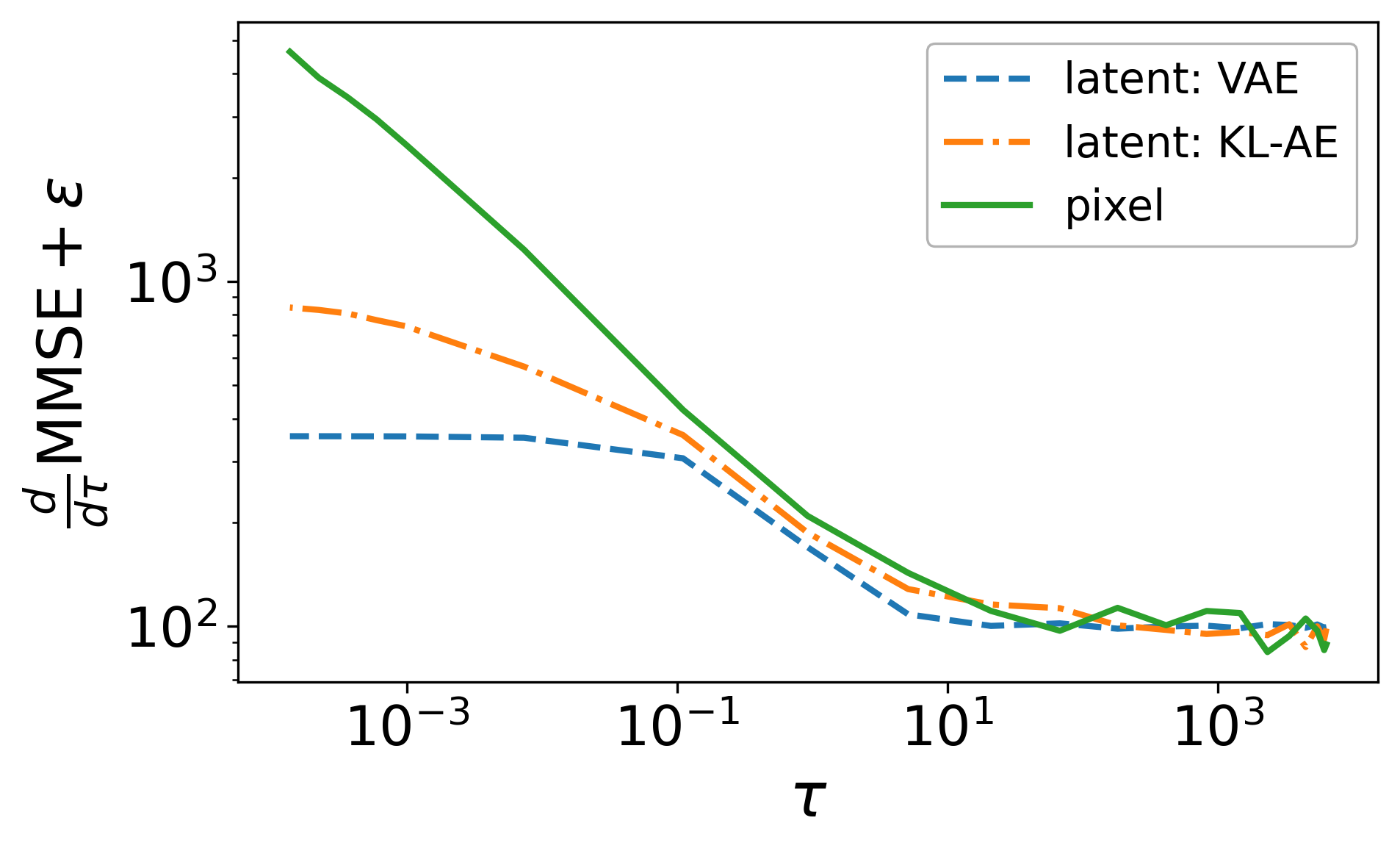}
\caption{
Sensitivity of MMSE to \(\tau\), plotted against the noise variance \(\tau\), computed from diffusion models trained on different data representations. 
The \emph{pixel} curve corresponds to a model trained directly on FFHQ images, while the \emph{latent} curves correspond to models trained on VAE and KL-AE latent representations. 
We show \(\sqrt{\tau}\in[0.01,80]\), excluding smaller noise levels due to numerical instability. 
For visualization on a logarithmic scale, we plot \(\frac{d}{d\tau}\mathrm{MMSE}+\varepsilon\) with \(\varepsilon=100\).}
\label{fig:dMMSE}
\end{figure}

\section{Spectra and Connection to Fisher Geometry}
\label{app:spectra}
In the main text, we introduce Fisher information (FI) and Fisher information rate (FIR) as tools for analyzing the diffusability of autoencoder representations.
In this section, we examine spectral analysis as an alternative perspective for studying diffusability. Corresponding to the experiments in Sections~\ref{sec:exp_VAE} and~\ref{sec:exp_NVAE}, which compare FI and FIR between pixel space and latent space, we analyze the power spectra of FFHQ images and of latent representations produced by the VAE, KL-AE, and NVAE encoders. This analysis provides a complementary view of the signal structure and allows us to contrast the information revealed by spectral behavior with that captured by FI and FIR.

We use the same $64\times64$ resolution FFHQ images and the same VAE, KL-AE, and NVAE models trained on FFHQ data as in the experiments in Sections~\ref{sec:exp_VAE} and~\ref{sec:exp_NVAE}. For FFHQ images, KL-AE latent representations, and NVAE latent representations, which are two-dimensional spatial tensors, we compute the two-dimensional power spectrum by applying the Fast Fourier Transform (FFT) over the spatial dimensions. For multi-channel inputs, the resulting spectra are averaged across channels to obtain a single spectrum per sample.

For the latent representations produced by the VAE encoder, the latent variable for each sample is a one-dimensional vector. To analyze their frequency characteristics, we compute the one-dimensional power spectrum using the discrete Fourier transform (FFT) along the latent dimension. Each latent vector is normalized prior to the transform.

Both the frequency axis $k$ and the power values $P$ are normalized by the signal length (or spatial resolution for the 2D case). 
To obtain a stable estimate, the spectrum is averaged over $10{,}000$ samples drawn from the dataset. 
The zero-frequency (DC) component corresponding to the mean signal level is excluded from the plots.

Figure~\ref{app:fig:latent_spectrum}(a) shows the power spectrum of FFHQ images. 
Figures~(b) and (c) show the spectra of NVAE latent representations at different spatial resolutions, while Figures~(d) and (e) show the spectra of the VAE and KL-AE latent representations. 
Although the spectra in (b), (c), and (e) also decrease with frequency, their shapes differ from the pixel-space spectrum in (a), and the decay behavior varies across resolutions, indicating a mismatch with the spectral structure of the original images.
Unlike the pixel spectrum in (a), the VAE latent spectrum in (d) exhibits strong oscillations and lacks a clear monotonic decay, further deviating from the spectral behavior of the original images.

Overall, the latent spectra do not always align with the spectrum of the original images. This suggests that spectral statistics alone may not be the most reliable diagnostic for assessing diffusability. In contrast, the FI and FIR analysis in Sections~\ref{sec:exp_VAE} and~\ref{sec:exp_NVAE} provides a more consistent characterization of the diffusability of the corresponding representations.
We also note that spectral statistics are not permutation invariant: for example, randomly shuffling the pixels of an image would destroy its spectrum by removing the spatial structure on which the Fourier transform relies. In contrast, FI and FIR depend on the geometry of the probability density through the score norm $\|\nabla_{\bx}\log p(\bx)\|^2$, and are therefore invariant under permutations of coordinates.

\begin{figure}[t]
\centering

\begin{subfigure}{0.48\linewidth}
    \centering
    \includegraphics[width=\linewidth]{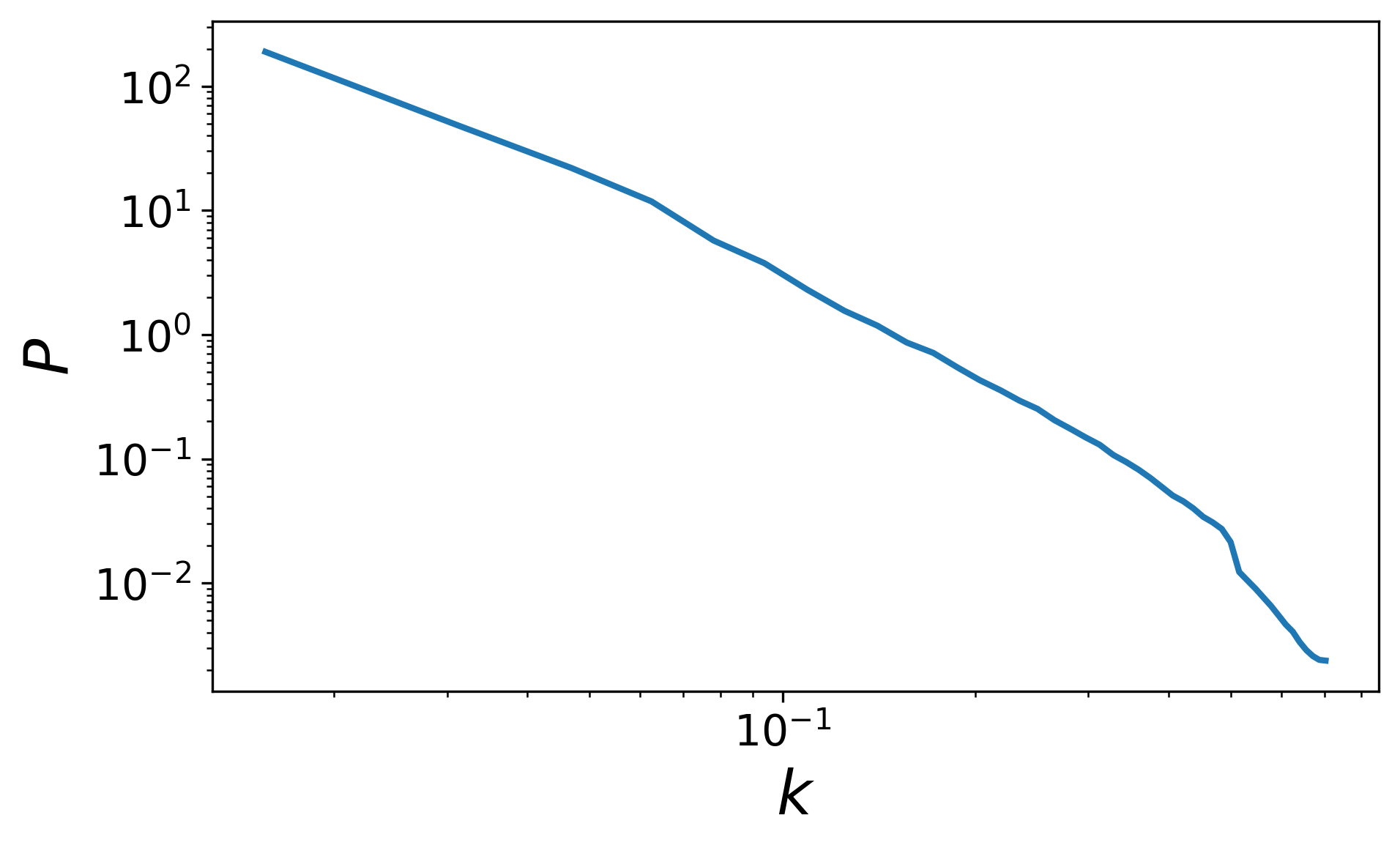}
    \caption{FFHQ Images}
\end{subfigure}
\hfill

\vspace{0.6em}

\begin{subfigure}{0.48\linewidth}
    \centering
    \includegraphics[width=\linewidth]{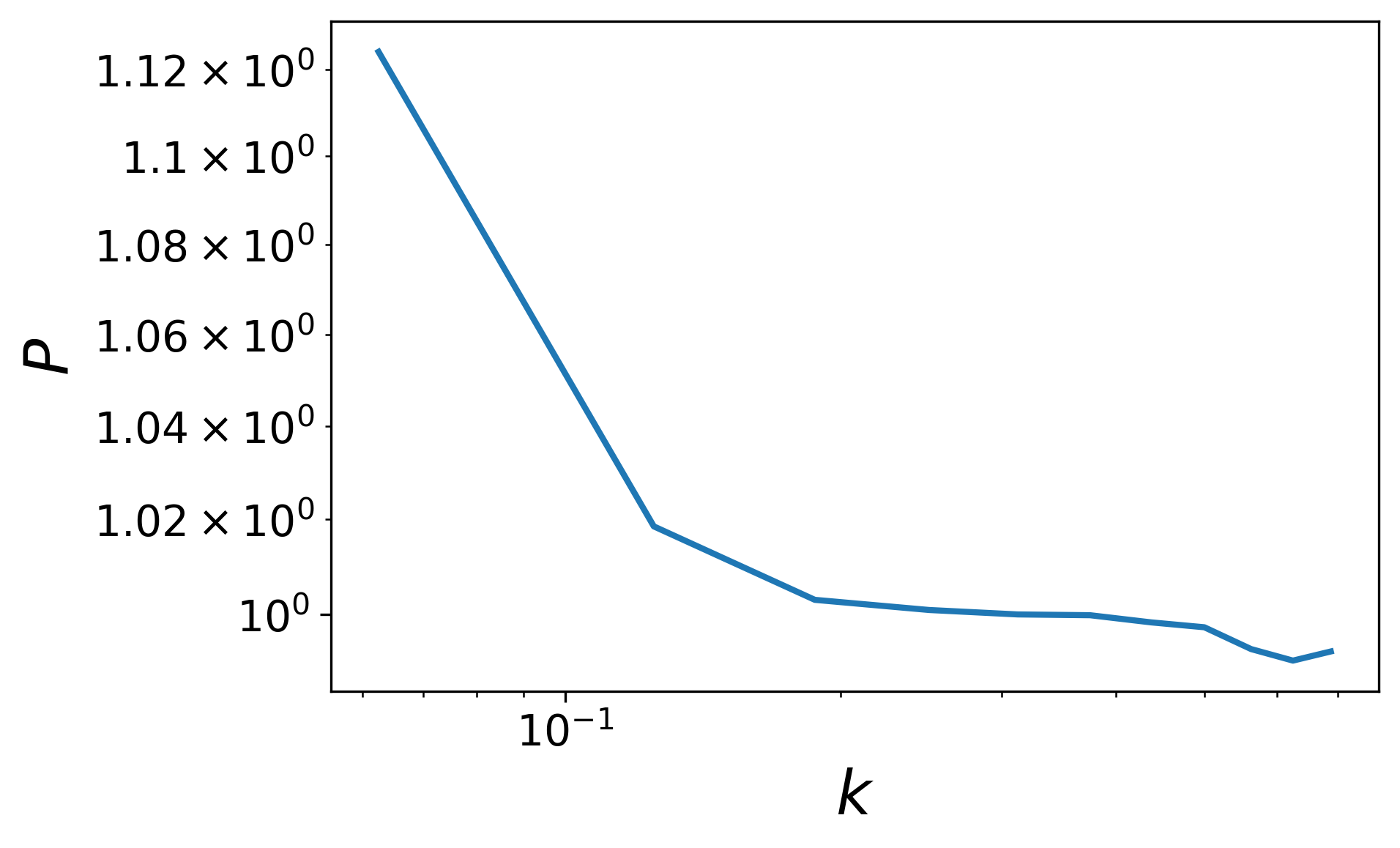}
    \caption{NVAE latents ($d_z=16$)}
\end{subfigure}
\hfill
\begin{subfigure}{0.48\linewidth}
    \centering
    \includegraphics[width=\linewidth]{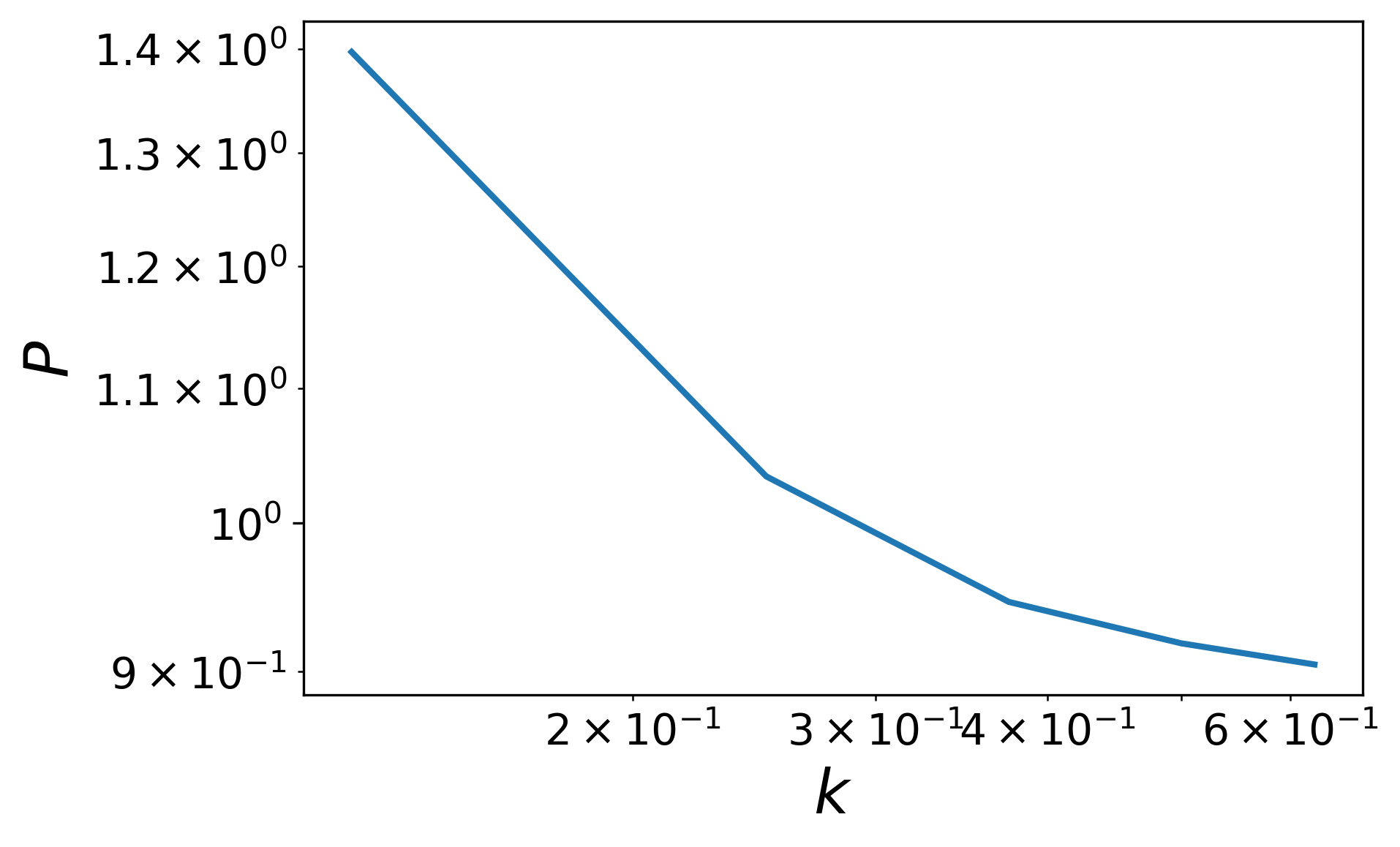}
    \caption{NVAE latents ($d_z=8$)}
\end{subfigure}

\vspace{0.6em}

\begin{subfigure}{0.48\linewidth}
    \centering
    \includegraphics[width=\linewidth]{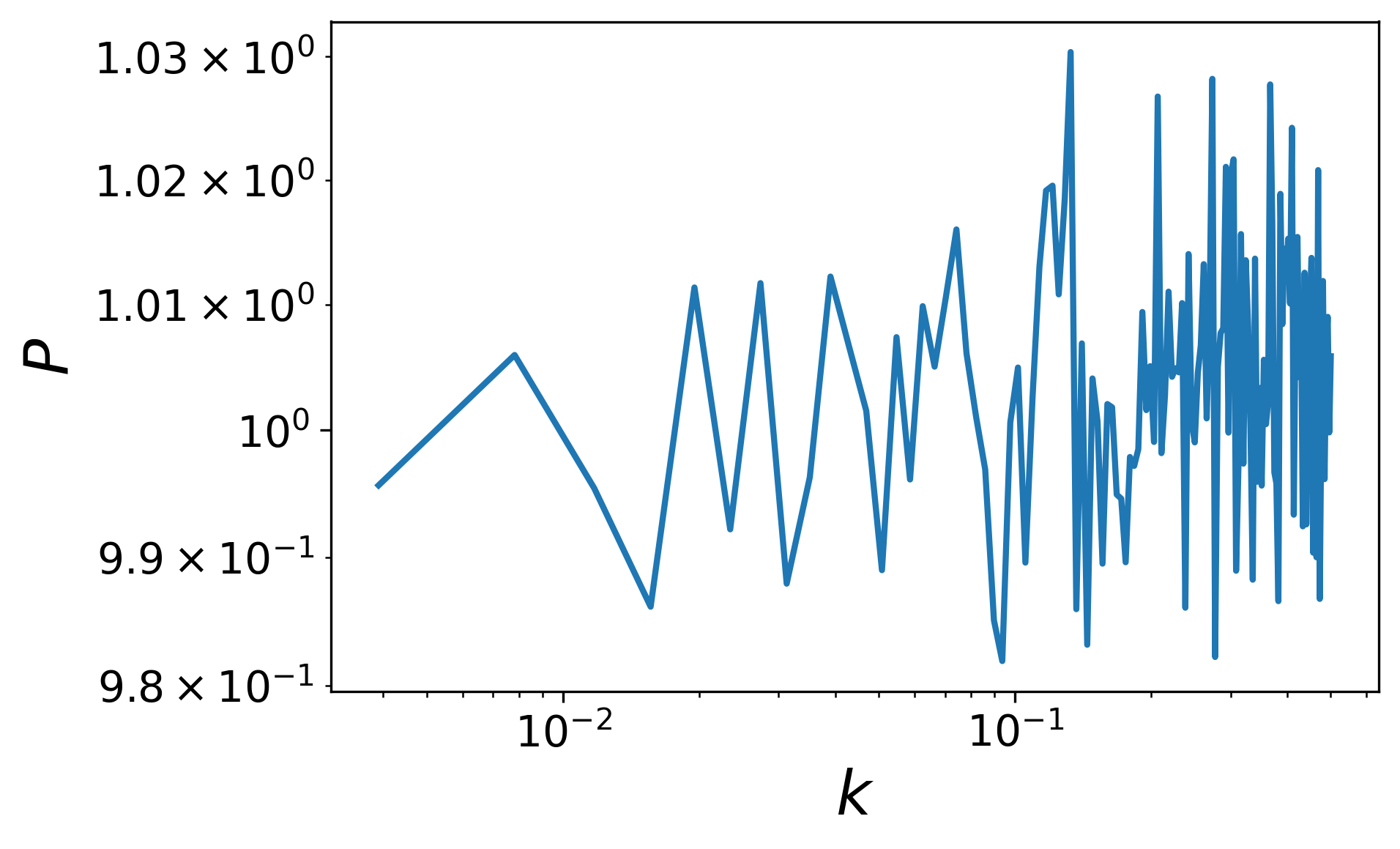}
    \caption{VAE latents}
\end{subfigure}
\hfill
\begin{subfigure}{0.48\linewidth}
    \centering
    \includegraphics[width=\linewidth]{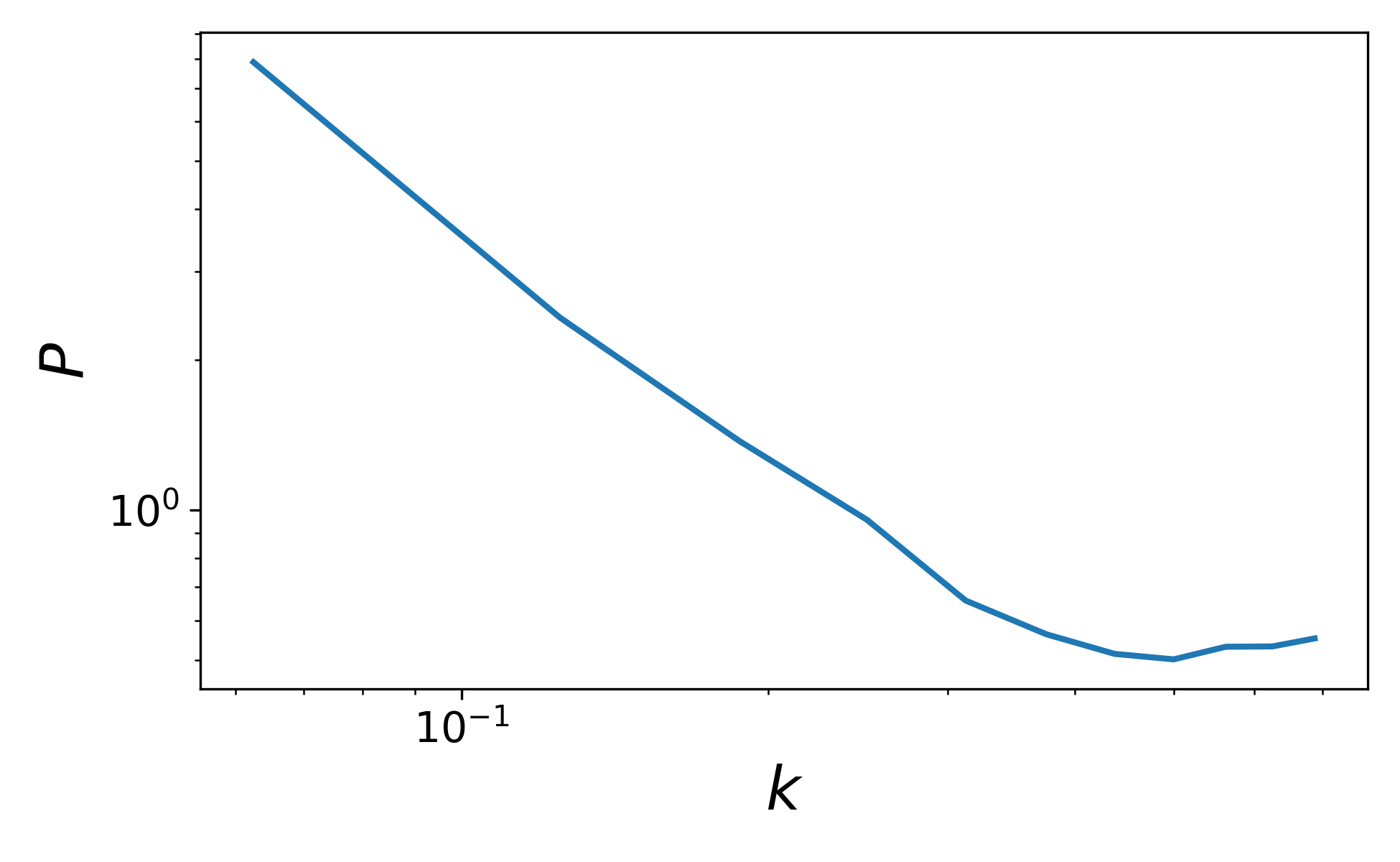}
    \caption{KL-AE latents}
\end{subfigure}

\caption{
(a) Power spectrum of FFHQ images.
(b)(c) Power spectra of NVAE latent representations with spatial resolutions $d_z=16$ and $8$.
(d)(e) Power spectra of VAE and KL-AE latent representations. The NVAE, VAE, and KL-AE models are pretrained on FFHQ images.
For FFHQ images, NVAE latents, and KL-AE latents, the spectrum is computed using the two-dimensional FFT over the spatial dimensions and averaged over channels.
For the one-dimensional latent vectors produced by the VAE encoder, the spectrum is computed using a one-dimensional FFT along the latent dimension.
The spectra are averaged over $10{,}000$ samples drawn from the dataset.
Both the frequency axis $k$ and the power values $P$ are normalized by the signal length (or spatial resolution for the 2D case).
The zero-frequency (DC) component corresponding to the mean signal level is excluded.}
\label{app:fig:latent_spectrum}
\end{figure}

\end{document}